\tikzset{
    -Latex,auto,node distance =1 cm and 1 cm,semithick,
    state/.style ={ellipse, draw, minimum width = 0.7 cm},
    point/.style = {circle, draw, inner sep=0.04cm,fill,node contents={}},
    bidirected/.style={Latex-Latex,dashed},
    el/.style = {inner sep=2pt, align=left, sloped}
}
\def\E{{\bf E}}
\def\R{{\bf R}}
\def\P{{\bf P}}
\def\0{{\bf 0}}
\def\1{{\bf 1}}
\def\OM{{\mathcal O}}
\newcommand{\Var}{\text{Var}}
\newcommand{\Cov}{\text{Cov}}
\newcommand{\Amse}{\text{AMSE}}
\newcommand{\indep}{\perp \!\!\! \perp}
\newcommand{\notindep}{\not\!\perp\!\!\!\perp}
\newcommand{\CardC}{\left|\psi\right|}
\newcommand{\ChoiceSimplex}{\Delta_{\psi}}
\newcommand{\ctrSim}{\text{ctr}\left(\ChoiceSimplex\right)}
\newcommand{\ConvProb}{\overset{p}{\rightarrow}}
\newtheorem{remark}{Remark}
\newtheorem*{remark*}{Remark}
\newtheorem{theorem}{Theorem}
\newtheorem*{theorem*}{Theorem}
\newtheorem{lemma}{Lemma}
\newtheorem*{lemma*}{Lemma}
\newtheorem{definition}{Definition}
\newtheorem{proposition}{Proposition}
\newtheorem*{proposition*}{Proposition}
\newtheorem{assumption}{Assumption}
\newtheorem{property}{Property}
\newtheorem{example}{Example}
\DeclarePairedDelimiter\floor{\lfloor}{\rfloor}
\title{Efficient Online Estimation of Causal Effects \\
by Deciding What to Observe}
\author{Shantanu Gupta, Zachary C. Lipton, David Childers \\ \\
Carnegie Mellon University\\
\small{\texttt{\{\href{mailto:shantang@cmu.edu}{shantang},\href{mailto:zlipton@cmu.edu}{zlipton},\href{mailto:dchilders@cmu.edu}{dchilders}\}@cmu.edu}}
}
\begin{document}

\maketitle

\begin{abstract}
Researchers often face data fusion problems, 
where multiple data sources are available,
each capturing a distinct subset of variables.
While problem formulations typically
take the data as given, in practice,
data acquisition can be an ongoing process. 
In this paper, we aim to estimate 
any functional of a probabilistic model
(e.g., a causal effect)
as efficiently as possible,
by deciding, at each time,
which data source to query.
We propose \emph{online moment selection} (OMS),
a framework in which structural assumptions 
are encoded as moment conditions.
The optimal action at each step
depends, in part, on the very moments
that identify the functional of interest.
Our algorithms balance exploration 
with choosing the best action
as suggested by current estimates of the moments.
We propose two selection strategies:
(1) explore-then-commit (OMS-ETC) 
and (2) explore-then-greedy (OMS-ETG),
proving that both achieve 
zero asymptotic regret
as assessed by MSE. 
We instantiate our setup for 
average treatment effect estimation,
where structural assumptions
are given by a causal graph 
and data sources may include subsets
of mediators, confounders, 
and instrumental variables.

\end{abstract}

\section{Introduction}
\label{sec:introduction}
Statistical and causal modeling 
typically proceed from the assumption
that we already know which
variables are (and are not) observed.
However, this perspective fails 
to address the difficult 
data collection decisions
that precede such modeling efforts.
Doctors must select a set of tests to run.
Survey designers must select
a slate of questions to ask.
Companies must select which 
datasets to purchase.
Whether or not we model these decisions,
they pervade the practice of data science,
influencing both what questions we can ask
and how accurately we can answer them.

One might ask,
\emph{why not collect everything?} 
The answers are two-fold:
First, data acquisition can be expensive. 
In a medical setting, 
blood tests can cost anywhere
from tens to thousands of dollars. 
Running every test for every patient is infeasible. 
Likewise, space on surveys is limited, 
and asking every conceivable question
of every respondent is infeasible.
Second, in many settings,
we lack complete control 
over the set of variables observed. 
Instead, we might have access 
to multiple data sources, 
each capturing a different subset of variables.
Such \emph{data fusion problems}
pervade economic modeling and public health,
and present interesting challenges:
(i) efficiently estimating
(or even identifying)
a population parameter of interest
often requires intelligently combining 
data from multiple sources;
(ii) data collection is often iterative,
with tentative conclusions 
at each stage
informing choices 
about what data to collect next.

In this paper, we formalize the sequential problem 
of deciding, at each time,
which data source to query
(i.e., \emph{what to observe})
in order to efficiently estimate
a target parameter.
We propose online moment selection (OMS), 
a framework that applies 
the generalized method of moments (GMM)~\citep{hansen1982large}
both to estimate the parameter
and to decide which data sources to query.
This framework can be applied 
to estimate any statistical parameter 
that can be identified by a set of moment conditions.
For example, OMS can address
(i) any (regular) maximum likelihood estimation problem \citep[Page~109]{hall2005generalized};
and (ii) estimating average treatment effects (ATEs)
using instrumental variables (IVs),
backdoor adjustment sets, mediators,
and/or other identification strategies.

Our strategy requires only that the agent 
has sufficient structural knowledge 
to formulate the set of moment conditions
and that each moment can be estimated
using the variables returned
by at least one of the data sources.
Interestingly, the optimal decisions
which lead to estimates with the lowest mean squared error (MSE)
depend on the (unknown) model parameters.
This motivates our adaptive strategy: 
as we collect more data, 
we better estimate the underlying parameters,
improving our strategy
for allocating our remaining budget
among the available data sources.

We first address the setting 
where the cost per instance 
is equal across data sources 
(Section~\ref{sec:adaptive-data-collection}).
First, we show that any fixed policy
that differs from the oracle 
suffers constant asymptotic regret,
as assessed by MSE.
We then overcome this limitation 
by proposing two adaptive strategies---explore-then-commit (OMS-ETC) 
and explore-then-greedy (OMS-ETG)---both 
of which choose data sources 
based on the \emph{estimated} asymptotic variance 
of the target parameter.

Under OMS-ETC,
we use some fraction of the sample budget
to explore randomly,
using the collected data 
to estimate the model parameters.
We then exploit the current estimated model,
collecting the remaining samples 
according to the fraction expected to minimize 
our estimator's asymptotic variance.
In OMS-ETG,
we continue to update our parameter estimates
after every step as we collect new data.
We prove that both policies 
achieve zero asymptotic regret.
To overcome the non-i.i.d.~nature of the sample moments, 
we draw upon martingale theory.
To derive zero asymptotic regret,
we show uniform concentration of sample moments
and a finite-sample concentration inequality 
for the GMM estimator with dependent data.
Next, we adapt OMS-ETC and OMS-ETG to handle
heterogeneous costs over the data sources 
(Section~\ref{sec:cost-structure})
and prove that they still have zero asymptotic regret.

Finally, we validate our findings experimentally \footnote{The code 
and data are available at
\url{https://www.github.com/acmi-lab/online-moment-selection}.}
(Section~\ref{sec:experiments}).
Motivated by ATE estimation
in causal models encoded as
directed acyclic graphs,
we generate synthetic data 
from a variety of causal graphs 
and show that the regret 
of our proposed methods 
converges to zero.
Furthermore, we see that despite 
being asymptotically equivalent,
OMS-ETG outperforms OMS-ETC in finite samples.
Finally, we demonstrate 
the effectiveness of our methods 
on two semi-synthetic datasets: 
the Infant Health Development Program (IHDP) dataset~\citep{hill2011bayesian} 
and a Vietnam era draft lottery dataset~\citep{angrist1990lifetime}.

\section{Related Work}
\label{sec:related-work}
Many works attempt to
identify and estimate causal effects
from multiple datasets. 
\cite{bareinboim2016causal, hunermund2019causal} study the problem 
of combining multiple heterogeneous datasets 
and propose methods for dealing with various biases.
Other works study causal identification 
when observational and interventional distributions involving
different sets of variables are available \cite{lee2020causal, tikka2019causal}.
\cite{gupta2020estimating} 
introduce estimators of the ATE
that efficiently combine two datasets,
one where confounders are observed 
(enabling the backdoor adjustment)
and another where mediators are observed
(enabling the frontdoor adjustment).
\cite{li2020causal} derive an estimator for the ATE 
in linear causal models with multiple confounders,
where the confounders are observed in different datasets.

Another related line of work addresses finding optimal adjustment
sets for covariate adjustment 
\cite{henckel2019graphical, rotnitzky2019efficient, witte2020efficient, smucler2020efficient}.
While these works take for granted 
the collection of available datasets,
we focus on the problem of 
deciding which data to collect.
Our work shares motivation with active learning,
where the learner strategically chooses 
which (unlabeled) samples to label
in order to learn most efficiently
\cite{settles2009active, kumar2020active}.
\cite{cohn1996active} design algorithms 
for actively collecting samples 
in a manner that minimizes 
the learner's variance.
In settings where there is a cost associated 
with collecting each feature,
active feature acquisition methods 
incrementally query feature values 
to improve a predictive model 
\cite{ji2007cost, saar2009active, huang2018active}.
\cite{zhang2020active} propose an active learning criterion 
to find the most informative questions 
to ask each respondent in a survey.
In the context of causal inference,
\cite{squires2020} study active structure learning 
of causal DAGs by finding cost-optimal interventions,
\cite{gan2020causal} demonstrate
that for ATE estimation,
actively deconfounding data
can improve sample complexity.
\cite{wang2020confounding} propose strategies
for acquiring missing confounders
to efficiently estimate the ATE.

Others have studied moment selection 
and IV selection from batch data.
\cite{andrews1999consistent} introduce
consistent moment selection procedures 
for the GMM setting with some incorrect moments.
\cite{cheng2015select} propose 
an information-based lasso method 
for excluding invalid or 
redundant moment conditions.
\cite{urminsky2016using} propose 
a variable selection framework 
that uses lasso regression to decide 
which covariates to include.
\cite{hall2003consistent} propose 
four statistical criteria---including 
estimation efficiency and non-redundancy---for 
selecting among a set of candidate IVs.
\cite{donald2009choosing} develop 
an IV selection criteria 
based on asymptotic MSE 
and develop it for the GMM
and generalized empirical likelihood estimators.
\cite{cocci2019standard} develop conservative confidence intervals for structural parameters 
when the off-diagonal entries of the covariance matrix of the empirical moments 
are unknown (e.g., when the moments are obtained from different datasets).
By contrast, we are interested 
in selecting the data sources 
for the moments in an online setting.

Previous works address learning 
from adaptively collected data.
\cite{kato2020efficient}
propose methods
for adaptive experimental design, 
where at each step, 
the experimenter must decide 
the treatment probability using past data 
in order to efficiently estimate the ATE.
\cite{kato2021adaptive, kato2021adaptive2} 
propose a doubly-robust estimator 
for off-policy evaluation with dependent samples.
\cite{zhan2021policy} provide regret bounds
for learning an optimal policy
using adaptively collected data,
where the probability of selecting an action
is a function of past data.
\cite{zhang2021inference, zhang2021statistical} 
study statistical inference 
for OLS and M-estimation
with non-i.i.d. bandit data.
While these settings are different from ours,
some of the theoretical tools 
(e.g., martingale asymptotics and 
uniform martingale concentration bounds) 
are similar.

\section{Preliminaries}
\label{sec:preliminaries}
In the GMM framework \citep{hansen1982large, newey1994large}, 
we estimate model parameters 
by leveraging moment conditions 
that are satisfied at the true parameters $\theta^*$. 
A moment condition is a vector $g(X_t, \theta)$ 
such that $\E[g(X_t, \theta^*)] = 0$.
We estimate $\theta^*$ 
by minimizing the objective $\widehat{Q}_T$:
\begin{align*}
    \widehat{\theta}_T = \arg\min_{\theta \in \Theta} \widehat{Q}_T(\theta), \,\,\, \text{where} \,\,\, \widehat{Q}_T(\theta) = \left[ \frac{1}{T} \sum_{t=1}^T g_t(\theta) \right]^\top \widehat{W} \left[ \frac{1}{T} \sum_{t=1}^T g_t(\theta) \right],
\end{align*}
$\Theta$ is the parameter space,
$g_t(\theta) := g(X_t, \theta)$,
and $\widehat{W}$ is some
(possibly data dependent)
positive definite matrix. 
In this work, we use the \textit{two-step GMM estimator},
where the \textit{one-step estimator} $\widehat{\theta}^{(\text{os})}_T$
is computed with $\widehat{W} := I$ (identity) 
and the two-step estimator
with $\widehat{W} := \left[\widehat{\Omega}_T(\widehat{\theta}^{(\text{os})}_T)\right]^{-1}$, where $ \widehat{\Omega}_T(\theta) = \left[ \frac{1}{T} \sum_{t=1}^T g_t(\theta) g_t(\theta)^\top \right]$.

Let $\textbf{V}$ be the set of variables of interest
and $\psi$ a collection of subsets of $\textbf{V}$,
each corresponding to the specific variables observable 
via one of the available data sources.
Our methods are applicable 
whenever the target parameter can be identified 
by a set of moment conditions
such that each moment relies on variables
simultaneously observable in at least one data source.
The \textit{selection vector}, 
denoted by $s_t \in \left\{ 0, 1 \right\}^{\CardC}$, 
is the binary vector indicating 
the data source selected at time $t$.
\begin{assumption}
The agent queries one data source at each step: 
$\sum_{i=1}^{\CardC} s_{t, i} = 1$, i.e., $s_t$ is one-hot.
\end{assumption}
We can handle the querying of multiple sources 
by adding the union of their variables to $\psi$.
In our setup,
the moment conditions can be written as
$g_t(\theta) = m(s_t) \otimes \Tilde{g}_t(\theta) \in \R^M$,
where $\otimes$ is the Hadamard product, 
$m : \left\{ 0, 1 \right\}^{\CardC} \rightarrow \left\{ 0, 1 \right\}^M$ 
is a fixed known function 
such that
$m(s_t)$ determines which moments get selected, 
and $\Tilde{g}_t(\theta)$ are i.i.d. across $t$.
For concreteness, we instantiate 
our setup with a simple example:
\begin{example}[Instrumental Variable (IV) graph]\label{example:iv-graph}
Consider a linear IV causal model 
(Figure~\ref{fig:disjoint-iv-graph}) 
with instrument $Z$, treatment $X$, outcome $Y$,
and the following data-generating process:
\begin{align*}
    & X := \alpha Z + \eta, \,\,\, Y := \beta X + \epsilon, \,\, \epsilon \notindep \eta,\, \epsilon \indep Z,\, \eta \indep Z.
\end{align*}
The target parameter is the ATE $\beta$.
For $\psi = \left\{ \{ Z, X \}, \{ Z, Y \} \right\}$,
the moment conditions are
\begin{align*}
    g_t(\theta) = \underbrace{\begin{bmatrix} 
        s_{t, 1} \\
        s_{t, 2}
    \end{bmatrix}}_{=m(s_t)} \otimes \underbrace{\begin{bmatrix}
        Z_t (X_t - \alpha Z_t) \\
        Z_t (Y_t - \alpha \beta Z_t)
    \end{bmatrix}}_{=\Tilde{g}_t(\theta)} = \begin{bmatrix}
        s_{t, 1} Z_t (X_t - \alpha Z_t) \\
        (1 - s_{t, 1}) Z_t (Y_t - \alpha \beta Z_t)
    \end{bmatrix},
\end{align*}
where $\theta = [\beta, \alpha]^\top$ and $\{ Z_t, X_t, Y_t \}$ are i.i.d.
\end{example}
For some known function $f_{\text{tar}} : \Theta \rightarrow \R$, let $\beta^* := f_{\text{tar}}(\theta^*)$ be the target parameter (e.g., the ATE).
In practice, we estimate the target parameter by plugging-in the GMM estimate: $\widehat{\beta} = f_{\text{tar}}(\widehat{\theta})$.
Let $H_{t}$ represent the \textit{history} or the data collected until time $t$ with $H_0 = \{\}$ and space $\mathcal{H}_t$.
A data collection policy $\pi$ consists of a sequence of functions $\pi_t : \mathcal{H}_{t-1} \rightarrow \left\{ 0, 1 \right\}^{\CardC}$ with $s_t = \pi_t(H_{t-1})$. Thus $s_t$ can be dependent on data collected until time $(t - 1)$ and
so the sample moments $g_t(\theta)$ are \textit{not} i.i.d.
\begin{definition}[Selection ratio]
The selection ratio,
denoted by $\kappa^{(\pi)}_t$,
encodes the fraction of samples collected 
from each data source until time $t$: 
$\kappa^{(\pi)}_t = \frac{1}{t} \sum_{i=1}^t s_t \in \Delta^{\CardC - 1}$ (standard simplex).
\end{definition}
We use $\widehat{\theta}^{(\pi)}_t$ and $\widehat{\theta}^{(\text{os})}_t$ to denote 
the two-step and one-step GMM estimators, respectively,
that use the data $H_t$.
To reduce clutter, we use
$\ChoiceSimplex := \Delta^{\CardC - 1}$,
$\ctrSim = \left[ \frac{1}{\CardC}, \frac{1}{\CardC}, \hdots, \frac{1}{\CardC} \right]$ (center of the simplex),
and might drop 
the superscript
$\pi$ from 
$\kappa_t, \widehat{\theta}_t$, 
and $\widehat{\beta}_t$.
$\|.\|$ denotes the spectral and $l_2$ norms 
for matrices and vectors, respectively,
and $N_{\epsilon}(\theta) := \left\{\theta' : \left\| \theta' - \theta \right\| \leq \epsilon \right\}$ ($\epsilon$-ball around $\theta$).

\section{Adaptive Data Collection}
\label{sec:adaptive-data-collection}
The central challenge in this work
is to make strategic decisions 
about which data to observe 
so that we most efficiently 
estimate the target functional.
In this section, we present three policies:
(i) \emph{fixed}: query 
the data sources according 
to a pre-specified ratio;
(ii) OMS-ETC: query uniformly
for a specified exploration period,
estimate the optimal ratio
based on the inferred parameters
and thereafter aim for that ratio;
and (iii) OMS-ETG: same as OMS-ETC
but continue to update parameter 
(and thus oracle ratio) estimates
after the exploration period. 
For now, we analyze these policies
for the case when the cost to query
is identical across data sources.
By $T \in \mathbb{N}$,
we denote the (known) \textit{horizon},
which can be thought of as the agent's 
data acquisition budget. 
We defer all proofs to Appendix~\ref{sec:appendix-omitted-proofs}. 

We now present
sufficient conditions 
for consistency and asymptotic normality 
of the GMM estimator 
under adaptively collected data.
We later use these results to
derive the regret of our policies.

\begin{assumption}\label{assump:standard-gmm}
(a)
(Identification)
$\forall \theta \neq \theta^*, \P\left( \liminf_{T \to \infty} \Bar{Q}(\theta) > 0 \right) = 1$,
where $\Bar{Q}(\theta) = g^{*}_T(\theta) \widehat{W} g^{*}_T(\theta)^\top$ and $g^{*}_T(\theta) = \left[ \frac{1}{T} \sum_{t=1}^T m(s_t) \otimes \E\left[ \Tilde{g}_t(\theta) \right]  \right]$;
(b) $\Theta \subset \R^D$ is compact;
(c) $\forall \theta,
\Tilde{g}_i(\theta)$ is twice continuously differentiable (c.d.);
(d) $\forall \theta, \E[\Tilde{g}_i(\theta)]$ is continuous; and
(e) $f_{\text{tar}}$ is c.d. at $\theta^*$.
\end{assumption}

By Assumption~\ref{assump:standard-gmm}(a),
the
GMM objective is uniquely minimized at $\theta^*$.
Informally, this means that each moment is (asymptotically) collected enough times to allow identification.
If $M = D$ (just-identified case),
this holds when
(i) an asymptotically non-negligible fraction of every moment is collected: $\forall j \in [M], \,\, \P\left( \liminf_{T \to \infty} \frac{1}{T} \sum_{t=1}^T m_j(s_t) \neq 0  \right) = 1$; and (ii) $\forall \theta \neq \theta^*, \, \E\left[ \Tilde{g}_t(\theta) \right] \neq 0$.

\begin{property}[ULLN]\label{property:mod-of-continuity}
Let $a_i(\theta) := a(X_i; \theta) \in \R$ be a continuous function
with $X_i$ sampled i.i.d. We say that $a_i(\theta)$ satisfies the ULLN property if 
(i) $\forall \theta, \, \E\left[ a_i(\theta)^2 \right] < \infty$;
(ii) $a_i(\theta)$ is dominated by a function $A(X_i)$: $\forall \theta, |a_i(\theta)| \leq A(X_i)$; and 
(iii) $\E[A(X_i)] < \infty$.
\end{property}

\begin{proposition}[Consistency]\label{prop:consistency}
Suppose that (i) Assumption~\ref{assump:standard-gmm} holds, (ii) $\forall j \in [M], \, \Tilde{g}_{t, j}(\theta)$ satisfies Property~\ref{property:mod-of-continuity}, and
(iii) $\forall (i, j) \in [M]^2, \, \left[ \Tilde{g}_t(\theta) \Tilde{g}_t(\theta)^\top \right]_{i, j}$ satisfies Property~\ref{property:mod-of-continuity};
Then, $\widehat{\theta}^{(\pi)}_T \xrightarrow[T \to \infty]{p} \theta^*$.
\end{proposition}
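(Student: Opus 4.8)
The plan is to adapt the classical two-step argument for consistency of GMM estimators (as in Newey--McFadden) to the setting where the sample moments $g_t(\theta)$ are not i.i.d.\ because the selection vectors $s_t$ depend on past data. The standard recipe has two ingredients: (i) a uniform law of large numbers (ULLN) ensuring that the sample objective $\widehat{Q}_T(\theta)$ converges in probability, uniformly over $\Theta$, to a limiting objective $\Bar{Q}(\theta)$; and (ii) an identification condition guaranteeing that $\Bar{Q}$ is uniquely minimized at $\theta^*$. Given these, the argmin converges to the argmin, yielding $\widehat{\theta}^{(\pi)}_T \ConvProb \theta^*$. Identification is handed to us directly by Assumption~\ref{assump:standard-gmm}(a), and compactness of $\Theta$ by Assumption~\ref{assump:standard-gmm}(b), so the real work is establishing the uniform convergence despite the dependence induced by the policy.

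First I would decompose the selected sample average. Writing $\frac{1}{T}\sum_{t=1}^T g_t(\theta) = \frac{1}{T}\sum_{t=1}^T m(s_t) \otimes \Tilde{g}_t(\theta)$, the key observation is that the factor $\Tilde{g}_t(\theta)$ is i.i.d.\ across $t$ and, crucially, is \emph{independent} of $s_t$: the selection $s_t = \pi_t(H_{t-1})$ is measurable with respect to the history through time $t-1$, whereas $\Tilde{g}_t$ depends on the fresh draw $X_t$. Conditioning on $s_t$, the term $m_j(s_t)\bigl(\Tilde{g}_{t,j}(\theta) - \E[\Tilde{g}_{t,j}(\theta)]\bigr)$ therefore forms a martingale difference sequence with respect to the natural filtration. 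I would then control the centered sum $\frac{1}{T}\sum_{t=1}^T m_j(s_t)\bigl(\Tilde{g}_{t,j}(\theta) - \E[\Tilde{g}_{t,j}(\theta)]\bigr)$ using a martingale strong (or weak) law, while the remaining piece $\frac{1}{T}\sum_{t=1}^T m_j(s_t)\,\E[\Tilde{g}_{t,j}(\theta)] = \bigl(g^{*}_T(\theta)\bigr)_j$ is exactly the deterministic-in-$\Tilde{g}$ object appearing in Assumption~\ref{assump:standard-gmm}(a). This split reduces the non-i.i.d.\ problem to (a) a martingale LLN for the fluctuation and (b) bookkeeping of the selection frequencies that the identification assumption already governs.

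The main obstacle is upgrading this pointwise (in $\theta$) convergence to \emph{uniform} convergence over the compact set $\Theta$, which is what legitimizes passing from the convergence of the objective to the convergence of its minimizer. This is where Property~\ref{property:mod-of-continuity} enters: the square-integrability, the integrable envelope $A(X_i)$, and the continuity of $\Tilde{g}_{t,j}(\theta)$ together yield stochastic equicontinuity of the process $\theta \mapsto \frac{1}{T}\sum_t m_j(s_t)\Tilde{g}_{t,j}(\theta)$. Concretely, I would combine pointwise convergence on a countable dense subset of $\Theta$ with an equicontinuity modulus argument (a standard bracketing/covering step), noting that the $m_j(s_t) \in \{0,1\}$ factors are uniformly bounded and hence do not disturb the envelope domination. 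Hypothesis~(iii) of the proposition supplies the analogous ULLN for the outer-product entries $\bigl[\Tilde{g}_t\Tilde{g}_t^\top\bigr]_{i,j}$, which is needed so that the weighting matrix $\widehat{W} = [\widehat{\Omega}_T(\widehat{\theta}^{(\text{os})}_T)]^{-1}$ converges to a well-behaved positive definite limit; here I would first establish consistency of the one-step estimator $\widehat{\theta}^{(\text{os})}_T$ (using $\widehat{W}=I$, so only hypothesis~(ii) is required), then feed it into $\widehat{\Omega}_T$ and invoke continuity to conclude $\widehat{W} \ConvProb W^*$ for some fixed positive definite $W^*$. With the uniform convergence of $\widehat{Q}_T$ to $\Bar{Q}$ and the unique-minimizer property of $\Bar{Q}$ in hand, the conclusion $\widehat{\theta}^{(\pi)}_T \ConvProb \theta^*$ follows from the standard argmax/argmin continuity theorem.
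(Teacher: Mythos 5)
Your overall architecture is the same as the paper's: a covering-argument ULLN for the selected moments (the paper's Lemma~\ref{lemma:apdx-uniform-convergence-dependent-data}, which modifies the classical Tauchen-style ULLN proof by swapping the i.i.d.\ LLN for an MDS LLN, using exactly the decomposition $m_j(s_t)\bigl(\Tilde{g}_{t,j}(\theta) - \E[\Tilde{g}_{t,j}(\theta)]\bigr)$ you describe, with the envelope $A(X_i)$ controlling the modulus-of-continuity term), consistency of the one-step estimator first, then convergence of $\widehat{\Omega}_T$ fed through the continuous mapping theorem, and finally a Newey--McFadden-style consistency theorem with identification supplied by Assumption~\ref{assump:standard-gmm}(a). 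Your explicit two-stage bootstrap (one-step with $\widehat{W}=I$ needs only hypothesis~(ii); the two-step weight matrix then needs hypothesis~(iii)) is in fact stated more cleanly than in the paper, whose weight-matrix step cites the proposition itself for the one-step estimator without spelling out that the identity-weight case closes the circle.

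The one genuine flaw is your claim that $\widehat{W} \ConvProb W^{*}$ for some \emph{fixed} positive definite limit, and correspondingly that $\widehat{Q}_T$ converges uniformly to a \emph{fixed} objective $\Bar{Q}$ uniquely minimized at $\theta^*$, so that ``the argmin converges to the argmin.'' The proposition is asserted for an arbitrary policy $\pi$, and nothing at this stage forces the selection ratio $\kappa_T$ to converge --- that hypothesis appears only later, as Condition~(iv) of Proposition~\ref{prop:asymptotic-normality}. For a policy that alternates between data sources in geometrically growing blocks, $\frac{1}{T}\sum_{t} m(s_t) m(s_t)^\top$ oscillates, $\widehat{\Omega}_T(\widehat{\theta}^{(\text{os})}_T)$ has no limit, and no fixed $W^{*}$ or $\Bar{Q}$ exists, so this step fails as stated. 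The paper avoids the issue by working with the drifting population objective $\Bar{Q}^{(\pi)}_T(\theta) = \left[ m_T \otimes g_{*}(\theta) \right]^\top W \left[ m_T \otimes g_{*}(\theta) \right]$ and the $T$-dependent target $W = \left[ m_{\Omega}(\kappa_T) \otimes \Omega(\theta^*) \right]^{-1}$: it shows only $\| \widehat{W} - W \| \ConvProb 0$ and bounds $\| \widehat{W} \| \leq \lambda_0 + o_p(1)$ with $\lambda_0 = \limsup_{T \to \infty} \left\| \left[ m_{\Omega}(\kappa_T) \otimes \Omega(\theta^*) \right]^{-1} \right\|$, and identification is taken not as unique minimization of a limit but as the almost-sure liminf condition of Assumption~\ref{assump:standard-gmm}(a), namely $\P\left( \liminf_{T \to \infty} \Bar{Q}(\theta) > 0 \right) = 1$ for every $\theta \neq \theta^*$, which is precisely what substitutes for the fixed-limit hypothesis in the standard argmin argument. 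The repair is local: replace your fixed limits by this sequence of population objectives and invoke the liminf identification; the MDS and covering ULLN machinery you set up carries over unchanged.
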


\begin{proposition}[Asymptotic normality]\label{prop:asymptotic-normality}
Suppose that
(i) $\widehat{\theta}^{(\pi)}_T \xrightarrow{p} \theta^*$;
(ii) $\forall (i, j) \in [M] \times [D], \, \left[ \frac{\partial \Tilde{g}_t}{\partial \theta} (\theta) \right]_{i, j}$ satisfies Property~\ref{property:mod-of-continuity};
(iii) $\exists \delta > 0 \, \text{such that} \, \E\left[ \left\| \Tilde{g}_i(\theta^*) \right\|^{2 + \delta} \right] < \infty$, and
(iv) (Selection ratio convergence) $\kappa^{(\pi)}_T \overset{p}{\rightarrow} k$ for some constant $k \in \ChoiceSimplex$. Then
$\widehat{\theta}_T$ is asymptotically normal:
\begin{align*}
    \sqrt{T} (\widehat{\theta}^{(\pi)}_T - \theta^*) &\overset{d}{\rightarrow} \mathcal{N}\left( 0, \Sigma(\theta^*, k) \right),
\end{align*}
where $\Sigma(\theta^*, k)$ is a constant matrix that depends only on $\theta^*$ and $k$  (see Appendix~\ref{sec:appendix-asymptotic-normality-proof} for the complete expression).
By Assumption~\ref{assump:standard-gmm}(e) and the Delta method, $\widehat{\beta}_T$ is asymptotically normal:
\begin{align*}
    \sqrt{T} (\widehat{\beta}_T - \beta^*) &\overset{d}{\rightarrow} \mathcal{N}\left( 0, V(\theta^*, k) \right), \, \text{where} \, V(\theta^*, k) = \nabla_{\theta} f_{\text{tar}}(\theta^*)^\top [\Sigma(\theta^*, k)] \nabla_{\theta} f_{\text{tar}}(\theta^*).
\end{align*}
\end{proposition}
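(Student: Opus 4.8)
The plan is to follow the classical linearization argument for two-step GMM, replacing each appeal to an i.i.d.\ law of large numbers (LLN) or central limit theorem (CLT) with its martingale analogue, since adaptive selection renders $\{g_t(\theta)\}$ dependent. Starting from the first-order condition and a mean-value expansion of $\frac{1}{T}\sum_{t=1}^T g_t(\widehat\theta_T)$ around $\theta^*$, I obtain the sandwich representation
\[ \sqrt{T}(\widehat\theta_T - \theta^*) = -\left[\widehat G_T^\top \widehat W \bar G_T\right]^{-1}\widehat G_T^\top \widehat W \cdot \frac{1}{\sqrt T}\sum_{t=1}^T g_t(\theta^*) + o_p(1), \]
where $\widehat G_T$ and $\bar G_T$ are averaged Jacobians of $g_t$ evaluated at $\widehat\theta_T$ and at an intermediate point $\bar\theta_T$, respectively. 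It then remains to control (a) the limiting law of the normalized score $\frac{1}{\sqrt T}\sum_t g_t(\theta^*)$, (b) the probability limits of $\widehat G_T,\bar G_T$, and (c) the probability limit of $\widehat W$.

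The observation that unlocks (a) is that $\{g_t(\theta^*)\}$ is a martingale difference sequence with respect to $\mathcal F_t := \sigma(H_t)$: since $s_t = \pi_t(H_{t-1})$ is $\mathcal F_{t-1}$-measurable, $m(s_t)$ is predictable, while $\tilde g_t(\theta^*)$ is independent of $\mathcal F_{t-1}$ with mean zero, so $\E[g_t(\theta^*)\mid\mathcal F_{t-1}] = m(s_t)\otimes\E[\tilde g_t(\theta^*)] = 0$. I would then invoke a multivariate martingale CLT (e.g.\ via the Cram\'er--Wold device and a triangular-array martingale CLT). Its conditional Lindeberg condition follows from the uniform $(2+\delta)$ moment bound in hypothesis~(iii), and its conditional covariances converge because, entrywise,
\[ \frac{1}{T}\sum_{t=1}^T \left(\E\!\left[g_t(\theta^*)g_t(\theta^*)^\top \mid \mathcal F_{t-1}\right]\right)_{ij} = \left(\frac{1}{T}\sum_{t=1}^T m_i(s_t)m_j(s_t)\right)\Omega_{ij}, \quad \Omega := \E[\tilde g_t(\theta^*)\tilde g_t(\theta^*)^\top]. \]
As $s_t$ is one-hot, $\frac{1}{T}\sum_t m_i(s_t)m_j(s_t) = \sum_\ell \kappa_{T,\ell}\, m_i(e_\ell)m_j(e_\ell) \overset{p}{\to} \sum_\ell k_\ell\, m_i(e_\ell)m_j(e_\ell)$ by hypothesis~(iv), where $e_\ell$ is the one-hot vector selecting source $\ell$; this yields a deterministic limit $S(\theta^*,k)$ and hence $\frac{1}{\sqrt T}\sum_t g_t(\theta^*) \overset{d}{\to} \mathcal N(0, S(\theta^*,k))$.

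For (b) and (c) I would prove a weighted martingale LLN. Writing the $j$-th row of the averaged Jacobian as $\frac{1}{T}\sum_t m_j(s_t)\,\partial_\theta\tilde g_{t,j}(\theta)$ and centering $\partial_\theta\tilde g_{t,j}$, the centered part is a martingale-difference average that vanishes while the mean part reduces to $\kappa_T$-weighted expectations converging by~(iv); the domination hypothesis of Property~\ref{property:mod-of-continuity}, available through condition~(ii), upgrades this to uniform-in-$\theta$ convergence, which is exactly what permits evaluation at the random point $\bar\theta_T \overset{p}{\to}\theta^*$. This gives $\widehat G_T,\bar G_T \overset{p}{\to} G(\theta^*,k) := \sum_\ell k_\ell\,\mathrm{diag}(m(e_\ell))\,\E[\partial_\theta\tilde g_t(\theta^*)]$. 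The same LLN applied to $\tilde g_t \tilde g_t^\top$, together with consistency of the one-step estimator (Proposition~\ref{prop:consistency} applied with $\widehat W = I$) and continuity, gives $\widehat\Omega_T(\widehat\theta^{(\text{os})}_T)\overset{p}{\to} S(\theta^*,k)$, so $\widehat W \overset{p}{\to} S(\theta^*,k)^{-1}$. Substituting the three limits into the sandwich and applying Slutsky yields asymptotic normality with $\Sigma(\theta^*,k) = (G^\top S^{-1} G)^{-1}$ (the efficient two-step form). The claim for $\widehat\beta_T$ is then immediate from Assumption~\ref{assump:standard-gmm}(e) and the Delta method, giving $V(\theta^*,k) = \nabla_\theta f_{\text{tar}}(\theta^*)^\top \Sigma(\theta^*,k)\,\nabla_\theta f_{\text{tar}}(\theta^*)$.

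The main obstacle is the martingale bookkeeping induced by the adaptive, predictable weights $m(s_t)$: verifying the conditional Lindeberg condition for the score, and --- more delicately --- obtaining convergence of the averaged Jacobian uniformly in $\theta$ so that it may be legitimately evaluated at the data-dependent mean-value point $\bar\theta_T$. Once the i.i.d.\ limit theorems are swapped for their martingale counterparts and the selection frequencies are shown to converge via~(iv), the remainder is the standard GMM algebra.
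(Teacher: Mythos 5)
Your proposal is correct and follows essentially the same route as the paper: the same first-order-condition sandwich expansion with a mean-value point, the same martingale difference observation for $g_t(\theta^*)$ handled via Cram\'er--Wold and a martingale CLT (with the conditional Lindeberg condition checked through a Lyapunov bound from hypothesis~(iii) and the conditional variance converging via hypothesis~(iv)), and the same centered, uniformly-in-$\theta$ weighted martingale LLN for the Jacobians and the weight matrix (the paper's Lemma~\ref{lemma:apdx-uniform-convergence-dependent-data}), concluded by Slutsky and the Delta method. Your limits $S(\theta^*,k)$ and $G(\theta^*,k)$ coincide with the paper's $\Omega_{*}(\theta^*,k)$ and $G_{*}(\theta^*,k)$ up to notation, yielding the identical $\Sigma(\theta^*,k)=\left[ G_{*}^\top \Omega_{*}^{-1} G_{*} \right]^{-1}$.
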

Proposition~\ref{prop:asymptotic-normality} shows that for a policy under which the selection ratio $\kappa_T$ converges in probability to a constant, the GMM estimator $\widehat{\theta}_T$ can be asymptotically normal.
The specific order in which the data sources are queried
does not affect asymptotic normality as long the selection ratio $\kappa^{(\pi)}_T$ converges.

A \textit{fixed (or non-adaptive) policy}, denoted by $\pi_{k}$, has $\kappa_T = k$ for some constant $k \in \ChoiceSimplex$.
Here, the collection decisions 
do not depend on the data 
and each data source is queried 
a fixed fraction of the time.
By Proposition~\ref{prop:asymptotic-normality}, 
$\widehat{\theta}^{(\pi_k)}_T$ 
is asymptotically normal. 
The \textit{oracle policy}, denoted by $\pi^*$, 
is the fixed policy 
with the lowest asymptotic variance.
Thus for $\pi^*$, we have $\kappa^{(\pi^*)}_T = \kappa^*$, 
where $\kappa^* = \arg\min_{\kappa} V(\theta^*, \kappa)$.
We call $\kappa^*$ the \textit{oracle selection ratio}.
For the oracle policy,
we have $\sqrt{T} (\widehat{\beta}^{(\pi^*)}_T - \beta) \overset{d}{\rightarrow} \mathcal{N}\left( 0, V(\theta^*, \kappa^*) \right)$.
The following assumption
ensures that
$\kappa^*$ is unique and consequently the data collection decisions are meaningful.

\begin{assumption}\label{assump:kappa-star-identify}
$\kappa^*$ uniquely minimizes $V(\theta^*, \kappa)$: $\forall \kappa \in \ChoiceSimplex \, \text{s.t.} \, \kappa \neq \kappa^*, \,\, V(\theta^*, \kappa) > V(\theta^*, \kappa^*)$.
\end{assumption}

\begin{definition}[Asymptotic regret]
The asymptotic regret captures how close 
the scaled asymptotic error of
a given policy is to the oracle policy.
We define the asymptotic regret of a policy $\pi$ as
\begin{align}\label{eq:asymptotic-regret}
    R_\infty(\pi) = \Amse\left( \sqrt{T} \left(\widehat{\beta}^{(\pi)}_T - \beta^* \right) \right) - V(\theta^*, \kappa^*),
\end{align}
where $\Amse$ is the asymptotic MSE 
(i.e., the MSE of the limiting distribution).
\end{definition}
For any fixed policy $\pi_k$ such that 
$\kappa^{(\pi_k)}_T = k$ for some constant $k \neq \kappa^*$, 
we have $R_\infty(\pi_k) = \left[ V(\theta^*, k) - V(\theta^*, \kappa^*) \right] > 0$ (by Assumption~\ref{assump:kappa-star-identify}).
This shows that a fixed policy suffers constant regret.
This motivates the design of adaptive policies,
where the regret asymptotically converges to zero.

\subsection{Online Moment Selection via Explore-then-Commit (OMS-ETC)}

OMS-ETC is inspired by the ETC strategy for
multi-armed bandits (MABs) \citep[Chapter~6]{lattimore2020bandit}.
Under OMS-ETC, we first explore by collecting 
a fixed number of samples for each choice.
Then, we use these samples 
to estimate the oracle selection ratio $\kappa^*$.
Finally, we commit to this ratio 
for the remaining time steps. 
We denote the OMS-ETC policy by $\pi_{\text{ETC}}$ (Figure~\ref{fig:policy-algorithm-etc}).

The policy $\pi_{\text{ETC}}$ is characterized 
by an exploration fraction $e \in (0, 1)$.
We first collect $Te$ samples 
by querying each data source equally
so that $\kappa_{Te} = \ctrSim$.
We then estimate $\widehat{\theta}_{Te}$
and obtain the plugin estimate of $\kappa^*$
as $\widehat{k} = \arg\min_{\kappa \in \ChoiceSimplex} V(\widehat{\theta}_{Te}, \kappa)$.
The \emph{feasible region} for $\kappa_T$
is defined as the set of values that $\kappa_T$ can take 
after we have devoted $Te$ samples to exploration
and is given by
$\Tilde{\Delta} = \left\{ e \kappa_{Te} + (1 - e) \kappa : \kappa \in \ChoiceSimplex \right\}$
(proof in Appendix~\ref{sec:appendix-feasibility-region}).
We collect the remaining $T(1 - e)$ samples 
such that $\kappa_T$ is as close to $\widehat{k}$ as possible:
$\kappa_T = \text{proj}(\widehat{k}, \Tilde{\Delta})$, 
where $\text{proj}\left( k, \Tilde{\Delta} \right) = \arg\min_{k' \in \Tilde{\Delta}} \| k - k' \|$.

\begin{remark*}
The feasible region shrinks as $e$ increases because, as $e$ increases, the $T(1-e)$ samples that remain after exploration decrease thereby shrinking the possible values that $\kappa_T$ can take.
\end{remark*}

\begin{figure}[t]
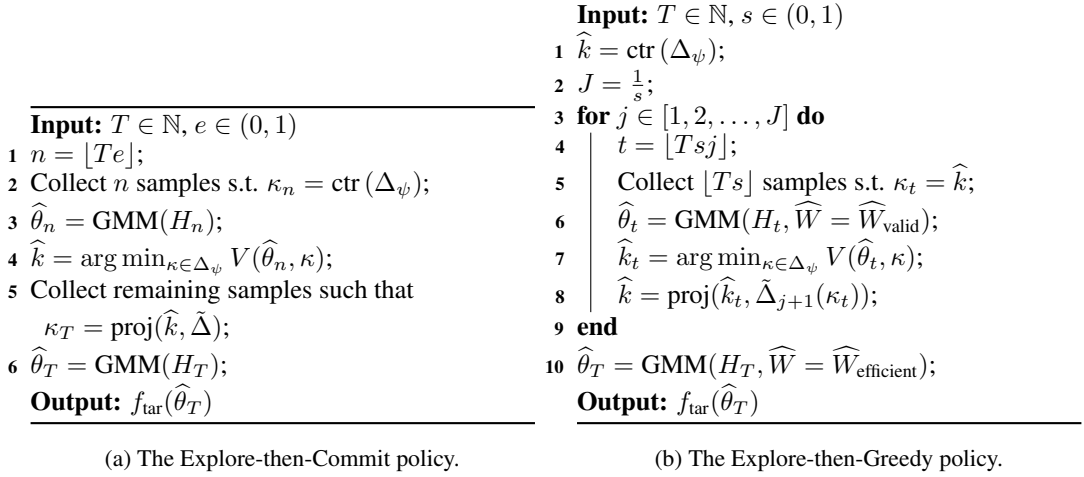

\centering
\begin{subfigure}[b]{0.48\textwidth}
{
    \setlength{\interspacetitleruled}{0pt}%
    \setlength{\algotitleheightrule}{0pt}%
    \begin{algorithm}[H]
    \SetAlgoLined
    \KwInput{$T \in \mathbb{N}$, $e \in (0, 1)$}
    $n = \floor{Te}$\;
    Collect $n$ samples s.t. $\kappa_{n} = \ctrSim$\;
    $\widehat{\theta}_{n} = \text{GMM}(H_{n})$\;
    $\widehat{k} = \arg\min_{\kappa \in \ChoiceSimplex} V(\widehat{\theta}_{n}, \kappa)$\;
    Collect remaining samples such that $\kappa_T = \text{proj}(\widehat{k}, \Tilde{\Delta})$\;
    $\widehat{\theta}_T = \text{GMM}(H_T)$\;
    \KwOutput{$f_{\text{tar}}(\widehat{\theta}_T)$}
    \end{algorithm}
}
\caption{The Explore-then-Commit policy.}
\label{fig:policy-algorithm-etc}
\end{subfigure}
\hfill
\begin{subfigure}[b]{0.48\textwidth}
{
    \setlength{\interspacetitleruled}{0pt}%
    \setlength{\algotitleheightrule}{0pt}%
    \begin{algorithm}[H]
    \SetAlgoLined
    \KwInput{$T \in \mathbb{N}$, $s \in (0, 1)$}
    $\widehat{k} = \ctrSim$\;
    $J = \frac{1}{s}$\;
    \For{$j \in [1, 2, \hdots, J]$}{
        $t = \floor{Tsj}$\;
        Collect $\floor{Ts}$ samples s.t. $\kappa_{t} = \widehat{k}$\;
        $\widehat{\theta}_{t} = \text{GMM}(H_t, \widehat{W}=\widehat{W}_{\text{valid}})$\;
        $\widehat{k}_{t} = \arg\min_{\kappa \in \ChoiceSimplex} V(\widehat{\theta}_{t}, \kappa)$\;
        $\widehat{k} = \text{proj}(\widehat{k}_{t}, \Tilde{\Delta}_{j+1}(\kappa_t))$\;
    }
    $\widehat{\theta}_T = \text{GMM}(H_T, \widehat{W}=\widehat{W}_{\text{efficient}})$\;
    \KwOutput{$f_{\text{tar}}(\widehat{\theta}_T)$}
    \end{algorithm}
}
\caption{The Explore-then-Greedy policy.}
\label{fig:policy-algorithm-greedy}
\end{subfigure}
\caption{Algorithms for OMS-ETC and OMS-ETG.}
\end{figure}

\begin{theorem}[Regret of OMS-ETC]\label{thm:etc-regret}
Suppose that (i) Conditions (i)-(iii) of Proposition~\ref{prop:asymptotic-normality} hold
and
(ii) Assumption~\ref{assump:kappa-star-identify} holds.
Case (a): For a fixed $e \in (0, 1)$,
if
$\kappa^* \in \Tilde{\Delta}$, then the regret converges to zero: $R_\infty(\pi_{\text{ETC}}) = 0$.
If $\kappa^* \notin \Tilde{\Delta}$, then $\pi_{\text{ETC}}$ suffers constant regret: $R_\infty(\pi_{\text{ETC}}) = r$ for some constant $r > 0$.
Case (b): If $e$ depends on $T$ such that $e = o(1)$ 
and $Te \rightarrow \infty$ as $T \to \infty$ (e.g. $e = \frac{1}{\sqrt{T}}$),
then $\forall \theta^* \in \Theta$,
we have $R_\infty(\pi_{\text{ETC}}) = 0$.
\end{theorem}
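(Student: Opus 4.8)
The plan is to reduce all three cases to Proposition~\ref{prop:asymptotic-normality} by first showing that the realized selection ratio $\kappa_T$ converges in probability to an explicit constant, and then reading off the asymptotic regret directly from the limiting variance $V(\theta^*,\cdot)$. The only genuinely new ingredients are (i) the consistency of the \emph{plug-in} oracle ratio $\widehat{k}$, and (ii) the behavior of the projection onto the feasible region $\Tilde{\Delta}$; everything else is an appeal to the consistency and normality results already established. I begin with the exploration phase. The policy fixes $\kappa_n = \ctrSim$ with $n = \floor{Te}$, and in both cases $n \to \infty$ as $T \to \infty$ (trivially for fixed $e$, and by the hypothesis $Te \to \infty$ in Case (b)), so Proposition~\ref{prop:consistency} applied to the sub-sample $H_n$ collected at the fixed interior ratio $\ctrSim$ gives $\widehat{\theta}_n \ConvProb \theta^*$. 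The map $\theta \mapsto \arg\min_{\kappa \in \ChoiceSimplex} V(\theta,\kappa)$ is continuous at $\theta^*$ because $V$ is jointly continuous, $\ChoiceSimplex$ is compact, and by Assumption~\ref{assump:kappa-star-identify} the minimizer at $\theta^*$ is the \emph{unique} point $\kappa^*$; a standard maximum-theorem/argmin-continuity argument then upgrades upper hemicontinuity of the argmin correspondence to continuity of the minimizer. Composing with the continuous mapping theorem yields $\widehat{k} \ConvProb \kappa^*$.

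Next I establish selection-ratio convergence using two facts about $\Tilde{\Delta} = \{ e\,\ctrSim + (1-e)\kappa : \kappa \in \ChoiceSimplex \}$ (its form proven in Appendix~\ref{sec:appendix-feasibility-region}): it is closed and convex, being an affine contraction of $\ChoiceSimplex$ toward its center, so the Euclidean projection onto it is single-valued and non-expansive; and every point of $\Tilde{\Delta}$ has all coordinates at least $e/\CardC > 0$. For fixed $e$ (Case (a)), $\Tilde{\Delta}$ is fixed, and non-expansiveness gives $\| \kappa_T - \mathrm{proj}(\kappa^*,\Tilde{\Delta}) \| = \| \mathrm{proj}(\widehat{k},\Tilde{\Delta}) - \mathrm{proj}(\kappa^*,\Tilde{\Delta}) \| \le \| \widehat{k} - \kappa^* \| \ConvProb 0$, so $\kappa_T \ConvProb k := \mathrm{proj}(\kappa^*,\Tilde{\Delta})$; here $k = \kappa^*$ exactly when $\kappa^* \in \Tilde{\Delta}$, and $k \neq \kappa^*$ otherwise. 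In Case (b), writing $k_e := \mathrm{proj}(\kappa^*,\Tilde{\Delta})$, the triangle inequality and non-expansiveness give $\| \kappa_T - \kappa^* \| \le \| \widehat{k} - \kappa^* \| + \| k_e - \kappa^* \|$; the first term vanishes in probability since $Te \to \infty$, while the second vanishes deterministically because $e\,\ctrSim + (1-e)\kappa^* \in \Tilde{\Delta}$ implies $\| k_e - \kappa^* \| \le e\,\| \ctrSim - \kappa^* \| \to 0$. Hence $\kappa_T \ConvProb \kappa^*$.

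In every case the limit $k$ is a deterministic point lying in $\Tilde{\Delta}$ (for Case (b), $k=\kappa^*$ is identifiable since $V(\theta^*,\kappa^*)$ is finite, being the minimum), so the identification part of Assumption~\ref{assump:standard-gmm} holds at $k$ and Proposition~\ref{prop:consistency} gives $\widehat{\theta}_T \ConvProb \theta^*$, verifying condition (i) of Proposition~\ref{prop:asymptotic-normality}; conditions (ii)--(iii) are assumed and (iv) is precisely the convergence $\kappa_T \ConvProb k$ just shown. Proposition~\ref{prop:asymptotic-normality} then yields $\sqrt{T}(\widehat{\beta}_T - \beta^*) \overset{d}{\to} \NM(0, V(\theta^*,k))$, whence $\Amse(\sqrt{T}(\widehat{\beta}_T - \beta^*)) = V(\theta^*,k)$ and $R_\infty(\pi_{\text{ETC}}) = V(\theta^*,k) - V(\theta^*,\kappa^*)$. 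This is zero when $k = \kappa^*$ (Case (a) with $\kappa^* \in \Tilde{\Delta}$, and Case (b)); when $k = \mathrm{proj}(\kappa^*,\Tilde{\Delta}) \neq \kappa^*$ (Case (a) with $\kappa^* \notin \Tilde{\Delta}$), Assumption~\ref{assump:kappa-star-identify} gives $r := V(\theta^*,k) - V(\theta^*,\kappa^*) > 0$.

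The main obstacle is the consistency of the plug-in oracle ratio, $\widehat{k} \ConvProb \kappa^*$: this is exactly where Assumption~\ref{assump:kappa-star-identify} is indispensable, since without a unique minimizer the argmin correspondence is only upper hemicontinuous and the plug-in need not concentrate at a single point, breaking condition (iv) of Proposition~\ref{prop:asymptotic-normality}. The remaining pieces---joint continuity of $V$, the convexity and coordinate lower bound of $\Tilde{\Delta}$, and its expansion to $\ChoiceSimplex$ as $e \to 0$---are routine given the explicit description of the feasible region.
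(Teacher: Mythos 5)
Your proof is correct and follows essentially the same route as the paper's: establish $\widehat{k} \ConvProb \kappa^*$ via argmin consistency (the paper's Lemma~\ref{lemma:apdx-kappa-hat-consistency}), deduce $\kappa_T \ConvProb k$ for an explicit constant $k$, and invoke Proposition~\ref{prop:asymptotic-normality} to read off the regret as $V(\theta^*,k) - V(\theta^*,\kappa^*)$. Your write-up is in fact slightly more careful in two places: the non-expansiveness of the projection makes the Case~(b) convergence quantitative where the paper only gestures at $\Tilde{\Delta} \to \ChoiceSimplex$, and in Case~(a) with $\kappa^* \notin \Tilde{\Delta}$ you identify the limit as the Euclidean projection $\mathrm{proj}(\kappa^*, \Tilde{\Delta})$ --- which matches the algorithm's actual projection step --- whereas the paper states the limit as $\arg\min_{\kappa \in \Tilde{\Delta}} V(\theta^*,\kappa)$, a different point in general, though either limit lies in $\Tilde{\Delta}$ and differs from $\kappa^*$, so both yield the same positive-constant-regret conclusion via Assumption~\ref{assump:kappa-star-identify}.
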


The theorem provides sufficient conditions for when the regret converges to zero.
Case~(a) of the theorem shows that if we explore 
for a fixed fraction of the horizon $T$,
the regret will only converge to zero if $\kappa^{*}$
is inside the feasible region.
Thus the regret will \textit{not} converge to zero
over the entire parameter space $\Theta$
as there would be certain parameter values 
for which $\kappa^*$ would be 
outside the feasible region.
Case~(b) shows that
we \textit{can} achieve 
zero asymptotic regret
for every $\theta^{*} \in \Theta$
by setting $e$ such that it 
becomes asymptotically negligible ($e \in o(1)$).
The main idea in the proof (see Appendix~\ref{sec:apdx-proof-etc-regret}) is to show 
that $\kappa_T \ConvProb \kappa^*$ 
and apply Proposition~\ref{prop:asymptotic-normality}.
In Case~(b), the feasible region $\Tilde{\Delta}$
asymptotically covers the entire simplex
($\Tilde{\Delta} \xrightarrow[]{T \to \infty} \ChoiceSimplex$)
and this is sufficient to show that $\kappa_T \ConvProb \kappa^*$.

\subsection{Online Moment Selection via Explore-then-Greedy (OMS-ETG)}

We extend OMS-ETC 
by periodically updating our estimate of $\kappa^*$
as we collect additional samples
instead of committing to a value after exploration.
The data is collected in batches.
OMS-ETG (Figure~\ref{fig:policy-algorithm-greedy}) is characterized 
by a batch fraction $s \in (0, 1)$.
The algorithm runs for $J = \frac{1}{s}$ rounds 
and we collect $b = Ts$ samples in each round.
In the first round, we explore 
and thus $\kappa_{b} = \ctrSim$.
After every round $j \in [J - 1]$, 
we estimate $\widehat{\theta}_{bj}$ 
and the oracle selection ratio: 
$\widehat{k}_{bj} = \arg\min_{\kappa \in \ChoiceSimplex} V(\widehat{\theta}_{bj}, \kappa)$.
The feasible region for round $j+1$ 
(the set of values of $\kappa_{b(j+1)}$ can take) 
is $\Tilde{\Delta}_{j+1}(\kappa_{bj}) = \left\{ \frac{j \kappa_{bj} + \kappa}{j + 1} : \kappa \in \ChoiceSimplex  \right\}$ 
(proof in Appendix~\ref{sec:appendix-feasibility-region}).
In round $(j+1)$, we (greedily) collect samples 
such that $\kappa_{b(j+1)}$ 
is as close to $\widehat{k}_{j+1}$ as possible:
$\kappa_{b(j+1)} = \text{proj}\left( \widehat{k}_{j}, \Tilde{\Delta}_{j+1}(\kappa_{bj}) \right)$.

Theorem~\ref{thm:regret-etg} states sufficient conditions for when OMS-ETG has zero regret.
We first state a finite-sample tail bound for the two-step GMM estimator under adaptively collected
(non-i.i.d.)
data in Lemma~\ref{lemma:gmm-conc-inequality} (which might be of independent interest) and use it to prove the theorem.

\begin{property}[Concentration]\label{property:concentration}
Let $\Tilde{a}_i(\theta) := \Tilde{a}(X_i; \theta) \in \R$ with $X_i$ sampled i.i.d., $\Tilde{a}_{*}(\theta) = \E\left[ \Tilde{a}(X_i; \theta) \right]$,
$A(X_i, \theta) = \frac{\partial \Tilde{a}(X_i; \theta)}{\partial \theta}$,
$u_i(\eta) = \sup_{\theta, \theta' \in \Theta, \left\| \theta - \theta' \right\| \leq \eta} \left| \Tilde{a}_i(\theta) - \Tilde{a}_i(\theta') \right|$, and 
$u_{*}(\eta) = \E\left[ u_i(\eta) \right]$.
Let $L_1, \eta_0$, and $A_0$ be some positive constants.
We say that $\Tilde{a}_i(\theta)$ satisfies the Concentration property if
(i) $\Tilde{a}_{*}(\theta)$ is $L_1$-Lipschitz,
(ii) $\forall \theta \in \Theta, \,\, \left[\Tilde{a}_i(\theta) - \Tilde{a}_{*}(\theta)\right]$ is sub-Exponential,
(iii) $\E\left[ \left\| A(X_i, \theta) \right\| \right] < A_0 < \infty$, and
(iv) one of the following two conditions hold: (a) $\forall \eta \in (0, \eta_0), \, \left[u_i(\eta) - u_{*}(\eta)\right]$ is sub-Exponential,
or (b) $\sup_{\theta \in \Theta} \left\| A(X_i, \theta) \right\|$ is sub-Exponential.
\end{property}

\begin{remark*}
Property~\ref{property:concentration} is used to derive a uniform law (see Lemma~\ref{lemma:apdx-uniform-law-dependent-data}) that is used to prove Lemma~\ref{lemma:gmm-conc-inequality}.
Property~\ref{property:concentration}(iv) might be hard to check but (iv)(a) is satisfied for bounded function classes, i.e., when $\|\Tilde{a}_i\|_\infty < A < \infty$ (see Proposition~\ref{prop:apdx-boundedness-and-condition-conc}) and (iv)(b) for linear function classes with sub-Exponential data (see Proposition~\ref{prop:apdx-ulln-linearity}).
For the linear IV model in Example~\ref{example:iv-graph}, Property~\ref{property:concentration}(iv) would hold if $Z^2_i$ is sub-Exponential (e.g., when $Z_i$ is sub-Gaussian).
\end{remark*}

\begin{lemma}[GMM concentration inequality]\label{lemma:gmm-conc-inequality}
Let $\lambda_{*}, C_0, \eta_1, \eta_2$, and $\delta_0$ be some positive constants.
Suppose that (i) Assumption~\ref{assump:standard-gmm} holds;
(ii) $\forall j, \, \Tilde{g}_{i, j}(\theta)$ satisfies Property~\ref{property:concentration};
(iii) The spectral norm of the GMM weight matrix $\widehat{W}$ is upper bounded with high probability: $\forall \delta \in \left(0, C_0 \right), \,\, \P \left( \|\widehat{W} \| \leq \lambda_* \right) \geq 1 - \frac{1}{\delta^{D}} \exp\left\{ - \OM\left( T \delta^2 \right) \right\}$ (see Remark~\ref{remark:weight-matrix-concentration-condition});
(iv) (Local strict convexity) 
$ \forall \theta \in N_{\eta_1}(\theta^*), \,\, \P \left( \left\| \frac{\partial^2 \Bar{Q}}{\partial \theta^2} (\theta)^{-1} \right\| \leq h \right) = 1$
($\Bar{Q}(\theta)$ is defined in Assumption~\ref{assump:standard-gmm}a);
(v) (Strict minimization)
$\forall \theta \in N_{\eta_2}(\theta^*)$, there is a unique minimizer $\kappa(\theta) = \arg\min_{\kappa} V(\theta, \kappa)$ s.t. $V(\theta, \kappa) - V(\theta, \kappa(\theta)) \leq c \delta^2 \implies \|\kappa - \kappa(\theta)\| \leq \delta$; and
(vi) $\sup_{\kappa} |V(\theta, \kappa) - V(\theta', \kappa)| \leq L \| \theta - \theta' \|$.
Then, for $\widehat{k}_t = \arg\min_{\kappa \in \ChoiceSimplex} V(\widehat{\theta}^{(\pi)}_T, \kappa)$,
any policy $\pi$, and $\forall \delta \in (0, \delta_0)$,
\begin{align*}
    \P\left( \left\| \widehat{\theta}^{(\pi)}_T - \theta^* \right\| > \delta \right) < \frac{1}{\delta^{2D}} \exp\left\{ - \OM\left( T \delta^4 \right) \right\} \,\, \text{and} \,\, \P\left( \left\| \widehat{k}_T - \kappa^* \right\| > \delta \right) < \frac{1}{\delta^{4D}} \exp\left\{ - \OM \left( T \delta^8 \right) \right\}.
\end{align*}
Better rates for $\widehat{k}_T$ are applicable under additional restrictions on $\theta^*$ (see Lemma~\ref{lemma:apdx-k-hat-concentration-faster}).
\end{lemma}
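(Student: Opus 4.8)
The plan is to convert both tail bounds into a single uniform deviation statement about the empirical moment process and then propagate it through the GMM objective and the minimizer map $\theta\mapsto\kappa(\theta)$. Write $\bar g_T(\theta)=\frac1T\sum_{t=1}^T g_t(\theta)$ and observe that, since $s_t=\pi_t(H_{t-1})$ is $H_{t-1}$-measurable while $\tilde g_t(\theta)$ is i.i.d.\ with mean $\E[\tilde g_t(\theta)]$, the quantity $g^*_T(\theta)=\frac1T\sum_{t=1}^T m(s_t)\otimes\E[\tilde g_t(\theta)]$ is exactly the predictable conditional mean of $\bar g_T(\theta)$. Hence $\Delta_T(\theta):=\bar g_T(\theta)-g^*_T(\theta)$ is an average of martingale differences, and the first step is to invoke the uniform martingale law of large numbers (Lemma~\ref{lemma:apdx-uniform-law-dependent-data}), which under Property~\ref{property:concentration} gives a bound of the form
\begin{align*}
    \P\Big(\sup_{\theta\in\Theta}\|\Delta_T(\theta)\|>\epsilon\Big)\ \lesssim\ \tfrac{1}{\epsilon^{D}}\exp\!\big\{-\OM(T\epsilon^2)\big\}.
\end{align*}
The sub-Exponential hypothesis yields the Bernstein-type $\exp\{-\OM(T\epsilon^2)\}$ rate for each fixed $\theta$, while Lipschitz continuity of $\theta\mapsto\E[\tilde g_t(\theta)]$ and Property~\ref{property:concentration}(iv) let a covering of the compact $D$-dimensional $\Theta$ at scale $\epsilon$ produce the $\epsilon^{-D}$ prefactor.

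Next I would transfer this to the objective. Since $\widehat Q_T(\theta)=\bar g_T(\theta)^\top\widehat W\,\bar g_T(\theta)$ and $\bar Q(\theta)=g^*_T(\theta)^\top\widehat W\,g^*_T(\theta)$ share the same weight matrix, on the event $\{\|\widehat W\|\le\lambda_*\}\cap\{\sup_\theta\|\Delta_T\|\le\epsilon\}$ a two-term expansion gives $\sup_\theta|\widehat Q_T(\theta)-\bar Q(\theta)|\le\lambda_*(2B\epsilon+\epsilon^2)\lesssim\lambda_*\epsilon$, where $B:=\sup_\theta\|g^*_T(\theta)\|$ is finite by Assumption~\ref{assump:standard-gmm}(d) and compactness. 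To turn objective closeness into parameter closeness I use a sandwich: because $\widehat\theta_T$ minimizes $\widehat Q_T$ and $\theta^*$ is the zero-minimizer of $\bar Q$ (Assumption~\ref{assump:standard-gmm}a), $\bar Q(\widehat\theta_T)\le\widehat Q_T(\widehat\theta_T)+\rho\le\widehat Q_T(\theta^*)+\rho\le 2\rho$ with $\rho:=\sup_\theta|\widehat Q_T-\bar Q|$. Local strict convexity (condition (iv)) gives quadratic growth $\bar Q(\theta)\ge\frac{1}{2h}\|\theta-\theta^*\|^2$ on $N_{\eta_1}(\theta^*)$, so $\|\widehat\theta_T-\theta^*\|\lesssim\sqrt{\rho}\lesssim\sqrt{\epsilon}$. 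Choosing $\epsilon\asymp\delta^2$ and absorbing the (smaller) weight-matrix tail from (iii) yields the first claim $\P(\|\widehat\theta_T-\theta^*\|>\delta)<\delta^{-2D}\exp\{-\OM(T\delta^4)\}$.

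For the selection-ratio bound I reduce to the parameter bound through a perturbed-minimizer argument. With $\kappa(\cdot):=\arg\min_\kappa V(\cdot,\kappa)$, so that $\kappa(\theta^*)=\kappa^*$ and $\kappa(\widehat\theta_T)=\widehat k_T$, the uniform Lipschitz condition (vi) gives
\begin{align*}
    V(\theta^*,\widehat k_T)-V(\theta^*,\kappa^*)\ \le\ 2L\,\|\widehat\theta_T-\theta^*\|,
\end{align*}
since the cross term $V(\widehat\theta_T,\widehat k_T)-V(\widehat\theta_T,\kappa^*)\le0$ by optimality of $\widehat k_T$. Feeding this into the strict-minimization condition (v) at $\theta^*$ converts the suboptimality gap into the distance bound $\|\widehat k_T-\kappa^*\|\lesssim\sqrt{\|\widehat\theta_T-\theta^*\|}$. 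Hence $\{\|\widehat k_T-\kappa^*\|>\delta\}\subseteq\{\|\widehat\theta_T-\theta^*\|>c\delta^2\}$, and substituting $\delta\mapsto c\delta^2$ into the first bound gives $\delta^{-4D}\exp\{-\OM(T\delta^8)\}$, as claimed.

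The main obstacle is the uniform martingale law underlying the first display: the non-i.i.d.\ data rules out a standard empirical-process argument, so the bound must be assembled from sub-Exponential martingale (Bernstein) concentration together with a chaining/covering step whose modulus of continuity is controlled by Property~\ref{property:concentration}(iv); this is exactly the content I would isolate as Lemma~\ref{lemma:apdx-uniform-law-dependent-data}. A secondary delicate point is the global-to-local reduction in the sandwich: condition (iv) only supplies quadratic growth \emph{inside} $N_{\eta_1}(\theta^*)$, so I must separately use a finite-sample strengthening of Assumption~\ref{assump:standard-gmm}a---namely $\inf_{\theta\notin N_{\eta_1}(\theta^*)}\bar Q(\theta)\ge q_0>0$ with high probability---to guarantee that a small objective gap $2\rho<q_0$ forces $\widehat\theta_T$ into the region where strict convexity applies.
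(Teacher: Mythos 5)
Your proposal matches the paper's proof essentially step for step: the same uniform martingale law (the paper's Lemma~\ref{lemma:apdx-uniform-law-dependent-data}, assembled from sub-Exponential MDS concentration plus a covering argument under Property~\ref{property:concentration}), the same transfer to the objective using the high-probability bound on $\|\widehat{W}\|$, the same sandwich-plus-local-curvature step converting an objective gap of order $\delta^2$ into $\|\widehat{\theta}_T - \theta^*\| \lesssim \delta$, and the same argmin-perturbation argument through Conditions~(v)--(vi) yielding the $\delta \mapsto \delta^2$ substitution and the stated $\frac{1}{\delta^{2D}}\exp\{-\OM(T\delta^4)\}$ and $\frac{1}{\delta^{4D}}\exp\{-\OM(T\delta^8)\}$ rates. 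Your closing caveat about needing a high-probability separation bound $\inf_{\theta \notin N_{\eta_1}(\theta^*)} \Bar{Q}(\theta) \geq q_0 > 0$ to force $\widehat{\theta}_T$ into the region of quadratic growth is exactly the point the paper handles by asserting the existence of a $\gamma$-ball outside of which $\Bar{Q}$ exceeds $\sup_{\theta \in N_\gamma(\theta^*)} \Bar{Q}(\theta)$, so you are if anything slightly more explicit there.
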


To derive the tail bound for $\widehat{\theta}_T$, we first prove that the minimized empirical GMM objective is close to $\Bar{Q}(\theta^*)$ with high probability (w.h.p.) (using Conditions~(i)-(iii)).
Then we show that $\widehat{\theta}_T$ is close to $\theta^*$ w.h.p. (using Condition~(iv)).
Next, we use the inequality for $\widehat{\theta}_T$ to derive the tail bound for $\widehat{k}$.
For this, we show that $V(\theta^*, \widehat{k})$ is close to $V(\theta^*, \kappa^*)$ (using Condition~(vi)) and then show that $\widehat{k}$ is close to $\kappa^*$ w.h.p. (using Condition~(v)).

\begin{theorem}[Regret of OMS-ETG]\label{thm:regret-etg}
Suppose that Conditions (i)-(iv) of Proposition~\ref{prop:asymptotic-normality} hold.
Let $\Tilde{\Delta}(s) = \{ s \kappa_b + (1-s)\kappa : \kappa \in \ChoiceSimplex \}$.
Case (a): For a fixed $s \in (0, 1)$, if the oracle selection ratio $\kappa^* \in \Tilde{\Delta}(s)$, then the regret converges to zero: $R_\infty(\pi_{\text{ETG}}) = 0$.
If $\kappa^* \notin \Tilde{\Delta}(s)$, then $R_\infty(\pi_{\text{ETG}}) > 0$ (non-zero regret).
Case (b): Now also suppose that the conditions for Lemma~\ref{lemma:gmm-conc-inequality} hold.
If $s = C T^{\eta - 1}$ for some constant $C$ and any $\eta \in [0, 1)$, then $\forall \theta^* \in \Theta, \,\, R_\infty(\pi_{\text{ETG}}) = 0$.
\end{theorem}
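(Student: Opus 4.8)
The plan is to reduce the regret computation to establishing convergence in probability of the selection ratio $\kappa_T$ to a constant, and then to identify that constant. By Proposition~\ref{prop:asymptotic-normality}, whenever $\kappa_T^{(\pi_{\text{ETG}})} \ConvProb k$ for some constant $k \in \ChoiceSimplex$, the final two-step estimator satisfies $\sqrt{T}(\widehat{\beta}_T - \beta^*) \overset{d}{\rightarrow} \mathcal{N}(0, V(\theta^*, k))$, so that $\Amse = V(\theta^*, k)$ and hence $R_\infty(\pi_{\text{ETG}}) = V(\theta^*, k) - V(\theta^*, \kappa^*)$. By Assumption~\ref{assump:kappa-star-identify} this difference is zero exactly when $k = \kappa^*$ and strictly positive otherwise. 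Thus the theorem follows once we pin down the probabilistic limit of $\kappa_T$ in each regime. Throughout I will use that the running cumulative ratio obeys the recursion $\kappa_{b(j+1)} = \text{proj}(\widehat{k}_{bj}, \Tilde{\Delta}_{j+1}(\kappa_{bj}))$ with $\kappa_b = \ctrSim$, that it is the round-average $\kappa_{bJ} = \frac{1}{J}\sum_{i=1}^J \rho_i$ of the within-round ratios $\rho_i \in \ChoiceSimplex$, and that the set of achievable final ratios is exactly $\Tilde{\Delta}(s)$.

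For Case~(a) the number of rounds $J = 1/s$ is fixed while each batch size $b = \lfloor Ts \rfloor \to \infty$. First I would invoke Proposition~\ref{prop:consistency} inside each of the finitely many rounds to get $\widehat{\theta}_{bj} \ConvProb \theta^*$, and then use continuity of $V(\cdot,\cdot)$ together with the unique-minimizer Assumption~\ref{assump:kappa-star-identify} to conclude that the plug-in map $\theta \mapsto \kappa(\theta)$ is continuous at $\theta^*$, giving $\widehat{k}_{bj} \ConvProb \kappa^*$ for every fixed $j$. Because $J$ is fixed, the recursion above is a composition of continuous (almost-everywhere) maps of the estimates, so $\kappa_T$ converges in probability to the deterministic greedy trajectory evaluated at $\kappa^*$, which lies in $\Tilde{\Delta}(s)$. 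When $\kappa^* \in \Tilde{\Delta}(s)$, the greedy overcorrection drives this trajectory to $\kappa^*$, which is absorbing since $\kappa^* \in \Tilde{\Delta}_{j+1}(\kappa^*)$ (take $\kappa = \kappa^*$); hence $\kappa_T \ConvProb \kappa^*$ and the regret vanishes. When $\kappa^* \notin \Tilde{\Delta}(s)$, the iterate is confined to the closed set $\Tilde{\Delta}(s)$, so its limit $\bar{\kappa} \neq \kappa^*$ and $R_\infty = V(\theta^*, \bar{\kappa}) - V(\theta^*, \kappa^*) > 0$.

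For Case~(b), with $s = C T^{\eta - 1}$, both the batch size $b = C T^{\eta}$ and the number of rounds $J = C^{-1} T^{1 - \eta} \to \infty$ behave nontrivially, and the feasible region $\Tilde{\Delta}(s) = \{ s\,\ctrSim + (1-s)\kappa : \kappa \in \ChoiceSimplex \} \to \ChoiceSimplex$, so $\kappa^* \in \Tilde{\Delta}(s)$ for all large $T$ and every $\theta^* \in \Theta$. The finite-$J$ continuity argument no longer applies, so I would instead use the finite-sample tail bound of Lemma~\ref{lemma:gmm-conc-inequality}: at round $j$ the cumulative estimator rests on $bj$ adaptively collected samples, giving $\P(\|\widehat{k}_{bj} - \kappa^*\| > \delta) < \delta^{-4D}\exp\{-\OM(bj\,\delta^8)\}$. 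The strategy is to split the rounds into an early block whose total sample count is a vanishing fraction of $T$, so its selections perturb the running average $\kappa_T$ by $o(1)$ no matter how wrong they are, and a late block where the tail bound forces $\|\widehat{k}_{bj} - \kappa^*\|$ to be uniformly small with high probability. Combining the diluted early contribution with the concentrated late contribution via the averaging identity and the nonexpansiveness of the (convex) simplex projection yields $\kappa_T \ConvProb \kappa^*$, whence $R_\infty(\pi_{\text{ETG}}) = 0$ for every $\theta^*$.

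The main obstacle is the late-block control in Case~(b): because the selections are adaptive the sample moments are not i.i.d.\ and the estimates $\widehat{k}_{bj}$ are dependent across rounds, so a naive union bound over the $J \to \infty$ rounds is both technically delicate (it requires the martingale-based uniform concentration underlying Lemma~\ref{lemma:gmm-conc-inequality}) and quantitatively usable only if the exponential rate $bj\,\delta^8$ is balanced against the growth of $J$ and the shrinking step geometry. Making rigorous that the constrained greedy recursion \emph{contracts} the running average toward $\kappa^*$ — rather than merely tracking $\widehat{k}_{bj}$ with an error that could accumulate over the growing horizon — is the crux, and is precisely where the dilution of the early rounds together with the $\tfrac{1}{j+1}$-scale step sizes $\kappa_{b(j+1)} - \kappa_{bj} = \tfrac{1}{j+1}(\rho_{j+1} - \kappa_{bj})$ must be quantified carefully.
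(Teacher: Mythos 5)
Your proposal is correct and follows essentially the paper's own route: reduce the regret claim to showing $\kappa_T \ConvProb \kappa^*$ and invoke Proposition~\ref{prop:asymptotic-normality}, handle Case~(a) via per-round consistency of $\widehat{k}_{bj}$ over the finitely many rounds (the paper's Lemma~\ref{lemma:apdx-kappa-hat-consistency} plus the greedy absorbing-trajectory argument), and handle Case~(b) via a union bound over rounds using the tail bound of Lemma~\ref{lemma:gmm-conc-inequality} while discarding an asymptotically negligible early block of rounds---exactly the $f = T^{\gamma-1}$ device the paper uses for $s = C/T$ and remarks can be applied uniformly for all $\eta \in [0,1)$. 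One harmless slip: for $\kappa^*$ on the boundary of $\ChoiceSimplex$ the claim that $\kappa^* \in \Tilde{\Delta}(s)$ for all large $T$ is false, since $\Tilde{\Delta}(s)$ is a strictly shrunken simplex for any $s > 0$; only $\mathrm{dist}\left(\kappa^*, \Tilde{\Delta}(s)\right) \to 0$ is needed, and your dilution-plus-averaging argument already supplies this.
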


Similar to OMS-ETC, Case~(a) of the theorem shows that if the batch size is a constant fixed fraction, some values of $\kappa^*$ will be outside the feasible region and thus we do not get zero regret over the entire parameter space.
Case~(b) shows that to get zero asymptotic regret for every $\theta^* \in \Theta$, $s$ must depend on $T$ and be asymptotically negligible.
But unlike OMS-ETC, the estimate of $\kappa^*$ is updated after every round.
The batch fraction $s$ can be as small as $C T^{-1}$ allowing the agent to collect a constant number of samples in each batch.

We prove the theorem by showing that $\kappa_T \ConvProb \kappa^*$ and applying Proposition~\ref{prop:asymptotic-normality}.
To do so for Case~(b),
we show that
the estimated oracle ratio after \textit{every} round is
close to $\kappa^*$, i.e.,
$\forall \epsilon > 0, \,\, \P\left( \forall j \in [J - 1], \,\, \widehat{k}_{bj} \in N_{\epsilon}(\kappa^*) \right) \xrightarrow[T \to \infty]{} 1$ (by using Lemma~\ref{lemma:gmm-conc-inequality}).
Since we move as close as possible to $\widehat{k}_{bj}$ after every round,
this ensures that $\forall \epsilon >0, \, \P \left( \kappa_T \in N_{\epsilon}(\kappa^*) \right) \rightarrow 1$ (and thus $\kappa_T \ConvProb \kappa^*$).

Both OMS-ETC and OMS-ETG are asymptotically equivalent as they can achieve zero regret for every $\theta^* \in \Theta$.
But our experiments show that OMS-ETG outperforms OMS-ETC
(in terms of regret)
for small sample sizes.
This may be because with OMS-ETG, the estimate of $\kappa^*$ keeps improving as more samples are collected instead of being fixed after exploration. 
This suggests that better estimates of $\kappa^*$ may lead
to higher-order reductions in MSE.

\begin{remark}[Weight matrix $\widehat{W}$]\label{remark:weight-matrix-concentration-condition}
For OMS-ETG, the GMM weight matrix $\widehat{W}$ needs to satisfy Condition~(iii) of Lemma~\ref{lemma:gmm-conc-inequality} till round $(J-1)$. We denote this matrix by $\widehat{W}_{\text{valid}}$ in Figure~\ref{fig:policy-algorithm-greedy}.
For the final step of OMS-ETG, we use the standard efficient two-step GMM weight matrix (denoted by $\widehat{W}_{\text{efficient}}$ in Figure~\ref{fig:policy-algorithm-greedy}) and thus the final estimate of $\theta^*$ is still asymptotically efficient.
Condition~(iii) of Lemma~\ref{lemma:gmm-conc-inequality} would hold
for the efficient two-step GMM weight matrix (i.e., for $\widehat{W} := \left[\widehat{\Omega}_T(\widehat{\theta}^{(\text{os})}_T)\right]^{-1}$)
if $\,\forall (j, k), \, [\Tilde{g}_{i, j}(\theta) \Tilde{g}_{i, k}(\theta)]$ satisfies Property~\ref{property:concentration} (see Lemma~\ref{lemma:apdx-sufficient-weight-matrix-bounded-condition} for proof).
This would hold if the moments $\Tilde{g}_{i,j}(\theta)$ are uniformly bounded.
Condition~(iii) can also be satisfied with a regularized weight matrix: $\widehat{W} := \left[\widehat{\Omega}_T(\widehat{\theta}^{(\text{os})}_T) + \lambda_W I \right]^{-1}$ for some
$\lambda_W > 0$
as this ensures that $\| \widehat{W} \| \leq \lambda_W^{-1}$.
\end{remark}

\paragraph{Additional exploration.}
MAB algorithms usually require additional exploration to perform well (e.g., $\epsilon$-greedy and upper confidence bound strategies \citep[Chapter~7]{lattimore2020bandit}). However, we empirically noticed that additional exploration hurts performance in our setup.
This might be
because unlike 
typical bandit setups
where pulling one arm does not improve the estimates of another arm,
querying any data source can improve the estimates of the model parameter $\theta^*$ in our setup.

\section{Incorporating a Cost Structure}
\label{sec:cost-structure}
In many real-world settings, the agent has a budget constraint and 
must pay a different cost to query each data source.  
We adapt OMS-ETC and OMS-ETG to this setting where a cost structure is associated with the data sources in $\psi$. We prove that these policies still have zero asymptotic regret for every $\theta^* \in \Theta$.
Let $\left(\psi_i\right)_{i \in [\CardC]}$ be an indexed family.
We denote the (known) budget by $B \in \mathbb{N}$ and by $c \in \R^{\CardC}_{>0}$, a cost vector such that $c_i$ is the cost of querying data source $\psi_i$.
Due to the cost structure, the horizon $T$ is a random variable dependent on $\pi$ with $T =
\floor*{\frac{B}{\kappa^\top_T c}}$.
The setting in Section~\ref{sec:adaptive-data-collection} is a special case of this formulation when $\forall i, c_i = 1$ and $B = T$. We defer proofs to Appendix~\ref{sec:appendix-cost-structure}.

For a fixed policy $\pi_k$, we have $\kappa^{(\pi_k)}_T = k$, for some constant $k \in \ChoiceSimplex$. By Proposition~\ref{prop:asymptotic-normality}, as $B \to \infty$, we have
$\sqrt{B} \left(\widehat{\beta}^{(\pi_k)}_T - \beta^* \right) \xrightarrow{d} \mathcal{N}\left(0, V(\theta^*, k) \left( k^\top c \right) \right)$.
Here we scale by $\sqrt{B}$ instead of $\sqrt{T}$ to make comparisons across policies meaningful.
The \textit{oracle selection ratio} is now defined as $\kappa^* = \arg\min_{\kappa \in \ChoiceSimplex} \left[V(\theta^*, \kappa) \left( \kappa^\top c \right)\right]$ and the \textit{asymptotic regret} of policy $\pi$ now is
\begin{align*}
    R_\infty(\pi) = \Amse\left( \sqrt{B} \left(\widehat{\beta}^{(\pi)}_T - \beta^* \right) \right) - V(\theta^*, \kappa^*) \left((\kappa^*)^\top c \right).
\end{align*}
OMS-ETC-CS (\textit{OMS-ETC with cost structure}) is an adaptation of OMS-ETC for this setting.
We use $Be$ budget to explore and estimate $\kappa^*$ by $\widehat{k} = \arg\min_{\kappa \in \ChoiceSimplex} \left[V(\widehat{\theta}_{T_e}, \kappa) \left( \kappa^\top c \right)\right]$, where $T_e = \floor*{\frac{eB}{\kappa^\top_{T_e} c }}$ and $\kappa_{T_e} = \ctrSim$.
Exploration strategies that utilize the cost structure 
can also be used 
(e.g., evenly dividing the budget across data sources while exploring).
With the remaining budget, we collect samples such that $\kappa_T = \text{proj}\left( \widehat{k}, \Tilde{\Delta} \right)$, where $\Tilde{\Delta}$ is the feasibility region (expression given in Appendix~\ref{sec:appendix-feasibility-region}). The following proposition shows that OMS-ETC-CS can achieve zero regret.
\begin{proposition}[Regret of OMS-ETC-CS]\label{prop:regret-etc-cost-structure}
Suppose that the conditions of Theorem~\ref{thm:etc-regret} hold. 
If $e = o(1)$ such that $Be \rightarrow \infty$ as $B \to \infty$, then
$\forall \theta^* \in \Theta, \, R_{\infty}(\pi_{\text{ETC-CS}}) = 0$.
\end{proposition}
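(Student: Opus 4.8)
The plan is to reuse the logic behind Case~(b) of Theorem~\ref{thm:etc-regret}, adapted to the cost-weighted criterion and the $\sqrt{B}$ scaling: I would show that the realized selection ratio converges in probability to the cost-weighted oracle ratio $\kappa^*$, apply Proposition~\ref{prop:asymptotic-normality}, and then convert the resulting $\sqrt{T}$-asymptotic normality to the $\sqrt{B}$ scale used in the regret definition. First I would control the exploration phase. Since every $\kappa \in \ChoiceSimplex$ gives $\kappa^\top c \in [\min_i c_i, \max_i c_i] \subset (0, \infty)$, the number of exploration samples $T_e = \floor*{eB/(\kappa_{T_e}^\top c)}$ with $\kappa_{T_e} = \ctrSim$ diverges whenever $Be \to \infty$, so by Proposition~\ref{prop:consistency} we have $\widehat{\theta}_{T_e} \ConvProb \theta^*$. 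Because the cost-weighted objective $V(\theta, \kappa)(\kappa^\top c)$ is continuous in $(\theta, \kappa)$ and, by Assumption~\ref{assump:kappa-star-identify} applied to this criterion, is uniquely minimized at $\kappa^*$ when $\theta = \theta^*$, a standard argmin-continuity argument yields $\widehat{k} = \arg\min_{\kappa} V(\widehat{\theta}_{T_e}, \kappa)(\kappa^\top c) \ConvProb \kappa^*$.

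Next I would argue the feasible region asymptotically covers the simplex. The fraction of total samples consumed by exploration is $T_e / T$, which is of order $e \cdot (\kappa_T^\top c)/(\kappa_{T_e}^\top c)$; since $e = o(1)$ and the cost ratio is bounded above and below, $T_e/T \to 0$ and hence $\Tilde{\Delta} \to \ChoiceSimplex$. Combining this with $\widehat{k} \ConvProb \kappa^*$ gives $\kappa_T = \text{proj}(\widehat{k}, \Tilde{\Delta}) \ConvProb \kappa^*$: for any $\epsilon > 0$, with probability tending to one $\widehat{k}$ lies within $\epsilon$ of $\kappa^*$ and an $\epsilon$-ball around $\kappa^*$ is eventually contained in $\Tilde{\Delta}$, so the projection places $\kappa_T$ within $O(\epsilon)$ of $\kappa^*$.

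I would then convert the scaling. Since $\kappa_T \ConvProb \kappa^*$, Proposition~\ref{prop:asymptotic-normality} gives $\sqrt{T}(\widehat{\beta}_T - \beta^*) \xrightarrow{d} \mathcal{N}(0, V(\theta^*, \kappa^*))$, while $T = \floor*{B/(\kappa_T^\top c)}$ implies $B/T \ConvProb (\kappa^*)^\top c$. Writing $\sqrt{B}(\widehat{\beta}_T - \beta^*) = \sqrt{B/T}\,\sqrt{T}(\widehat{\beta}_T - \beta^*)$ and applying Slutsky's theorem yields $\sqrt{B}(\widehat{\beta}_T - \beta^*) \xrightarrow{d} \mathcal{N}\!\left(0, V(\theta^*, \kappa^*)\,(\kappa^*)^\top c\right)$, whose $\Amse$ is exactly the oracle term subtracted in the cost-structure regret, giving $R_\infty(\pi_{\text{ETC-CS}}) = 0$.

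The step I expect to be the main obstacle is this last scaling conversion, because the horizon $T$ is a data-dependent (stopping-time-like) random variable rather than the deterministic sample count for which Proposition~\ref{prop:asymptotic-normality} is stated. Making the argument rigorous requires an Anscombe-type random-time central limit argument: one must verify that $T$ concentrates sharply around $B/((\kappa^*)^\top c) \to \infty$ so that the sample moments accumulated up to the random time $T$ behave like those at a nearby deterministic time, after which the Slutsky conversion of the random scaling factor $\sqrt{B/T}$ goes through cleanly.
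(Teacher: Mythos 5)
Your proof follows the same route as the paper's own (quite terse) proof: establish $\widehat{k} \ConvProb \kappa^*$ using the fact that the exploration horizon $T_e = \floor*{Be/(\kappa_{T_e}^\top c)}$ is deterministic (since $\kappa_{T_e} = \ctrSim$ is fixed) and diverges under $Be \to \infty$, note that $e = o(1)$ forces the feasible region $\Tilde{\Delta}$ to cover $\ChoiceSimplex$ asymptotically so that $\kappa_T = \text{proj}(\widehat{k}, \Tilde{\Delta}) \ConvProb \kappa^*$, and then apply Proposition~\ref{prop:asymptotic-normality} with the Slutsky conversion $\sqrt{B}(\widehat{\beta}_T - \beta^*) = \sqrt{B/T}\,\sqrt{T}(\widehat{\beta}_T - \beta^*)$ and $B/T \ConvProb (\kappa^*)^\top c$ to land on the cost-weighted oracle variance $V(\theta^*, \kappa^*)\,(\kappa^*)^\top c$.

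The one genuine difference is at the final step, and it cuts in your favor. The paper's proof ends at $\kappa_T \ConvProb \kappa^*$ and invokes Proposition~\ref{prop:asymptotic-normality} without comment, silently treating the horizon $T = \floor*{B/(\kappa_T^\top c)}$ as deterministic even though the paper itself notes earlier that $T$ is a random variable under a cost structure. You correctly flag that the proposition is stated for deterministic sample counts, so some random-time justification is needed. For this particular policy the repair is actually lighter than full Anscombe machinery: once exploration ends, $\widehat{k}$ --- and hence the commit-phase ratio and the realized horizon $T$ --- are measurable with respect to the exploration history $H_{T_e}$, so one can condition on $H_{T_e}$ restricted to the high-probability event $\widehat{k} \in N_{\epsilon}(\kappa^*)$, on which $T$ is conditionally deterministic and of order $B/((\kappa^*)^\top c)$, apply the martingale CLT conditionally (the $T_e$ exploration summands, now constants, contribute $O_p(\sqrt{T_e/T}) = o_p(1)$ after the $\sqrt{T}$ normalization), and integrate out. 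Alternatively your Anscombe-type route also works, since $T \big/ \left( B/((\kappa^*)^\top c) \right) \ConvProb 1$ and martingale partial sums satisfy the requisite uniform continuity in probability. Either way, your argument is complete where the paper's is merely implicit.
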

We propose two ways of adapting OMS-ETG to this setting: (i) OMS-ETG-FS (\textit{OMS-ETG with fixed samples}) where we collect a fixed number of samples in every round and (ii) OMS-ETG-FB (\textit{OMS-ETG with fixed budget}) where we spend a fixed fraction of the budget in every round.

Let $c_{\max} = \max_{i \in [\CardC]} c_i$ and $c_{\min} = \min_{i \in [\CardC]} c_i$. In OMS-ETC-FS (Figure~\ref{fig:policy-algorithm-etg-fs}), we collect $b = \floor*{\frac{Bs}{c_{\max}}}$ samples in every round except the last one (the last batch can be smaller as we may not have enough budget left for a full batch).
The number of rounds $J$ is random since, in every round, depending on what we collect, we use up a different amount of the budget.
Like OMS-ETG, after each round, we estimate $\kappa^*$ and greedily collect samples to get as close to it as possible. 

In OMS-ETG-FB (Figure~\ref{fig:policy-algorithm-etg-fb}), we spend $Bs$ budget in each round. Thus the number of rounds $J$ is fixed with $J = \frac{1}{s}$
but the number of samples collected per round is now random.
Like OMS-ETG, after each round, we estimate $\kappa^*$ and collect samples to get as close to the estimate as possible.
The next two Propositions show that both OMS-ETG-FS and OMS-ETC-FB have zero asymptotic regret.

\begin{proposition}[Regret of OMS-ETG-FS]\label{prop:regret-etg-fs}
Suppose that the conditions of Theorem~\ref{thm:regret-etg}b hold.
If $s = B^{\eta - 1}$ for any $\eta \in \left[0, 1\right)$, then
$\forall \theta^* \in \Theta, \, R_{\infty}\left(\pi_{\text{ETG-FS}}\right) = 0$.
\end{proposition}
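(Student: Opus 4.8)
The plan is to reuse the template behind Theorem~\ref{thm:regret-etg}b: reduce the claim to showing that the realized selection ratio converges, $\kappa_T \ConvProb \kappa^*$, and then read off the oracle variance from the cost-adjusted asymptotic normality. The genuinely new difficulty is that the cost structure makes both the horizon $T$ and the number of rounds $J$ random and policy-dependent, so I would first pin them between deterministic envelopes. Since every sample costs between $c_{\min}$ and $c_{\max}$, any realization satisfies $B/c_{\max} \le T \le B/c_{\min}$, whence $T = \Theta(B) \to \infty$; and since each of the first $J-1$ rounds collects exactly $b = \floor{B^{\eta}/c_{\max}}$ samples, we get $J \le T/b + 1 = \OM(B^{1-\eta})$ and also $J \to \infty$ for every $\eta \in [0,1)$.

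Next I would certify that the per-round oracle-ratio estimates are uniformly good. Here the estimate solves the cost-adjusted program $\widehat k_{bj} = \arg\min_{\kappa} V(\widehat\theta_{bj}, \kappa)(\kappa^\top c)$, so I first note that, because $\kappa \mapsto \kappa^\top c$ is a fixed linear map taking values in $[c_{\min}, c_{\max}]$ on $\ChoiceSimplex$, the strict-minimization and Lipschitz-in-$\theta$ hypotheses (v)--(vi) of Lemma~\ref{lemma:gmm-conc-inequality} pass from $V$ to the product $V(\cdot,\cdot)(\cdot^\top c)$ with only rescaled constants. The lemma's tail bound then applies with $t = bj$ samples, giving $\P(\|\widehat k_{bj} - \kappa^*\| > \delta) < \delta^{-4D}\exp\{-\OM(bj\,\delta^8)\}$. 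Taking a union bound over rounds $j \in [j_0, J-1]$ from a slowly growing cutoff $j_0 = \ceil{\log B}$ and using the deterministic envelope $J = \OM(B^{1-\eta})$, the sum $\sum_{j \ge j_0} \delta^{-4D}\exp\{-\OM(bj\,\delta^8)\}$ tends to zero (the cutoff kills the $\eta = 0$ case where $b$ is constant, while $b \to \infty$ handles $\eta > 0$), so $\P(\forall j \in [j_0, J-1],\ \widehat k_{bj} \in N_\epsilon(\kappa^*)) \to 1$.

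Given uniformly good estimates I would invoke the greedy-projection argument of Theorem~\ref{thm:regret-etg}b essentially verbatim, which transfers because OMS-ETG-FS collects a fixed number $b$ of samples per round and therefore obeys the same running-average recursion $\kappa_{b(j+1)} = \text{proj}(\widehat k_{bj}, \Tilde{\Delta}_{j+1}(\kappa_{bj}))$ with $\kappa_{b(j+1)} - \kappa_{bj} = (\rho_{j+1} - \kappa_{bj})/(j+1)$ as in the budget-free case. The mechanism is that once every target $\widehat k_{bj}$ lies in $N_\epsilon(\kappa^*)$ (for $j \ge j_0$), the greedy move drives the average toward $N_\epsilon(\kappa^*)$ with a per-round reach of order $1/(j+1)$; since $\sum_{j \ge j_0} 1/(j+1)$ diverges while the initial gap is at most the simplex diameter, the average enters, and by convexity remains within, an $\OM(\epsilon)$-neighborhood of $\kappa^*$ after $\OM(j_0)$ rounds, while the pre-$j_0$ exploration rounds contribute a vanishing weight $j_0/J \to 0$. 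Hence $\limsup_B \|\kappa_T - \kappa^*\| = \OM(\epsilon)$ with probability tending to one, and letting $\epsilon \downarrow 0$ yields $\kappa_T \ConvProb \kappa^*$.

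Finally I would convert $\kappa_T \ConvProb \kappa^*$ into the variance statement. By the adaptive form of Proposition~\ref{prop:asymptotic-normality}, $\sqrt{T}(\widehat\beta_T - \beta^*) \xrightarrow{d} \mathcal{N}(0, V(\theta^*, \kappa^*))$ for the efficient-weight final estimator; since $T = \floor{B/(\kappa_T^\top c)}$ and $\kappa_T \ConvProb \kappa^*$ give $B/T \ConvProb (\kappa^*)^\top c$, a random-index central limit argument together with Slutsky (equivalently, the cost-structured asymptotic normality invoked for fixed policies in Section~\ref{sec:cost-structure}) yields $\sqrt{B}(\widehat\beta_T - \beta^*) \xrightarrow{d} \mathcal{N}(0, V(\theta^*, \kappa^*)((\kappa^*)^\top c))$, which is exactly the oracle scaling subtracted in the cost-structured regret, so $R_\infty(\pi_{\text{ETG-FS}}) = 0$. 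I expect the main obstacle to be the second step: making the union bound rigorous against a random, policy-dependent number of rounds $J$—this is precisely where the cost structure bites, since it decouples the sample count from the budget and turns $T$ and $J$ into random quantities—so the argument must lean on the deterministic envelope $J = \OM(B^{1-\eta})$ together with the slowly growing cutoff $j_0$ to keep both the tail sum and the exploration weight vanishing.
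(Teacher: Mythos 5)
Your proposal is correct and follows essentially the same route as the paper's own (very terse) proof: the paper simply observes that although the cost structure makes the number of rounds random, $J$ is deterministically sandwiched, $\frac{1}{s} \leq J \leq \frac{c_{\max}}{s\, c_{\min}}$, hence $J \in \OM\left(\frac{1}{s}\right) = \OM\left(B^{1-\eta}\right)$, and then proceeds exactly as in Case~2 of the proof of Theorem~\ref{thm:regret-etg} --- a union bound of the tail inequality from Lemma~\ref{lemma:gmm-conc-inequality} over all rounds past an asymptotically negligible prefix, yielding $\kappa_T \ConvProb \kappa^*$, followed by Proposition~\ref{prop:asymptotic-normality} with the $\sqrt{B}$ scaling. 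Your deterministic envelopes for $T$ and $J$, your cutoff $j_0 = \ceil{\log B}$ (playing the role of the paper's first $Jf$ rounds with $f = T^{\gamma-1}$ in Case~2(b), and handling $\eta = 0$ with constant batch size in the same way), the greedy running-average argument, and the final random-index/Slutsky step to pass from $\sqrt{T}$ to $\sqrt{B}$ scaling are all faithful to the paper's template, and in several places more detailed than what the paper writes down.

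One caveat worth flagging: your claim that Condition~(v) of Lemma~\ref{lemma:gmm-conc-inequality} ``passes from $V$ to the product $V(\cdot,\cdot)(\cdot^\top c)$ with only rescaled constants'' is not a valid implication as stated. The Lipschitz condition (vi) does transfer, since $\kappa \mapsto \kappa^\top c$ is linear and bounded in $[c_{\min}, c_{\max}]$ on $\ChoiceSimplex$. But the strict-minimization condition (v) is a quadratic-growth property of $V$ at \emph{its own} minimizer, and multiplying by a positive linear factor can move, split, or flatten the minimum of the product, so uniqueness and quadratic growth of $V(\theta,\kappa)\left(\kappa^\top c\right)$ at its minimizer do not follow from the corresponding properties of $V$. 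The correct reading --- and the one the paper implicitly adopts, since in Section~\ref{sec:cost-structure} the oracle ratio is redefined as $\kappa^* = \arg\min_{\kappa} \left[V(\theta^*,\kappa)\left(\kappa^\top c\right)\right]$ --- is that the hypotheses of Lemma~\ref{lemma:gmm-conc-inequality} are imposed directly on the cost-adjusted criterion. Since your argument only ever uses (v)--(vi) for that criterion, replacing the claimed transfer by this reading of the hypotheses repairs the step with no other change to your proof.
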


\begin{proposition}[Regret of OMS-ETG-FB]\label{prop:regret-etg-fb}
Suppose that the conditions of Theorem~\ref{thm:regret-etg}b hold.
If $s = B^{\eta - 1}$ for any $\eta \in \left[0, 1\right)$,
then
$\forall \theta^* \in \Theta, \, R_{\infty}\left(\pi_{\text{ETG-FB}}\right) = 0$.
\end{proposition}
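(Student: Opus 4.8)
The plan is to follow the same two-step template used for Theorem~\ref{thm:regret-etg}: first establish that the realized selection ratio converges in probability to the cost-weighted oracle ratio, $\kappa_T \ConvProb \kappa^*$ with $\kappa^* = \arg\min_{\kappa \in \ChoiceSimplex} V(\theta^*, \kappa)(\kappa^\top c)$, and then invoke Proposition~\ref{prop:asymptotic-normality} to obtain $\sqrt{B}(\widehat{\beta}^{(\pi_{\text{ETG-FB}})}_T - \beta^*) \overset{d}{\rightarrow} \mathcal{N}(0, V(\theta^*, \kappa^*)((\kappa^*)^\top c))$, which makes the regret vanish. Since OMS-ETG-FB differs from the cost-free OMS-ETG only in that each round spends a fixed budget $Bs$ rather than collecting a fixed number of samples, the core of the work is to show that this change, and the resulting randomness both in the per-round sample counts and in the horizon $T$, does not break the convergence $\kappa_T \ConvProb \kappa^*$.

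First I would control the randomness introduced by the cost structure. Because $c \in \R^{\CardC}_{>0}$, the per-sample cost $\kappa^\top c$ lies in $[c_{\min}, c_{\max}]$ on the simplex, so a round budget of $Bs = B^\eta$ buys between $B^\eta / c_{\max}$ and $B^\eta / c_{\min}$ samples; hence every round collects $\Theta(B^\eta)$ samples and the cumulative sample size after round $j$ is $\Theta(j B^\eta)$, which grows without bound since $J = 1/s = B^{1-\eta} \to \infty$. The same sandwiching gives $T \in [B/c_{\max}, B/c_{\min}]$, and because $T = \floor{B/(\kappa_T^\top c)}$ is a deterministic function of $\kappa_T$, once $\kappa_T \ConvProb \kappa^*$ we get $T/B \ConvProb 1/((\kappa^*)^\top c)$, which justifies the $\sqrt{B}$-scaling and permits a converging-together application of Proposition~\ref{prop:asymptotic-normality} at the random time $T$.

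Next I would establish per-round concentration of the estimated oracle ratio. The cost-weighted objective $V(\theta, \kappa)(\kappa^\top c)$ inherits the Lipschitz, local strict convexity, and strict minimization properties required by Lemma~\ref{lemma:gmm-conc-inequality}, because $\kappa \mapsto \kappa^\top c$ is linear and bounded in $[c_{\min}, c_{\max}]$ on the simplex; thus the lemma, applied to this objective, yields a tail bound on $\|\widehat{k}_{bj} - \kappa^*\|$ decaying exponentially in the cumulative sample size, i.e., in $j B^\eta$. A union bound over the $J = B^{1-\eta}$ rounds then shows that for any $\epsilon > 0$, with probability tending to one, $\widehat{k}_{bj} \in N_{\epsilon}(\kappa^*)$ simultaneously for all rounds $j$ past some slowly growing index $j_0$; the early rounds, for which the sample size is too small to guarantee concentration, together with the first (uniform) exploration round, contribute only an $o(1)$ fraction of the total samples.

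Finally, I would average these per-round targets. Since $\kappa_T$ is the sample-count-weighted average of the per-round realized ratios, and since after each round we greedily project onto the feasible region $\Tilde{\Delta}_{j+1}$, whose constraint slackens toward the full simplex as $j$ grows because the already-collected mass dominates, the rounds past $j_0$ drive the average to $\kappa^*$ while the negligible early mass cannot perturb it by more than $o(1)$. This gives $\kappa_T \ConvProb \kappa^*$, and Proposition~\ref{prop:asymptotic-normality} then delivers $R_{\infty}(\pi_{\text{ETG-FB}}) = 0$ for every $\theta^* \in \Theta$; the argument is structurally identical to Proposition~\ref{prop:regret-etg-fs}, with fixed-budget/random-samples replacing fixed-samples/random-budget. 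The main obstacle is the coupled randomness: invoking a fixed-sample-size bound (Lemma~\ref{lemma:gmm-conc-inequality}) when both the per-round counts and the horizon are random, and carrying out a union bound over a number of rounds that itself grows like $B^{1-\eta}$ without the accumulated tail probabilities overwhelming the estimate. The delicate step is therefore quantifying how fast $j_0$ may grow and verifying that the pre-$j_0$ sample mass is asymptotically negligible.
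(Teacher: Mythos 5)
Your skeleton matches the paper's: show $\kappa_T \ConvProb \kappa^*$ for the cost-weighted oracle ratio, then invoke Proposition~\ref{prop:asymptotic-normality}, sandwiching per-round sample counts in $[Bs/c_{\max}, Bs/c_{\min}]$ and discarding an asymptotically negligible prefix of rounds. However, there is a genuine gap at exactly the point you flag and then leave unresolved. You propose ``a union bound over the $J = B^{1-\eta}$ rounds'' applied to the tail bound on $\|\widehat{k}_{bj} - \kappa^*\|$ at the cumulative sample sizes after each round. But in OMS-ETG-FB those round-end sample counts $T_j = Bsj/(\kappa_{T_j}^\top c)$ are \emph{random} (they depend on the realized, data-dependent selection ratios through the cost), whereas Lemma~\ref{lemma:gmm-conc-inequality} is a bound at a fixed deterministic time step $t$. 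You cannot substitute the random $T_j$ into that bound, so the union bound over rounds does not go through as written. You correctly identify this coupled randomness as ``the main obstacle'' and ``the delicate step,'' but the proposal never supplies the mechanism that closes it.

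The paper's resolution is a small but essential device: rather than unioning over the $J$ rounds at random times, it unions over \emph{all deterministic integer time steps} $t$ in the deterministic interval $[Jf\, b_{\min}, J b_{\max}]$, where $f = B^{\gamma - 1}$, $b_{\min} = Bs/c_{\max}$, and $b_{\max} = Bs/c_{\min}$. Since every random round-end time $T_j$ with $j > Jf$ is guaranteed to land in this interval, the event $\{\forall t \in [Jf\, b_{\min}, \dots, J b_{\max}] : \|\widehat{k}_t - \kappa^*\| \leq \epsilon\}$ implies the desired event over the random round times, and Lemma~\ref{lemma:gmm-conc-inequality} applies legitimately to each fixed $t$. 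The price is a union over $\OM(B)$ terms instead of $B^{1-\eta}$ terms, but each tail is at most $\exp\{-\OM(B^\gamma \epsilon^8)\}$ (since $t \geq Jf\, b_{\min} = \OM(B^\gamma)$), so the total failure probability $\frac{B}{\epsilon^{4D}}\exp\{-\OM(B^\gamma \epsilon^8)\}$ still vanishes. Adding this uniform-over-$t$ step would complete your argument; your sandwich bounds on $T_j$ already contain the ingredients. One further caution: your claim that the cost-weighted objective $V(\theta,\kappa)(\kappa^\top c)$ ``inherits'' the conditions of Lemma~\ref{lemma:gmm-conc-inequality} is only immediate for the Lipschitz condition (vi), since $\kappa^\top c \in [c_{\min}, c_{\max}]$; the strict-minimization condition (v) concerns curvature of the new objective at its own minimizer and must be assumed for the cost-weighted $V$ rather than deduced from the unweighted one (the paper is implicitly assuming the analogous conditions, so this is a matter of precision rather than a flaw relative to the paper).
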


\section{Experiments}
\label{sec:experiments}
\subsection{Synthetic data}\label{sec:experiments-synthetic-data}
We validate our methods on synthetic data generated from known causal graphs (see Appendix~\ref{sec:appendix-experiments} for parameter values and  moment conditions used).
In our experiments (including the ones in Section~\ref{sec:experiments-semi-synthetic-data}),
for OMS-ETG, we use the regularized GMM weight matrix with $\lambda_W := 0.01$ (see Remark~\ref{remark:weight-matrix-concentration-condition}).
The regret is only minimally affected
with an unregularized matrix ($\lambda_W := 0$)
despite theoretical guarantees only holding for the regularized case 
(the maximum change in regret was $1.98\%$)
and thus our conclusions do not change.
We first simulate data from a linear IV graph (Figure~\ref{fig:disjoint-iv-graph})
and compare the performance of different polices (Figure~\ref{fig:disjoint-iv-regret-curve}).
We use $\psi = \{ \{Z, X\}, \{Z, Y\} \}$ and assume that both sources cost the same.
We set parameter values such that $\kappa^* \approx [0.36, 0.64]^\top$.
We compare policies based on \textit{relative regret (RR)} with respect to
the oracle policy:
\begin{align*}
    \text{Relative regret} = \text{RR}(\pi) = \frac{\text{MSE}^{(\pi)} - \text{MSE}^{(\text{oracle})} }{\text{MSE}^{(\text{oracle})}} \times 100\%.
\end{align*}
The MSE values are computed across $12,000$ runs.
The label \textit{etc}\_$\{x\}$ in the plot refers to OMS-ETC with exploration fraction $e = x$ (e.g., \textit{etc\_0.1} means $e = 10\%$).
Similarly, \textit{etg}\_$\{x\}$ refers to OMS-ETG with $s = x$.
We see that as the horizon increases, the RR of all policies converges to zero.
This supports the claim that both OMS-ETC and OMS-ETG have zero asymptotic regret.
However, when the horizon is small ($T = 300$), both \textit{etc}\_\textit{0.1} and \textit{etc}\_\textit{0.2}
perform poorly 
due to insufficient exploration.
In contrast, OMS-ETG has close to zero RR even for small horizons which shows that repeatedly improving the parameter estimates can lead to faster convergence of regret.

\begin{figure}
\centering
\begin{subfigure}[b]{0.3\textwidth}
\begin{tikzpicture}
    \node[state] (1) {$X$};
    \node[state] (2) [left =of 1] {$Z$};
    \node[state] (3) [right =of 1] {$Y$};
    
    \path (2) edge node[]{} (1);
    \path (1) edge node[]{} (3);
    \path[bidirected] (1) edge[bend left=60] node[el,above]{} (3);
\end{tikzpicture}
\caption{Instrumental variable graph.}
\label{fig:disjoint-iv-graph}
\end{subfigure}
\hfill
\begin{subfigure}[b]{0.3\textwidth}
\begin{tikzpicture}
    \node[state] (1) {$X$};
    \node[state] (2) [left =of 1,yshift=-.7cm] {$Z_1$};
    \node[state] (3) [left =of 1,yshift=.7cm] {$Z_2$};
    \node[state] (4) [right =of 1] {$Y$};
    
    \path (2) edge node[]{} (1);
    \path (3) edge node[]{} (1);
    \path (1) edge node[]{} (4);
    \path[bidirected] (1) edge[bend left=60] node[el,above]{} (4);
\end{tikzpicture}
\caption{Two IVs graph.}
\label{fig:multiple-iv-graph}
\end{subfigure}
\hfill
\begin{subfigure}[b]{0.3\textwidth}
\begin{tikzpicture}
    \node[state] (1) {$W$};
    \node[state] (2) [left =of 1,yshift=-1.5cm] {$X$};
    \node[state] (3) [right =of 2] {$M$};
    \node[state] (4) [right =of 3] {$Y$};
    
    \path (1) edge node[above]{} (2);
    \path (1) edge node[above]{} (4);
    \path (2) edge node[above]{} (3);
    \path (3) edge node[above]{} (4);
\end{tikzpicture}
\caption{Confounder-mediator graph.}
\label{fig:confounder-mediator-graph}
\end{subfigure}
\caption{Examples of causal models---with treatment $X$ and outcome $Y$---where the ATE can be identified by different data sources returning different subsets of variables.}
\end{figure}
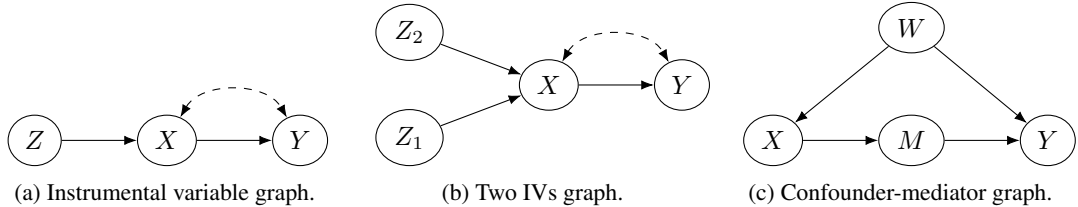

\begin{figure}
\centering
\begin{subfigure}[b]{0.32\textwidth}
\includegraphics[scale=0.33]{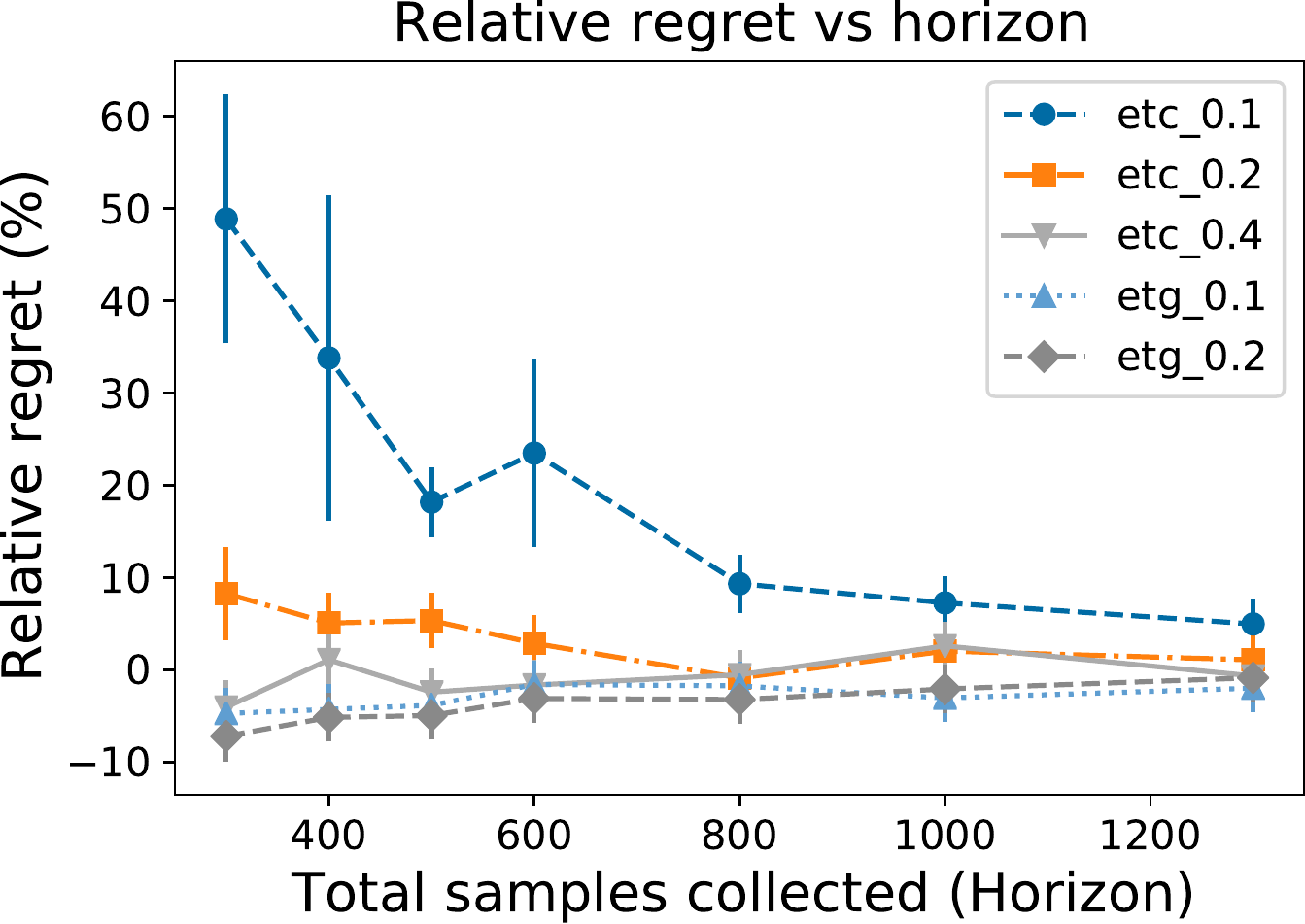}
\caption{IV model.}
\label{fig:disjoint-iv-regret-curve}
\end{subfigure}
\hfill
\begin{subfigure}[b]{0.32\textwidth}
\includegraphics[scale=0.33]{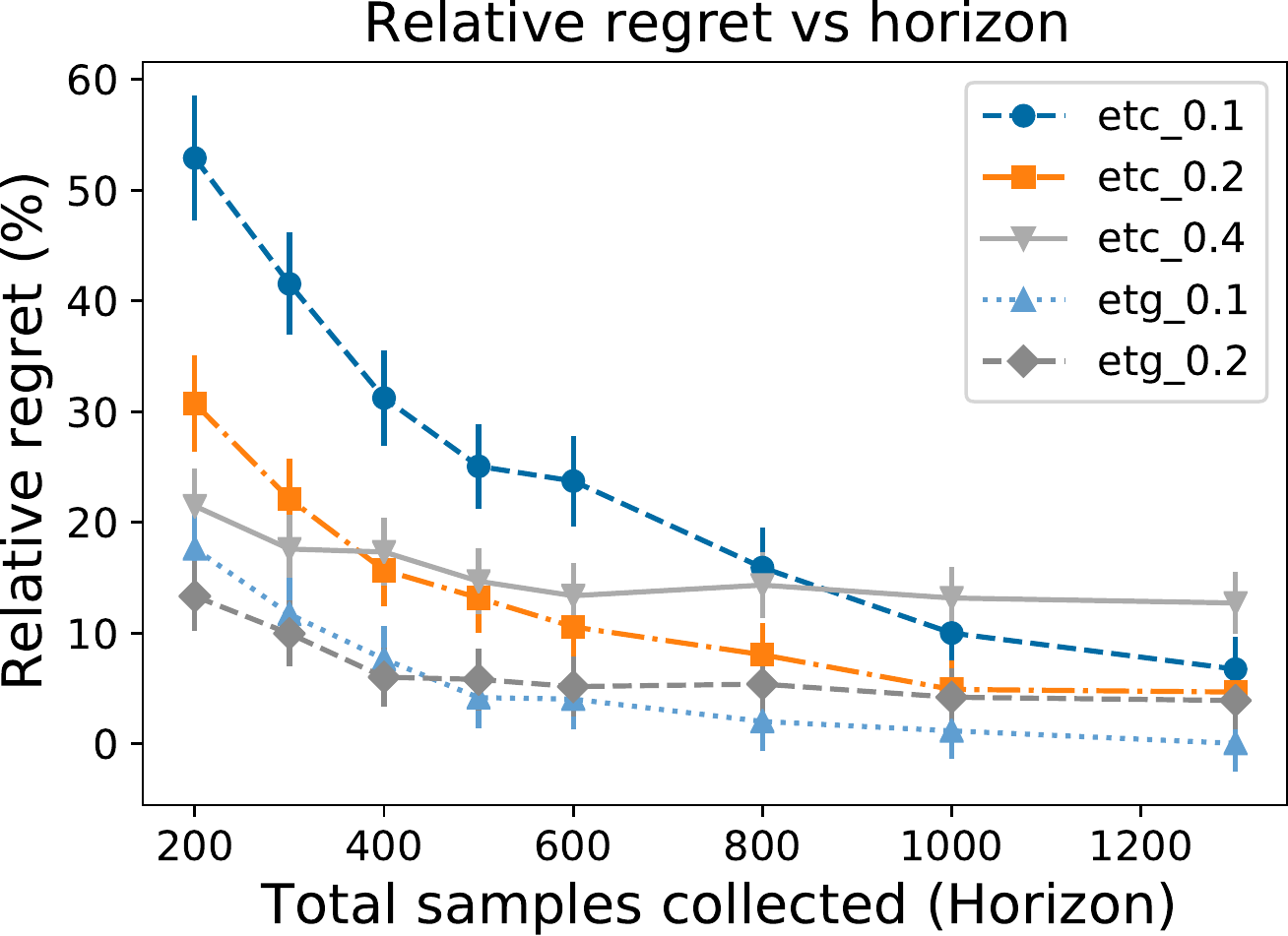}
\caption{Two IVs model.}
\label{fig:multiple-iv-corner-regret-curve}
\end{subfigure}
\hfill
\begin{subfigure}[b]{0.32\textwidth}
\includegraphics[scale=0.33]{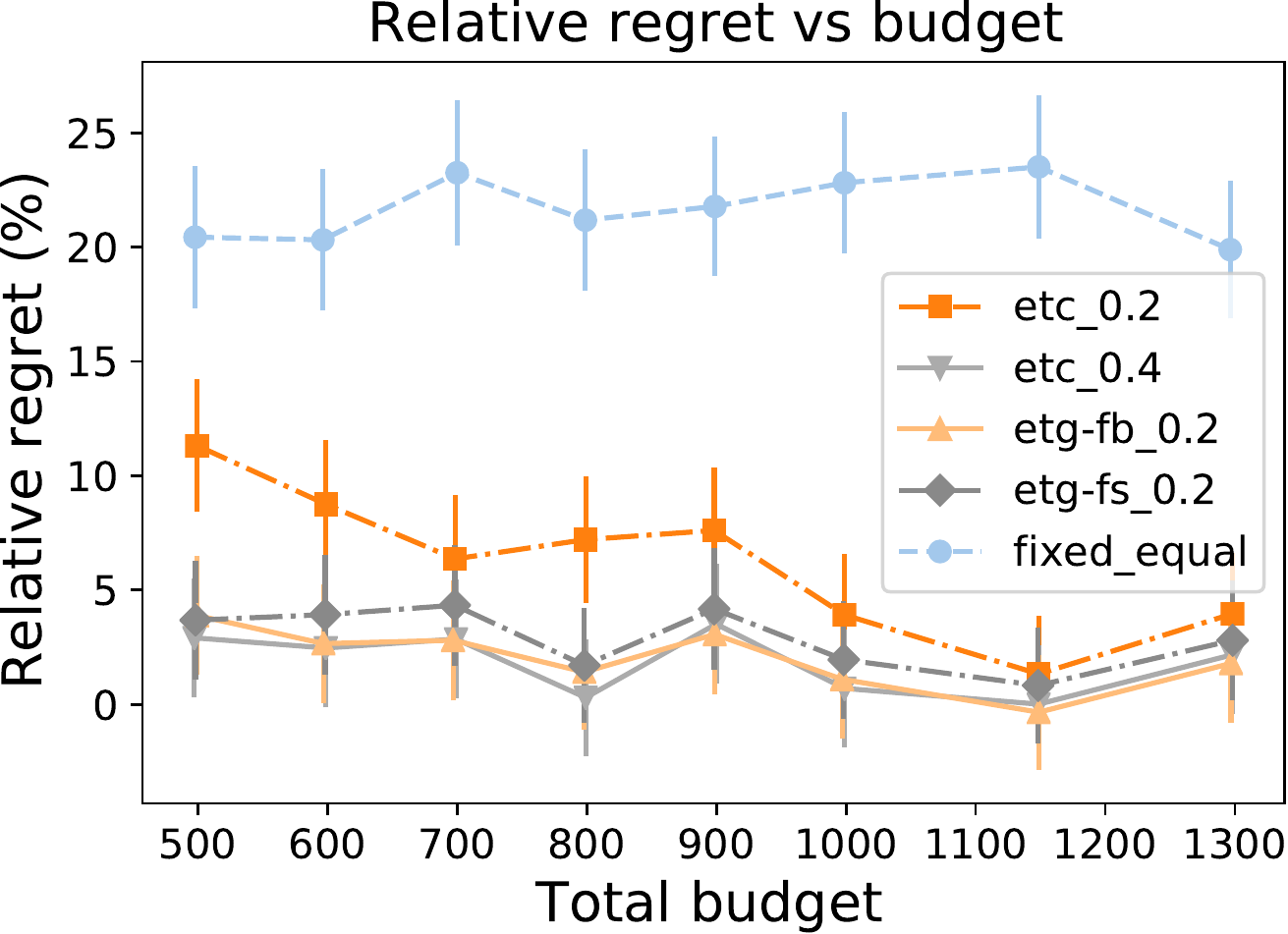}
\caption{Confounder-mediator model.}
\label{fig:frontdoor-backdoor-cost-structure}
\end{subfigure}
\caption{Relative regret (RR) across horizons/budgets for different policies (error bars denote $95\%$ CIs).
(a) RR for the IV model (Figure~\ref{fig:disjoint-iv-graph}) with $\psi = \{\{ Z, X \}, \{ Z, Y \}\}$. (b) RR for the two IVs model (Figure~\ref{fig:multiple-iv-graph}) with $\psi = \{\{ X, Y, Z_1 \}, \{ X, Y, Z_2 \}\}$. (c) RR for the confounder-mediator model (Figure~\ref{fig:confounder-mediator-graph}) with $\psi = \{\{ X, Y, W \}, \{ X, Y, M \}\}$ and a cost structure.
}
\end{figure}

Next, we simulate data from a linear graph with two IVs (Figure~\ref{fig:multiple-iv-graph}).
Here $\psi = \{ \{ X, Y, Z_1 \}, \{X, Y, Z_2 \} \}$ and
both choices cost the same.
We set the parameters such that $\kappa^* = [0, 1]^\top$ ($\kappa^*$ is on the corner of the simplex).
We compare the RR across various horizons (Figure~\ref{fig:multiple-iv-corner-regret-curve}).
We see the OMS-ETC performs worse than OMS-ETG for small horizons.
One difference from the previous case (Figure~\ref{fig:disjoint-iv-regret-curve}) is that \textit{etc}\_\textit{0.4} performs poorly even for large horizons.
This is because after using $40\%$ of the samples for exploration, the feasibility region is not large enough to get close to the corner of the simplex.
This demonstrates another benefit of OMS-ETG over OMS-ETC:
OMS-ETG can achieve close to zero regret in finite samples when the oracle ratio $\kappa^*$ is either on the boundary or in the interior of the simplex.

Finally, we simulate data from a linear confounder-mediator graph
(Figure~\ref{fig:confounder-mediator-graph}).
Here, both the backdoor (using $\{X, Y, W\}$) and frontdoor (using $\{X, Y, M\}$) adjustments are applicable \citep[Section~3.3]{pearl2009causality}.
We use $\psi = \{ \{X, Y, W\}, \{X, Y, M\} \}$ with cost structure $c = [1.8, 1]^\top$ (confounders $W$ cost more than the mediators $M$).
We set the parameters such that  $\kappa^* \approx [0.15, 0.85]^\top$.
We see similar conclusions as the previous cases.
OMS-ETC with low exploration performs poorly but converges for large horizons.
Both OMS-ETG variants---OMS-ETG-FS and OMS-ETG-FB---have close to zero RR for all horizons.
We see no significant difference between the regret of OMS-ETG-FS and OMS-ETG-FB.
The policy
\textit{fixed}\_\textit{equal} is a fixed policy that collects an equal fraction of both subsets.
Its RR does not converge and is substantially higher than the oracle ($\approx 20\%$). 
This demonstrates that adaptive policies can lead to significant gains in MSE over fixed policies and that our methods remain applicable even with an associated cost structure on the data sources.

\subsection{Semi-synthetic data}\label{sec:experiments-semi-synthetic-data}

\begin{figure}
\centering
\begin{subfigure}[b]{0.32\textwidth}
\begin{tikzpicture}
    \node[state] (1) {$W_1$};
    \node[state] (2) [right =of 1] {$W_2$};
    \node[state] (3) [below =of 1, xshift=-.5cm] {$X$};
    \node[state] (4) [below =of 2, xshift=.5cm] {$Y$};
    
    \path (3) edge (4);
    \draw[-] (1) -- (2); 
    \path (1) edge (3);
    \path (2) edge (3);
    \path (1) edge (4);
    \path (2) edge (4);
\end{tikzpicture}
\caption{IHDP data causal graph}
\label{fig:ihdp-graph}
\end{subfigure}
\hfill
\begin{subfigure}[b]{0.32\textwidth}
\includegraphics[scale=0.32]{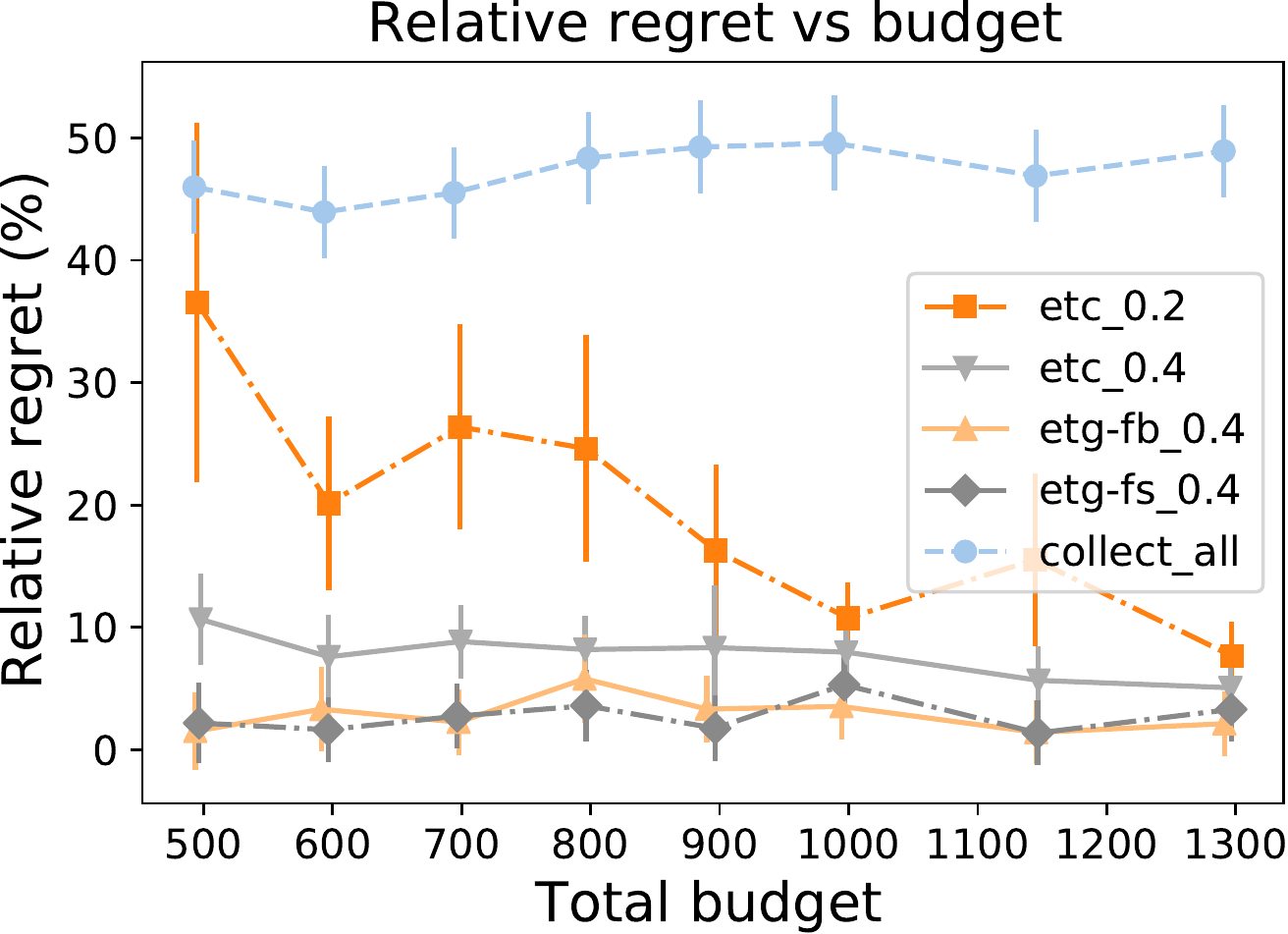}
\caption{RR for IHDP data}
\label{fig:ihdp-regret-curve}
\end{subfigure}
\hfill
\begin{subfigure}[b]{0.32\textwidth}
\includegraphics[scale=0.32]{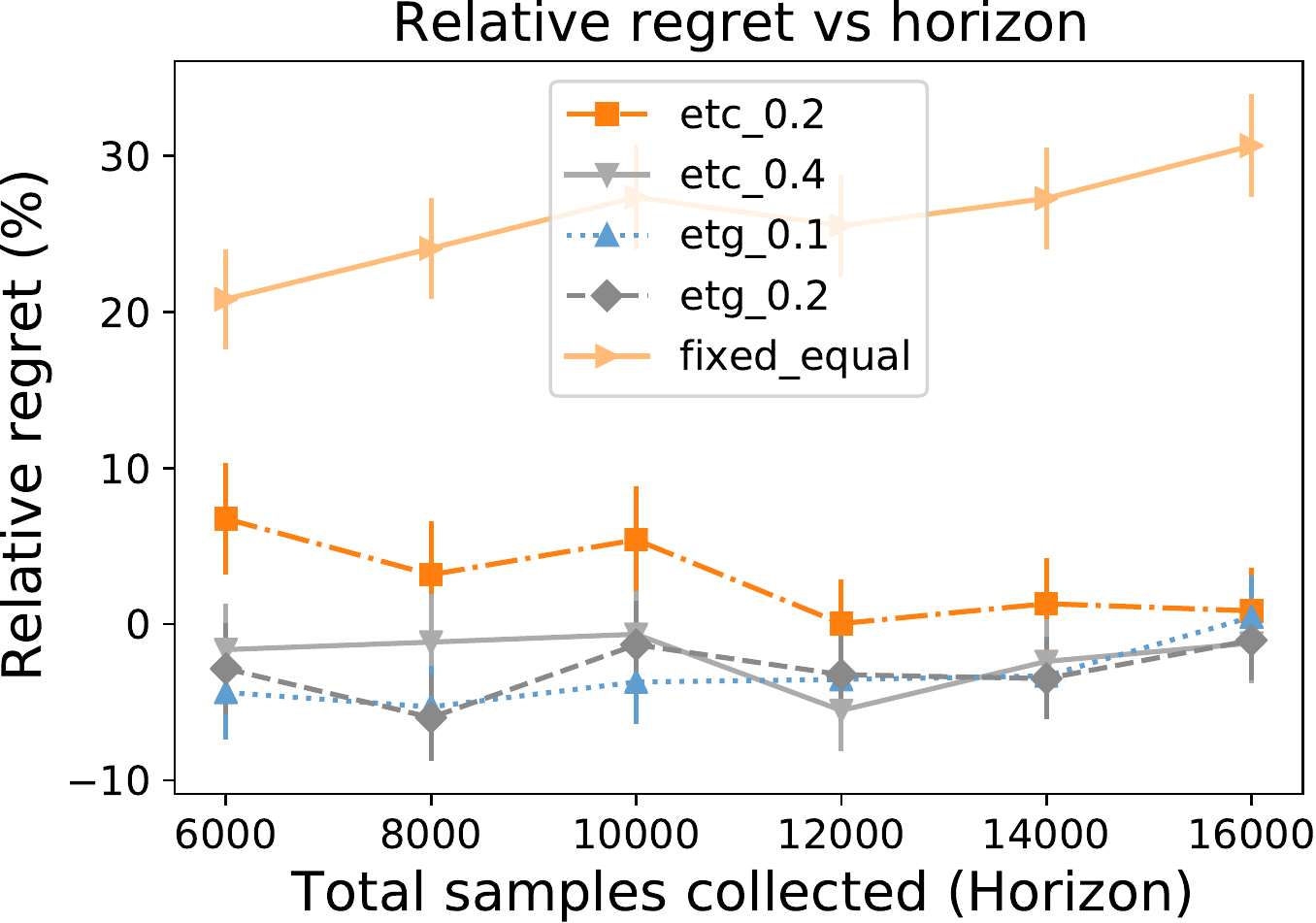}
\caption{RR for Earnings data}
\label{fig:vietnam-regret-curve}
\end{subfigure}
\caption{Relative regret (RR) on semi-synthetic data (error bars denote $95\%$ CIs).
For both IHDP (b) and Earnings data (c), adaptive policies converge to zero RR but fixed policies (\textit{collect}\_\textit{all}, \textit{fixed}\_\textit{equal}) suffer constant regret. OMS-ETG outperforms OMS-ETC for small horizons.
}
\end{figure}

\paragraph{IHDP.}
\citet{hill2011bayesian} constructed a dataset based on the Infant Health and Development Program (IHDP).
The data \citep{Dorie2016} is from a randomized experiment studying the effect of home visits by a trained provider on future cognitive test scores of children.
Following \citet{hill2011bayesian},
we create an observational dataset by removing a non-random subset of the data.
The treatment $X$ is binary.
The dataset contains pre-treatment covariates which are measurements on the mother and the child.
For simplicity, we only use two covariates: birth weight  (continuous) ($W_1$) and whether the mother smoked (binary) ($W_2$).
For each sample of the generated semi-synthetic data, $(X, W_1, W_2)$ are sampled uniformly at random from the real data. The outcome $Y$ (continuous) is simulated: $Y := \alpha_1 W_1 + \alpha_2 W_2 + \beta X + \epsilon_y$, where $\epsilon_y \sim \mathcal{N}(0, \sigma^2_y)$, $\alpha_1, \alpha_2, \beta \in \R$, and $\sigma^2_y \in \R^{+}$ (see Figure~\ref{fig:ihdp-graph}). Here $\alpha_1, \alpha_2, \beta$, and $\sigma_y$ are model parameters with $\beta$ being the ATE (target parameter).

For this experiment, 
we use $\psi = \{ \{X, Y, W_1\}, \{X, Y, W_2\}, \{X, Y, W_1, W_2\} \}$
with cost structure
$c = [1, 3, 3.5]^\top$. 
Thus, at each step, 
the agent can collect either
one of the covariates
or both
of them,
and each choice has a distinct cost. 
Setting model parameters such that 
$\kappa^* \approx [0.59, 0, 0.41]^\top$,
we compare performance across policies
for various total budgets 
(Figure~\ref{fig:ihdp-regret-curve}).
The policy \textit{collect}\_\textit{all} is a fixed policy 
that collects $\{X, Y, W_1, W_2\}$ at every step.
This policy has higher RR 
($\approx 50\%$ higher MSE than the oracle) 
for all budgets demonstrating 
that \textit{collecting all covariates for every sample can be sub-optimal}.
The policy \textit{etc}\_\textit{0.2} does poorly with a small budget whereas 
\textit{etc}\_\textit{0.4} has close to zero RR.
Both OMS-ETG-FB and OMS-ETG-FS have close 
to zero RR for all horizons.

\paragraph{The Vietnam draft and future earnings.} \citet{angrist1990lifetime} computed the effect 
of veteran status on future earnings 
from the Vietnam draft lottery data \citep{angrist1990data}
using an IV (Figure~\ref{fig:disjoint-iv-graph}).
The IV $Z$ (binary) indicates 
whether an individual 
was eligible for the draft 
based on a random lottery.
The treatment $X$ (binary) indicates 
whether they actually served.
The outcome $Y$ (continuous) 
represents their future earnings.
The IV removes bias caused 
by certain types of men 
being more likely to serve.
In this dataset,
$\{Z, X\}$ and $\{Z, Y\}$
were collected using different data sources
(thus $\{ Z, X, Y \}$ are not observed simultaneously)
,
which suits our framework.
We construct a semi-synthetic dataset 
that closely matches the real data 
so that we know the ground-truth causal effect
(needed to compute the MSE). 
For each instance,
we sample $Z$ uniformly at random 
from the empirical distribution.
$X$ is sampled from
the Bernoulli distribution
$\widehat{\P}(X|Z)$ with conditional probabilities given by the empirical distribution (values taken from \citep[Table~2]{angrist1990lifetime}).
We generate the outcome as $Y := \beta X + \gamma + \epsilon$,
where $\epsilon \sim \mathcal{N}(0, \sigma^2_{\epsilon})$ and $\epsilon \notindep X$.
The parameters $\beta, \gamma$, and $\sigma^2_{\epsilon}$
are set such that the distribution of $(Z, Y)$ is close to the real data (see Appendix \ref{sec:apdx-vietnam-earnings-experiment} for details).
We compare the RR of our
policies on this dataset (Figure~\ref{fig:vietnam-regret-curve}).
Most adaptive policies converge 
to near zero RR as the horizon gets large. 
OMS-ETC does poorly with low exploration
while OMS-ETG policies have 
significantly lower RR 
for smaller horizons.
By contrast, \textit{fixed}\_\textit{equal}
(the fixed policy that queries both data sources equally) 
suffers constant regret and has 
$\approx 25\%$ higher regret 
than the oracle even for large horizons.
This demonstrates that
adaptive policies
can lead to substantial MSE gains in a real-world setting.

\section{Conclusion}
\label{sec:conclusion}
This paper takes some initial strides 
towards endogenizing decisions 
about which variables to solicit
into the modeling process. 
Addressing the problem of deciding,
sequentially, which data sources to query
in order to efficiently estimate a parameter,
we developed the online moment selection (OMS) framework
and two instantiations: OMS-ETC and OMS-ETG.
We prove that over the entire parameter space,
adaptive data collection with either method
can provide substantial MSE gains.
While our work focuses on ATE estimation,
our framework is more broadly applicable 
to any parameter identified by moment conditions.
In future work,
we hope to apply our framework
to more general prediction problems,
addressing practical considerations
including high-dimensional data
and complex model classes
(e.g., neural networks).
Moreover, in real-world settings,
common assumptions like ignorability rarely hold.
We hope to extend our framework 
to overcome issues such as model misspecification,
or to overcome biases present in some, 
but not all data sources.

\bibliographystyle{abbrvnat}
\bibliography{refs}

\clearpage

\appendix
\section{Omitted Proofs for Section~\ref{sec:adaptive-data-collection}}\label{sec:appendix-omitted-proofs}

\subsection{Proof of Proposition~\ref{prop:consistency} (Consistency)}

\begin{proposition}[MDS LLN {\citep[Example~7.11]{hamilton1994time}}]\label{prop:apdx-mds-lln}
Let $\Bar{Y}_T$ be the sample mean from a martingale difference sequence (MDS), $\Bar{Y}_T = \frac{1}{T} \sum_{t=1}^T Y_i$, with $\E\left[ |Y_t|^r \right] < \infty$ for some $r > 1$. Then $\Bar{Y}_T \ConvProb 0$.
\end{proposition}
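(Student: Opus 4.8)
The plan is to prove the stronger statement of convergence in $L^1$, which immediately implies $\Bar{Y}_T \ConvProb 0$. Write $\mathcal{F}_{t-1}$ for the natural filtration, so that the defining property of a martingale difference sequence (MDS) is $\E[Y_t \mid \mathcal{F}_{t-1}] = 0$. The structural fact I would lean on throughout is \emph{orthogonality}: for $s < t$, $\E[Y_s Y_t] = \E\!\left[ Y_s \, \E[Y_t \mid \mathcal{F}_{t-1}] \right] = 0$, so distinct terms are uncorrelated even though they need not be independent. I read the moment hypothesis as a uniform bound $M := \sup_t \E[|Y_t|^r] < \infty$, and by Lyapunov's inequality I may assume without loss of generality that $1 < r \le 2$, since a uniform bound for larger $r$ forces one for $r = 2$.

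The case $r = 2$ is the warm-up and is immediate: by orthogonality, $\E[\Bar{Y}_T^2] = T^{-2}\sum_{t=1}^T \E[Y_t^2] \le M/T \to 0$, so $\Bar{Y}_T \to 0$ in $L^2$ and hence in probability. The substantive case is $1 < r < 2$, where second moments may fail to exist and a plain $L^2$ bound is unavailable. Here I would run a truncation argument. Fix a level $c_T = T^{1/r}$ and split $Y_t = Y_t^{\le} + Y_t^{>}$ with $Y_t^{\le} = Y_t\, \mathbbm{1}\{|Y_t| \le c_T\}$. Since truncation destroys the conditional-mean-zero property, I recenter and set $\tilde{Y}_t = Y_t^{\le} - \E[Y_t^{\le} \mid \mathcal{F}_{t-1}]$, which is again an MDS, yielding
\begin{align*}
\Bar{Y}_T = \frac{1}{T}\sum_{t=1}^T \tilde{Y}_t \;+\; \frac{1}{T}\sum_{t=1}^T \E[Y_t^{\le} \mid \mathcal{F}_{t-1}] \;+\; \frac{1}{T}\sum_{t=1}^T Y_t^{>}.
\end{align*}

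I would then show each of the three pieces vanishes. For the first, orthogonality of $\{\tilde{Y}_t\}$ together with $\E[(Y_t^{\le})^2] \le c_T^{2-r}\,\E[|Y_t|^r] \le M c_T^{2-r}$ gives a second moment of order $c_T^{2-r}/T = T^{(2-2r)/r} \to 0$ because $r > 1$. For the third, the moment condition gives $\E[|Y_t^{>}|] \le c_T^{1-r}\,\E[|Y_t|^r] \le M c_T^{1-r} = M T^{(1-r)/r} \to 0$, so it vanishes in $L^1$. For the middle term I use that $\E[Y_t \mid \mathcal{F}_{t-1}] = 0$ forces $\E[Y_t^{\le}\mid\mathcal{F}_{t-1}] = -\E[Y_t^{>}\mid\mathcal{F}_{t-1}]$, so $\E\big|\E[Y_t^{\le}\mid\mathcal{F}_{t-1}]\big| \le \E|Y_t^{>}|$ is controlled by the very same tail bound. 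Summing the three estimates gives $\Bar{Y}_T \to 0$ in probability.

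The main obstacle is precisely the middle (centering) term: truncation is what makes a second-moment estimate usable, but it breaks the martingale-difference structure, and the recentering introduces the predictable quantities $\E[Y_t^{\le}\mid\mathcal{F}_{t-1}]$, which are not obviously negligible. The key observation that tames them is that, by conditional-mean-zero, their conditional means equal—up to sign—those of the discarded tail $Y_t^{>}$, so a single tail moment bound simultaneously governs both the tail term and the centering term. The remainder is bookkeeping in the exponents, checking that $c_T = T^{1/r}$ balances the truncation ($L^2$) and tail ($L^1$) contributions for every $r \in (1,2)$; since a textbook reference (\citet[Example~7.11]{hamilton1994time}) is available, one may alternatively cite it directly rather than reproduce this computation.
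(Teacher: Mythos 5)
Your proof is correct, but note that the paper itself offers no proof of this proposition at all: it is imported verbatim as a known result, with the citation to \citet[Example~7.11]{hamilton1994time} standing in for the argument. So the comparison is between your self-contained derivation and the textbook route behind that citation. The standard proof (Hamilton, following White, via Chow's theorem) verifies $\sum_t t^{-r}\,\E|Y_t|^r < \infty$ and invokes a martingale convergence result plus Kronecker's lemma, obtaining the stronger conclusion of almost-sure convergence; your argument is more elementary --- orthogonality for the $L^2$ case, and for $1 < r < 2$ a truncation at $c_T = T^{1/r}$ with conditional recentering --- and delivers only $L^1$ (hence in-probability) convergence, which is exactly what the proposition asserts and what the paper's downstream uses (Lemma~\ref{lemma:apdx-uniform-convergence-dependent-data}, Proposition~\ref{prop:consistency}) require. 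Your exponent bookkeeping checks out: the recentered truncated MDS has mean-square of order $T^{(2-2r)/r} \to 0$, the tail sum is $O(T^{(1-r)/r})$ in $L^1$, and your key observation that $\E[Y_t^{\le}\mid\mathcal{F}_{t-1}] = -\E[Y_t^{>}\mid\mathcal{F}_{t-1}]$ correctly lets the same tail estimate absorb the centering term (the implicit step $\E[\tilde{Y}_t^2] \le \E[(Y_t^{\le})^2]$ follows from the conditional variance decomposition and is worth a half-line). One genuinely valuable point you add: you read the hypothesis as a \emph{uniform} bound $\sup_t \E|Y_t|^r < \infty$, which is indeed what Hamilton assumes and what is necessary --- as stated in the paper, with only per-$t$ finiteness, the claim is false (take independent $Y_t = \pm t$ with equal probability), so your reading silently repairs a sloppiness in the paper's statement.
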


\begin{lemma}[Uniform convergence]\label{lemma:apdx-uniform-convergence-dependent-data}
Let $a_i(\theta) := S_i \Tilde{a}(\theta, X_i)$ be a real-valued function where $S_i \in \{0, 1\}$ is $H_{i-1}$-measurable and $X_i$ are i.i.d. Suppose that (i) $\Theta$ is compact and (ii) $\Tilde{a}(\theta, X_i)$ satisfies Property~\ref{property:mod-of-continuity}. Then
\begin{align*}
    \sup_{\theta \in \Theta} \left| \frac{1}{T} \sum_{i=1}^T \left [ a_i(\theta) - S_i a_{*}(\theta) \right] \right| \ConvProb 0,
\end{align*}
where $a_{*}(\theta) = \E[\Tilde{a}(\theta; X_i)]$.
\end{lemma}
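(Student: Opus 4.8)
The plan is to reduce the claim to a pointwise martingale law of large numbers combined with a standard equicontinuity-plus-compactness (covering) argument, taking care that the adaptively chosen weights $S_i$ never break either ingredient. The organizing observation is that the demeaned summand splits as $Y_i(\theta) := a_i(\theta) - S_i a_*(\theta) = S_i[\Tilde{a}(\theta, X_i) - a_*(\theta)]$, with $S_i$ measurable with respect to $H_{i-1}$ and $X_i$ independent of $H_{i-1}$.

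First I would establish pointwise convergence. Fixing $\theta$, since $S_i$ is $H_{i-1}$-measurable and $X_i \indep H_{i-1}$, we get $\E[Y_i(\theta) \mid H_{i-1}] = S_i\bigl(\E[\Tilde{a}(\theta, X_i)] - a_*(\theta)\bigr) = 0$, so $\{Y_i(\theta)\}$ is a martingale difference sequence. Using $S_i \in \{0,1\}$ together with Property~\ref{property:mod-of-continuity}(i), we have $\sup_i \E[Y_i(\theta)^2] \le \E\bigl[(\Tilde{a}(\theta, X_i) - a_*(\theta))^2\bigr] < \infty$, so Proposition~\ref{prop:apdx-mds-lln} (with $r = 2$) gives $\frac{1}{T}\sum_{i=1}^T Y_i(\theta) \ConvProb 0$ for each fixed $\theta$.

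To promote pointwise to uniform convergence over the compact set $\Theta$, I would introduce the modulus of continuity $u_i(\eta) := \sup_{\|\theta - \theta'\| \le \eta} |\Tilde{a}(\theta, X_i) - \Tilde{a}(\theta', X_i)|$, which depends only on the i.i.d.\ variables $X_i$. Continuity of $\Tilde{a}(\cdot, X_i)$ on compact $\Theta$ makes it uniformly continuous, so $u_i(\eta) \to 0$ as $\eta \to 0$; domination (Property~\ref{property:mod-of-continuity}(ii)--(iii)) gives $u_i(\eta) \le 2A(X_i)$ with $\E[A(X_i)] < \infty$, so dominated convergence yields $\E[u_i(\eta)] \to 0$. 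Covering $\Theta$ by finitely many balls $N_\eta(\theta_k)$, $k \in [N(\eta)]$, any $\theta \in N_\eta(\theta_k)$ satisfies, using $S_i \le 1$ and $|a_*(\theta) - a_*(\theta_k)| \le \E[u_i(\eta)]$,
\[
\Bigl| \tfrac{1}{T}\sum_{i=1}^T Y_i(\theta) \Bigr| \le \Bigl| \tfrac{1}{T}\sum_{i=1}^T Y_i(\theta_k) \Bigr| + \tfrac{1}{T}\sum_{i=1}^T u_i(\eta) + \E[u_i(\eta)].
\]
Taking $\sup_\theta$ and then $\max_k$, the first term vanishes in probability by the pointwise result at the finitely many centers, the second converges in probability to $\E[u_i(\eta)]$ by the ordinary i.i.d.\ LLN (valid since $u_i(\eta)$ is integrable), and the third is deterministic. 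Hence the $\limsup_T$ of the supremum is at most $2\E[u_i(\eta)]$ in probability, and sending $\eta \to 0$ drives this to zero, giving the claim.

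The main obstacle is the non-i.i.d.\ structure induced by the adaptive $S_i$, which rules out a direct application of an i.i.d.\ uniform law of large numbers. The resolution is a decoupling: the centered increments at the cover centers are controlled by the martingale LLN, whereas the continuity-control term $u_i(\eta)$ depends only on the i.i.d.\ $X_i$ after the bound $S_i \le 1$, so the ordinary LLN suffices there. The most delicate (though routine) piece is verifying $\E[u_i(\eta)] \to 0$ rigorously, namely measurability of the supremum defining $u_i(\eta)$, uniform continuity on compact $\Theta$, and the dominated-convergence step.
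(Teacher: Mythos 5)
Your proof is correct and follows essentially the same route as the paper's: a finite cover of the compact $\Theta$, the MDS LLN (Proposition~\ref{prop:apdx-mds-lln} with $r=2$ via Property~\ref{property:mod-of-continuity}(i)) at the cover centers, and control of the oscillation term through the modulus of continuity $u_i(\eta)$ with Heine--Cantor, domination $u_i(\eta) \le 2A(X_i)$, dominated convergence, and the ordinary i.i.d.\ WLLN. The only cosmetic difference is that you bound the deterministic bias term $|a_*(\theta)-a_*(\theta_k)|$ by $\E[u_i(\eta)]$ via Jensen, whereas the paper invokes continuity of $a_*$ directly; both are fine.
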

\begin{proof}
We follow a standard uniform law of large numbers proof (e.g. \citet[Lemma~1]{tauchen1985diagnostic}) 
and modify it to work for dependent data.
The key modification is replacing the law of large numbers (LLN) in that proof with a MDS LLN.

Let $\left( \theta_1, \theta_2, \hdots, \theta_K \right)$ be a minimal $\delta$-cover of $\Theta$ and
$N_{\delta}(\theta_k)$ denote the $\delta$-ball around $\theta_k$. 
By compactness of $\Theta$, $K$ is finite.
For $k \in [K]$ and $\theta \in N_{\delta}(\theta_k)$, we have
\begin{align*}
    & \left| \frac{1}{T} \sum_{i=1}^{T} \left[ a_i(\theta) - S_i a_{*}(\theta) \right] \right| \\
     \,\,&= \left| \frac{1}{T} \sum_{i=1}^{T} \left[ a_i(\theta) - a_i(\theta_k) + a_i(\theta_k) - S_i a_{*}(\theta_k) + S_i a_{*}(\theta_k) - S_i a_{*}(\theta) \right] \right| \\
        &\leq \frac{1}{T} \sum_{i=1}^{T} \left| a_i(\theta) - a_i(\theta_k) \right| + \left| \frac{1}{T} \sum_{i=1}^{T} \left[ a_i(\theta_k) - S_i a_{*}(\theta_k) \right] \right| + \frac{1}{T} \sum_{i=1}^{T} \left| S_i a_{*}(\theta_k) - S_i a_{*}(\theta) \right| \\
        &= \frac{1}{T} \sum_{i=1}^{T} \left| S_i \left( \Tilde{a}(\theta; X_i) - \Tilde{a}(\theta_k; X_i) \right) \right| + \left| \frac{1}{T} \sum_{i=1}^{T} \left[ a_i(\theta_k) - S_i a_{*}(\theta_k) \right] \right| + \frac{1}{T} \sum_{i=1}^{T} \left| S_i \left( a_{*}(\theta_k) - a_{*}(\theta) \right) \right| \\
        &\leq \frac{1}{T} \sum_{i=1}^{T}  \left| \Tilde{a}(\theta; X_i) - \Tilde{a}(\theta_k; X_i) \right| + \left| \frac{1}{T} \sum_{i=1}^{T} \left[ a_i(\theta_k) - S_i a_{*}(\theta_k) \right] \right| + \left| a_{*}(\theta_k) - a_{*}(\theta) \right|.
\end{align*}
We now show that each of the three terms on the RHS above is small.
In the third term, by continuity of $a_{*}(\theta)$, $\forall \epsilon > 0, \exists \delta > 0$ s.t. $\left|a_{*}(\theta_k) - a_{*}(\theta)\right| < \epsilon$.

In the second term, $\left[a_i(\theta_k) - S_i a_{*}(\theta_k; S_i)\right]$ is a MDS. 
By Property~\ref{property:mod-of-continuity}(i) and Proposition~\ref{prop:apdx-mds-lln}, we have
$ \\ \left| \frac{1}{T} \sum_{i=1}^{T} \left[ a_i(\theta_k) - S_i a_{*}(\theta_k) \right] \right| \ConvProb 0$.

Next, we examine first term on the RHS.
Let $u_i(\delta) = \sup_{\theta, \theta' \in \Theta, \left\| \theta - \theta' \right\| \leq \delta} \left| \Tilde{a}(\theta, X_i) - \Tilde{a}(\theta', X_i) \right|$.
By continuity of $\Tilde{a}(\theta, X_i)$, compactness of $\Theta$, and the Heine-Cantor theorem, $\Tilde{a}(\theta, X_i)$ is uniformly continuous in $\theta$.
This ensures that $u_i(\delta)$ is continuous in $\delta$ and thus $u_i(\delta) \downarrow 0$ as $\delta \downarrow 0$.
Since $u_i(\delta) \leq 2 A(X_i)$ (by Property~\ref{property:mod-of-continuity}(iii)), using dominated convergence, we have
$\E[u_i(\delta)] \downarrow 0$ as $\delta \downarrow 0$.
Therefore, $\forall \epsilon > 0, \exists \delta > 0$ s.t. $\E[u_i(\delta)] < \epsilon$.
Thus we can write the first term as
\begin{align*}
    \frac{1}{T} \sum_{i=1}^{T} \left| \Tilde{a}(\theta; X_i) - \Tilde{a}_i(\theta_k; X_i) \right| &\leq \frac{1}{T} \sum_{i=1}^{T} u_i(\delta) \\
        &= \frac{1}{T} \sum_{i=1}^{T} u_i(\delta) - \E[u_i(\delta)] + \E[u_i(\delta)] \\
        &\leq \frac{1}{T} \sum_{i=1}^{T} u_i(\delta) - \E[u_i(\delta)] + \epsilon \\
        &\overset{(a)}{=} o_p(1) + \epsilon,
\end{align*}
where (a) follows by the weak law of large numbers which applies because $E[u_i(\delta)] \leq \E[A(X_i)] < \infty$ (by Property~\ref{property:mod-of-continuity}(iii)).
\end{proof}

\begin{proposition*}[Consistency]
Suppose that (i) Assumption~\ref{assump:standard-gmm} holds, (ii) $\forall j \in [M], \, \Tilde{g}_{t, j}(\theta)$ satisfies Property~\ref{property:mod-of-continuity}, and
(iii) $\forall (i, j) \in [M]^2, \, \left[ \Tilde{g}_t(\theta) \Tilde{g}_t(\theta)^\top \right]_{i, j}$ satisfies Property~\ref{property:mod-of-continuity};
Then, for any policy $\pi$, $\widehat{\theta}^{(\pi)}_T \xrightarrow[T \to \infty]{p} \theta^*$.
\end{proposition*}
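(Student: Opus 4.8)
The plan is to run the standard extremum-estimator consistency argument (in the style of Newey--McFadden), but adapted to the fact that here the ``population'' objective $\Bar{Q}(\theta)$ of Assumption~\ref{assump:standard-gmm}(a) is itself random and depends on $T$ through the realized selection history. Two ingredients drive the argument: (A) uniform (in $\theta$) convergence of the empirical objective $\widehat{Q}_T$ to $\Bar{Q}$, and (B) the identification condition of Assumption~\ref{assump:standard-gmm}(a). Because the selection vector $s_t$ is chosen adaptively, the summands in $\widehat{Q}_T$ are \emph{not} i.i.d., so the usual uniform law of large numbers does not apply directly; the whole argument hinges on the martingale-based uniform law of Lemma~\ref{lemma:apdx-uniform-convergence-dependent-data}.

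For ingredient (A), I would apply Lemma~\ref{lemma:apdx-uniform-convergence-dependent-data} coordinate-wise. Writing the $j$th sample moment as $\frac{1}{T}\sum_{t=1}^T m_j(s_t)\,\Tilde{g}_{t,j}(\theta)$, the factor $m_j(s_t)$ is $H_{t-1}$-measurable (playing the role of $S_i$ in the lemma) while $\Tilde{g}_{t,j}(\theta)$ is i.i.d.\ (playing the role of $\Tilde{a}$), and Condition~(ii) supplies Property~\ref{property:mod-of-continuity} for each $\Tilde{g}_{t,j}$. The lemma then gives $\sup_{\theta}\bigl|\frac{1}{T}\sum_t m_j(s_t)\Tilde{g}_{t,j}(\theta) - \frac{1}{T}\sum_t m_j(s_t)\E[\Tilde{g}_{t,j}(\theta)]\bigr| \ConvProb 0$, i.e.\ the sample moment vector converges uniformly to $g^{*}_T(\theta)$. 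Applying the same lemma to the entries $[\Tilde{g}_t\Tilde{g}_t^\top]_{ij}$ (Condition~(iii)) controls $\widehat{\Omega}_T$. I would first establish consistency of the one-step estimator (where $\widehat{W}=I$ is trivially bounded and positive definite), then combine it with the uniform convergence of $\widehat{\Omega}_T$ to conclude that the two-step weight matrix $\widehat{W} = [\widehat{\Omega}_T(\widehat{\theta}^{(\text{os})}_T)]^{-1}$ is asymptotically bounded and converges to a positive definite limit. Combining these, $\sup_\theta|\widehat{Q}_T(\theta) - \Bar{Q}(\theta)| \ConvProb 0$.

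To finish, I would argue as follows. Since $\E[\Tilde{g}_t(\theta^*)] = 0$ we have $g^{*}_T(\theta^*) = 0$, so the uniform convergence gives $\widehat{Q}_T(\theta^*) \ConvProb 0$. By optimality $\widehat{Q}_T(\widehat{\theta}_T) \leq \widehat{Q}_T(\theta^*)$, and since $\widehat{Q}_T \geq 0$ this forces $\widehat{Q}_T(\widehat{\theta}_T) \ConvProb 0$; uniform convergence then yields $\Bar{Q}(\widehat{\theta}_T) \ConvProb 0$. On the other hand, Assumption~\ref{assump:standard-gmm}(a) gives, for each $\theta \neq \theta^*$, $\liminf_T \Bar{Q}(\theta) > 0$ almost surely. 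Upgrading this pointwise separation to a uniform one over the compact set $\{\theta : \|\theta - \theta^*\| \geq \epsilon\}$ (using compactness of $\Theta$ with continuity of $\Bar{Q}$ in $\theta$, which comes from Assumption~\ref{assump:standard-gmm}(c)--(d)), I obtain $\inf_{\|\theta - \theta^*\|\geq\epsilon}\Bar{Q}(\theta)$ bounded away from zero with probability approaching one. Consistency then follows: were $\|\widehat{\theta}_T - \theta^*\| \geq \epsilon$ with non-vanishing probability, $\Bar{Q}(\widehat{\theta}_T)$ would be bounded away from zero, contradicting $\Bar{Q}(\widehat{\theta}_T) \ConvProb 0$.

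The main obstacle is precisely the non-i.i.d.\ structure induced by adaptive selection, which rules out the textbook uniform LLN; this is resolved by Lemma~\ref{lemma:apdx-uniform-convergence-dependent-data}, whose proof replaces the ordinary LLN with the martingale-difference LLN of Proposition~\ref{prop:apdx-mds-lln}. A secondary subtlety, absent from the i.i.d.\ GMM analysis, is that the limiting objective $\Bar{Q}$ is random and $T$-dependent; this is why the identification condition is phrased via $\liminf_T$, and why the final step needs the compactness-based upgrade from pointwise to uniform separation rather than a direct appeal to a fixed, deterministic population criterion.
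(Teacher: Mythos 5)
Your proposal is correct and follows essentially the same route as the paper's proof: uniform convergence of $\widehat{Q}_T$ to the random, $T$-dependent objective $\Bar{Q}$ via the martingale-based uniform law (Lemma~\ref{lemma:apdx-uniform-convergence-dependent-data}) applied coordinate-wise to the moments and to the entries of $\widehat{\Omega}_T$, control of the two-step weight matrix through consistency of the one-step estimator, and then the standard extremum-estimator argument with Assumption~\ref{assump:standard-gmm}(a) for identification. Your explicit one-step-then-two-step ordering and your spelled-out compactness-based separation step are if anything slightly more careful than the paper, which handles the final step by citing \citet[Theorem~2.1]{newey1994large} directly.
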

\begin{proof}
We begin by defining the empirical and population analogues of the two-step GMM objective for a given policy $\pi$:
\begin{align*}
    \text{Empirical objective:} \,\,\, \widehat{Q}^{(\pi)}_T(\theta) &= \left[ \frac{1}{T} \sum_{t=1}^{T} g_t(\theta) \right]^\top \widehat{W} \left[ \frac{1}{T} \sum_{t=1}^{T} g_t(\theta) \right], \\
    \text{Population objective:} \,\,\, \Bar{Q}^{(\pi)}_T(\theta) &= \left[ \frac{1}{T} \sum_{t=1}^{T} \E \left[ g_t(\theta) | H_{t-1} \right] \right]^\top W \left[ \frac{1}{T} \sum_{t=1}^{T} \E \left[ g_t(\theta) | H_{t-1} \right] \right] \\
        &=  \left[ \left( \frac{1}{T} \sum_{t=1}^{T} m(s_t) \right) \otimes g_{*}(\theta) \right]^\top W \left[ \left( \frac{1}{T} \sum_{t=1}^{T} m(s_t) \right) \otimes g_{*}(\theta) \right] \\
        &=  \left[ m_T \otimes g_{*}(\theta) \right]^\top W \left[ m_T \otimes g_{*}(\theta) \right],
\end{align*}
where $g_{*}(\theta) = \E\left[ \Tilde{g}_i(\theta) \right]$ and $m_T = \frac{1}{T} \sum_{t=1}^{T} m(s_t)$. We have $\widehat{W} = \left[\widehat{\Omega}_T(\widehat{\theta}^{(\text{os})}_T)\right]^{-1}$, where $\widehat{\theta}^{(\text{os})}_T$ is the one-step GMM estimate and
\begin{align*}
    \widehat{\Omega}_T(\theta) &= \frac{1}{T} \sum_{t=1}^T \left[ g_t(\theta) g_t(\theta)^\top \right] \\
        &= \frac{1}{T} \sum_{t=1}^T \left( \left[ m(s_t) m(s_t)^\top \right] \otimes \left[ \Tilde{g}_t(\theta) \Tilde{g}_t(\theta)^\top \right] \right).
\end{align*}
Furthermore, we have $W = \left[ m_{\Omega}(\kappa_T) \otimes \Omega(\theta^*) \right]^{-1}$, where
\begin{align*}
    m_{\Omega}(\kappa_T) = \sum_{t=1}^T \left( m(s_t) m(s_t)^\top \right), \\
    \Omega(\theta) = \E\left[ \Tilde{g}_t(\theta) \Tilde{g}_t(\theta)^\top \right].
\end{align*}
The two-step GMM estimator is obtained by minimizing the empirical objective: $\widehat{\theta}_T = \arg\min_{\theta \in \Theta} \widehat{Q}^{(\pi)}_T(\theta)$.
At the true parameter $\theta^*$, $\Bar{Q}^{(\pi)}_T(\theta^*) = 0$ and  
by Assumption~\ref{assump:standard-gmm}(a), 
$\theta^*$ uniquely minimizes $\Bar{Q}^{(\pi)}_T(\theta)$.
By \citet[Theorem~2.1]{newey1994large}, $\sup_{\theta \in \Theta} \left| \widehat{Q}^{(\pi)}_T(\theta) - \Bar{Q}^{(\pi)}_T(\theta) \right| \overset{p}{\rightarrow} 0 \implies \widehat{\theta}_T \overset{p}{\rightarrow} \theta^*$.

\paragraph{Uniform convergence of $\widehat{Q}^{(\pi)}_T(\theta)$.} We now prove that $\sup_{\theta \in \Theta} \left| \widehat{Q}^{(\pi)}_T(\theta) - \Bar{Q}^{(\pi)}_T(\theta) \right| \overset{p}{\rightarrow} 0$. Following the proof of \citet[Theorem~2.6]{newey1994large}, we have
\begin{align}
    & \left| \widehat{Q}^{(\pi)}_T(\theta) - \Bar{Q}^{(\pi)}_T(\theta) \right| \nonumber \\
    \,\, & \leq \begin{aligned}[t]
    & \left\| \frac{1}{T} \sum_{t=1}^{T} \left[ g_t(\theta) - m(s_t) \otimes g_{*}(\theta) \right] \right\|^2 \left\| \widehat{W} \right\|^2 + 2 \left\|g_{*}(\theta) \right\| \left\| \frac{1}{T} \sum_{t=1}^{T} \left[ g_t(\theta) - m(s_t) \otimes g_{*}(\theta) \right] \right\| \left\| \widehat{W} \right\| + \\
    & \left\| g_{*}(\theta) \right\|^2 \left\| \widehat{W} - W \right\|.
    \end{aligned} \label{eq:apdx-Q-hat-diff-expansion}
\end{align}

We first prove that $\left\|\widehat{W} - W \right\| \ConvProb 0$. Due to Condition~(iii) of the theorem, we can apply Lemma~\ref{lemma:apdx-uniform-convergence-dependent-data} to get
\begin{align*}
    \forall \,\, (i, j) \in [M]^2, \,\, \forall \, \epsilon > 0, \,\, & \P\left( \sup_{\theta \in \Theta} \left| \widehat{\Omega}_T(\theta)_{i, j} - \left[m_{\Omega}(\kappa_T) \otimes \Omega(\theta) \right] \right| > \epsilon \right) \rightarrow 0, \\
    \therefore \,\, \forall \,\, (i, j) \in [M]^2, \,\, \forall \, \epsilon > 0, \,\, & \P\left( \left| \widehat{\Omega}_T(\widehat{\theta}^{(\text{os})}_T)_{i, j} - \left[ m_{\Omega}(\kappa_T) \otimes \Omega(\widehat{\theta}^{(\text{os})}_T) \right] \right| > \epsilon \right) \rightarrow 0, \\
    \therefore \,\, \forall \,\, (i, j) \in [M]^2, \,\, \forall \, \epsilon > 0, \,\, & \P\left( \left| \widehat{\Omega}_T(\widehat{\theta}^{(\text{os})}_T)_{i, j} - \left[ m_{\Omega}(\kappa_T) \otimes \Omega(\theta^*) \right] \right| > \epsilon \right) \xrightarrow[]{(a)} 0, \\
    \therefore \,\, \forall \,\, (i, j) \in [M]^2, \,\, \forall \, \epsilon > 0, \,\, & \P\left( \left| \widehat{W}_{i,j} - W_{i,j} \right| > \epsilon \right) \xrightarrow[]{(b)} 0, \\
    \therefore  & \left\| \widehat{W} - W \right\| \ConvProb 0,
\end{align*}
where (a) follows because $\widehat{\theta}^{(\text{os})}_T \ConvProb \theta^*$ (by Proposition~\ref{prop:consistency}) and (b) by the continuous mapping theorem.
Therefore, we have
\begin{align*}
    \left\| \widehat{W} \right\| &\leq \left\| W \right\| + o_p(1) \\
        &\leq \underbrace{\limsup_{T \to \infty} \left\| \left[ m_{\Omega}(\kappa_T) \otimes \Omega(\theta^*) \right]^{-1} \right\|}_{:= \lambda_0} + o_p(1).
\end{align*}
Substituting these results in Eq.~\ref{eq:apdx-Q-hat-diff-expansion}, we get
\begin{align*}
    \left| \widehat{Q}^{(\pi)}_T(\theta) - \Bar{Q}^{(\pi)}_T(\theta) \right| \leq
    \left\| \frac{1}{T} \sum_{t=1}^{T} \left[ g_t(\theta) - m(s_t) \otimes g_{*}(\theta) \right] \right\|^2 \lambda_0^2 + 2 \left\|g_{*}(\theta) \right\| \left\| \frac{1}{T} \sum_{t=1}^{T} \left[ g_t(\theta) - m(s_t) \otimes g_{*}(\theta) \right] \right\| \lambda_0 + o_p(1).
\end{align*}
Thus, to show uniform convergence of $\widehat{Q}^{(\pi)}_T(\theta)$, we need to show that $\sup_{\theta \in \Theta} \left\| \frac{1}{T} \sum_{t=1}^{T} \left[ g_t(\theta) - m(s_t) \otimes g_{*}(\theta) \right] \right\| \ConvProb 0$. For any $\epsilon > 0$, we have
\begin{align*}
    \P\left( \sup_{\theta \in \Theta} \left\| \frac{1}{T} \sum_{t=1}^{T} \left[ g_t(\theta) - m(s_t) \otimes g_{*}(\theta) \right] \right\| < \epsilon \right) &\geq  \P\left( \sup_{\theta \in \Theta} \sum_{j=1}^M \left| \frac{1}{T} \sum_{t=1}^{T} \left[ g_{t,j}(\theta) - m_j(s_t) g_{*}(\theta)_j \right] \right| < \epsilon \right) \\
    &\overset{(a)}{\geq}  1 - \sum_{j=1}^M \P\left( \sup_{\theta \in \Theta} \left| \frac{1}{T} \sum_{t=1}^{T} \left[ g_{t,j}(\theta) - m_j(s_t) g_{*}(\theta)_j \right] \right| \geq \frac{\epsilon}{M} \right) \\
    &\overset{(b)}{\geq} 1 - o_p(1), \\
    \therefore \,\, \sup_{\theta \in \Theta} \left\| \frac{1}{T} \sum_{t=1}^{T} \left[ g_t(\theta) - m(s_t) \otimes g_{*}(\theta) \right] \right\| &\ConvProb 0,
\end{align*}
where (a) follows by the union bound and (b) by applying Lemma~\ref{lemma:apdx-uniform-convergence-dependent-data} for every $j \in [M]$ (using Condition~(ii)).
\end{proof}

\subsection{Proof of Proposition~\ref{prop:asymptotic-normality} (Asymptotic normality)}\label{sec:appendix-asymptotic-normality-proof}

\begin{proposition}[Martingale CLT {\citep[Corollary~3.1]{hall1980martingale}}]\label{prop:apdx-martingale-clt}
Let $M_i$ with $1 \leq i \leq n$ be a martingale adapted to the filtration $\mathcal{F}_i$ with differences $X_i = M_i - M_{i-1}$ and $M_0 = 0$. Suppose that the following two conditions hold: (i) (Conditional Lindeberg) $\forall \epsilon > 0$, \, $\sum_{i=1}^n \E\left[ X^2_i I\left( |X_i| > \epsilon \right) | \mathcal{F}_{i-1} \right] \ConvProb 0$, and (ii) (Convergence of conditional variance) For some constant $\sigma > 0$, $\sum_{i=1}^n \E\left[ X^2_i | \mathcal{F}_{i-1} \right] \ConvProb \sigma^2$. Then
$\sum_{i=1}^n X_i \xrightarrow{d} \mathcal{N}(0, \sigma^2)$.
\end{proposition}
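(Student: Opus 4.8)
The statement is the martingale central limit theorem, which the paper invokes off the shelf; for completeness I sketch the classical characteristic-function proof. Write $\mathrm{i} = \sqrt{-1}$ for the imaginary unit (to avoid a clash with the summation index $i$) and abbreviate $\E_{i-1}[\cdot] := \E[\cdot \mid \mathcal{F}_{i-1}]$. Since $M_n = \sum_{i=1}^n X_i$, by L\'evy's continuity theorem it suffices to show that, for each fixed $t \in \R$, the characteristic function $\varphi_n(t) := \E[\exp(\mathrm{i} t M_n)]$ converges to $\exp(-t^2 \sigma^2/2)$, the characteristic function of $\mathcal{N}(0,\sigma^2)$.

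First I would reduce to bounded differences by truncation. Fix $\epsilon > 0$ and set $\bar X_i := X_i I(|X_i| \le \epsilon) - \E_{i-1}[X_i I(|X_i| \le \epsilon)]$, which is again a martingale difference, now with $|\bar X_i| \le 2\epsilon$, and write $\bar M_n := \sum_{i=1}^n \bar X_i$. The increment $X_i - \bar X_i = X_i I(|X_i| > \epsilon) - \E_{i-1}[X_i I(|X_i| > \epsilon)]$ is a martingale difference whose summed conditional variance is at most $\sum_{i=1}^n \E_{i-1}[X_i^2 I(|X_i| > \epsilon)]$; Condition~(i) forces this to $0$ in probability, so $M_n - \bar M_n \ConvProb 0$ for each fixed $\epsilon$. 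The same Lindeberg control, together with Condition~(ii), gives $\bar V_n^2 := \sum_{i=1}^n \E_{i-1}[\bar X_i^2] \ConvProb \sigma^2$. Hence it is enough to prove the CLT for the bounded array $\bar M_n$.

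For the bounded case the engine is a telescoping identity. Let $g_i := \E_{i-1}[\exp(\mathrm{i} t \bar X_i)]$ and $Z_n := \prod_{i=1}^n \exp(\mathrm{i} t \bar X_i)/g_i$; then $Z_n$ is a mean-one martingale and $\exp(\mathrm{i} t \bar M_n) = Z_n \prod_{i=1}^n g_i$. A second-order Taylor expansion, using $|\bar X_i| \le 2\epsilon$, gives $g_i = 1 - \tfrac{t^2}{2}\E_{i-1}[\bar X_i^2] + R_i$ with $\sum_{i=1}^n |R_i| \lesssim |t|^3 \epsilon\, \bar V_n^2$, so taking logarithms and using $\bar V_n^2 \ConvProb \sigma^2$ yields $\prod_{i=1}^n g_i \ConvProb c_\epsilon$ with $c_\epsilon = \exp(-\tfrac{t^2}{2}\sigma^2)(1 + O(\epsilon))$. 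Because the same expansion bounds $|g_i|$ away from zero, $Z_n$ is uniformly integrable; writing $\varphi_{\bar M_n}(t) = c_\epsilon\,\E[Z_n] + \E[Z_n(\prod_i g_i - c_\epsilon)]$, the first term is $c_\epsilon$ and the second vanishes by dominated convergence since $\prod_i g_i - c_\epsilon \ConvProb 0$. Thus $\varphi_{\bar M_n}(t) \to c_\epsilon$.

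It remains to stitch the pieces. From $M_n - \bar M_n \ConvProb 0$ and $|\exp(\mathrm{i} t \cdot)| = 1$, bounded convergence gives $\varphi_n(t) - \varphi_{\bar M_n}(t) \to 0$ for each fixed $\epsilon$, whence $\limsup_n |\varphi_n(t) - \exp(-\tfrac{t^2}{2}\sigma^2)| \le |c_\epsilon - \exp(-\tfrac{t^2}{2}\sigma^2)| = O(\epsilon)$. Since the left-hand side does not depend on $\epsilon$, letting $\epsilon \downarrow 0$ forces it to $0$, giving $\varphi_n(t) \to \exp(-t^2\sigma^2/2)$ and the claim. I expect the main obstacle to be exactly this iterated limit: each fixed truncation level leaves an $O(\epsilon)$ residue from the Taylor remainder, so one must bound $\limsup_n$ first and only afterwards exploit that the bound is $\epsilon$-independent. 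One may alternatively cite \citet[Corollary~3.1]{hall1980martingale} directly, whose proof follows this route.
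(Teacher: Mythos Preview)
The paper gives no proof of this proposition; it is quoted directly from \citet[Corollary~3.1]{hall1980martingale} and used as a black box in the asymptotic-normality argument. You correctly recognize this and go beyond the paper by sketching the classical characteristic-function proof.

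Your architecture---truncate to bounded differences via Lindeberg, analyze the bounded array through the multiplicative martingale $Z_n = \prod_i e^{\mathrm{i}t\bar X_i}/g_i$, then send $\epsilon \downarrow 0$ using an $\epsilon$-independent $\limsup$---is the right one. The one soft spot is the claim that $Z_n$ is uniformly integrable ``because the same expansion bounds $|g_i|$ away from zero'': a lower bound on each individual $|g_i|$ does not control $|Z_n| = \prod_i |g_i|^{-1}$ uniformly in $n$, since the number of factors grows. What the expansion actually gives is $|g_i| \ge \mathrm{Re}\, g_i \ge 1 - \tfrac{t^2}{2}\E_{i-1}[\bar X_i^2]$, hence $\log |Z_n| \le C t^2 \bar V_n^2$; but $\bar V_n^2 \ConvProb \sigma^2$ is only a bound in probability, not almost surely, so this does not yet yield a dominating function. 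The standard repair (and exactly what Hall and Heyde do) is to localize: stop the array at the first time $\bar V_n^2$ exceeds a fixed constant, obtain a deterministic bound on $|Z_n|$ for the stopped version, and observe that the stopped and original arrays agree on an event of probability tending to one. With that localization your dominated-convergence step goes through cleanly. For the paper's purposes, the bare citation is of course sufficient.
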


\begin{proposition*}[Asymptotic normality]
Suppose that
(i) $\widehat{\theta}^{(\pi)}_T \xrightarrow{p} \theta^*$;
(ii) $\forall (i, j) \in [M] \times [D], \, \left[ \frac{\partial \Tilde{g}_t}{\partial \theta} (\theta) \right]_{i, j}$ satisfies Property~\ref{property:mod-of-continuity};
(iii) $\exists \delta > 0 \, \text{such that} \, \E\left[ \left\| \Tilde{g}_i(\theta^*) \right\|^{2 + \delta} \right] < \infty$, and
(iv) (Selection ratio convergence) $\kappa^{(\pi)}_T \overset{p}{\rightarrow} k$ for some constant $k \in \ChoiceSimplex$. Then
$\widehat{\theta}_T$ is asymptotically normal:
\begin{align*}
    \sqrt{T} (\widehat{\theta}^{(\pi)}_T - \theta^*) &\overset{d}{\rightarrow} \mathcal{N}\left( 0, \Sigma(\theta^*, k) \right),
\end{align*}
where $\Sigma(\theta^*, k)$ is a constant matrix that depends only on $\theta^*$ and $k$.
By Assumption~\ref{assump:standard-gmm}(e) and the Delta method, $\widehat{\beta}_T$ is asymptotically normal:
\begin{align*}
    \sqrt{T} (\widehat{\beta}_T - \beta^*) &\overset{d}{\rightarrow} \mathcal{N}\left( 0, V(\theta^*, k) \right), \, \text{where} \, V(\theta^*, k) = \nabla_{\theta} f_{\text{tar}}(\theta^*)^\top [\Sigma(\theta^*, k)] \nabla_{\theta} f_{\text{tar}}(\theta^*).
\end{align*}
\end{proposition*}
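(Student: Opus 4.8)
The plan is to follow the classical GMM linearization argument (as in \citet[Theorem~3.4]{newey1994large}), but to replace the i.i.d.\ central limit theorem and law of large numbers with their martingale counterparts, since under an adaptive policy the sample moments $g_t(\theta^*)$ are not i.i.d.\ — though they do form a martingale difference sequence with respect to the filtration generated by $H_t$. First I would write the first-order condition for the two-step estimator: since $\widehat{\theta}_T$ minimizes $\widehat{Q}^{(\pi)}_T$, it satisfies $\widehat{G}_T(\widehat{\theta}_T)^\top \widehat{W}\, \bar{g}_T(\widehat{\theta}_T) = 0$, where $\bar{g}_T(\theta) = \frac{1}{T}\sum_{t=1}^T g_t(\theta)$ and $\widehat{G}_T(\theta) = \frac{1}{T}\sum_{t=1}^T \frac{\partial g_t}{\partial\theta}(\theta)$. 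A mean-value expansion of $\bar{g}_T$ around $\theta^*$ (applied row-wise) gives $\bar{g}_T(\widehat{\theta}_T) = \bar{g}_T(\theta^*) + \widehat{G}_T(\bar{\theta})(\widehat{\theta}_T - \theta^*)$ for some $\bar{\theta}$ between $\widehat{\theta}_T$ and $\theta^*$; substituting and solving yields
\begin{align*}
    \sqrt{T}(\widehat{\theta}_T - \theta^*) = -\left[ \widehat{G}_T(\widehat{\theta}_T)^\top \widehat{W}\, \widehat{G}_T(\bar{\theta}) \right]^{-1} \widehat{G}_T(\widehat{\theta}_T)^\top \widehat{W}\, \sqrt{T}\,\bar{g}_T(\theta^*).
\end{align*}

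Next I would pin down the probability limits of the pre-factor. Each entry of $\widehat{G}_T(\theta)$ equals $\frac{1}{T}\sum_t m_j(s_t)\,\partial_{\theta_\ell}\tilde{g}_{t,j}(\theta)$ with $m_j(s_t)$ being $H_{t-1}$-measurable, so Lemma~\ref{lemma:apdx-uniform-convergence-dependent-data} applies (Condition~(ii) supplies Property~\ref{property:mod-of-continuity} for the Jacobian entries). Combining this uniform law with the consistency of both $\widehat{\theta}_T$ and $\bar{\theta}$ (Condition~(i), so both $\ConvProb \theta^*$), the selection-ratio convergence $\kappa_T \ConvProb k$ (Condition~(iv)), and the continuous mapping theorem gives $\widehat{G}_T(\widehat{\theta}_T), \widehat{G}_T(\bar{\theta}) \ConvProb G$ for a constant matrix $G$ with entries $\bar{m}_j(k)\,\E[\partial_{\theta_\ell}\tilde{g}_{t,j}(\theta^*)]$, where $\bar{m}_j(k)$ is the limit of $\frac{1}{T}\sum_t m_j(s_t)$. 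The convergence $\widehat{W} \ConvProb W$, with $W$ depending only on $k$ and $\theta^*$, was already established inside the proof of Proposition~\ref{prop:consistency}.

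The crux is the central limit theorem for the scaled score $\sqrt{T}\,\bar{g}_T(\theta^*) = \frac{1}{\sqrt{T}}\sum_t g_t(\theta^*)$. Since $s_t$ is $H_{t-1}$-measurable while $\tilde{g}_t(\theta^*)$ is independent of $H_{t-1}$ with $\E[\tilde{g}_t(\theta^*)] = 0$, we have $\E[g_t(\theta^*)\mid H_{t-1}] = m(s_t)\otimes\E[\tilde{g}_t(\theta^*)] = 0$, so $\{g_t(\theta^*)\}$ is a martingale difference sequence. I would then apply the martingale CLT (Proposition~\ref{prop:apdx-martingale-clt}) through the Cramér-Wold device to $X_{t,T} = T^{-1/2}\lambda^\top g_t(\theta^*)$ for an arbitrary $\lambda$. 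Two conditions must be checked: (a) the conditional variance $\frac{1}{T}\sum_t \E[g_t(\theta^*)g_t(\theta^*)^\top \mid H_{t-1}] = \big(\frac{1}{T}\sum_t m(s_t)m(s_t)^\top\big)\otimes\Omega(\theta^*)$ converges in probability to a constant matrix $S := m_\Omega(k)\otimes\Omega(\theta^*)$, which holds because $\frac{1}{T}\sum_t m(s_t)m(s_t)^\top$ is a linear function of $\kappa_T$ and Condition~(iv) gives $\kappa_T \ConvProb k$; and (b) the conditional Lindeberg condition, which I would verify by the truncation bound $\E[X_{t,T}^2\, I(|X_{t,T}|>\epsilon)\mid H_{t-1}] \le \epsilon^{-\delta}\,\E[|X_{t,T}|^{2+\delta}\mid H_{t-1}]$ and summing, so that the $(2+\delta)$-moment bound of Condition~(iii) makes the total of order $T^{-\delta/2}\to 0$. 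This delivers $\frac{1}{\sqrt{T}}\sum_t g_t(\theta^*)\overset{d}{\rightarrow}\mathcal{N}(0,S)$.

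Finally, Slutsky's theorem combines the probability limits $G$ and $W$ with the weak limit of the score to give $\sqrt{T}(\widehat{\theta}_T-\theta^*)\overset{d}{\rightarrow}\mathcal{N}(0,\Sigma(\theta^*,k))$ with the sandwich form $\Sigma(\theta^*,k) = (G^\top W G)^{-1}G^\top W S W G(G^\top W G)^{-1}$, which reduces to $(G^\top S^{-1}G)^{-1}$ for the efficient two-step choice $W = S^{-1}$; the Delta method, using that $f_{\text{tar}}$ is continuously differentiable at $\theta^*$ (Assumption~\ref{assump:standard-gmm}(e)), then yields the stated normality of $\widehat{\beta}_T$. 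I expect the main obstacle to be the martingale CLT step, specifically establishing that the conditional variance converges to a \emph{constant} matrix: this is exactly where Condition~(iv) is indispensable, since without convergence of $\kappa_T$ the matrix $\frac{1}{T}\sum_t m(s_t)m(s_t)^\top$ and hence the asymptotic covariance need not converge, and the limiting law could be a variance mixture rather than a Gaussian. Verifying the conditional Lindeberg condition with dependent data is the other delicate point, but the $(2+\delta)$-moment assumption handles it through the standard truncation bound above.
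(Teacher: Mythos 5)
Your proposal is correct and takes essentially the same route as the paper's proof: the GMM first-order condition with a mean-value expansion, the uniform law for dependent data (Lemma~\ref{lemma:apdx-uniform-convergence-dependent-data}) combined with Conditions~(i), (ii), and (iv) to pin down the limits of the Jacobian and weight matrix, the martingale CLT (Proposition~\ref{prop:apdx-martingale-clt}) via Cram\'er--Wold with the conditional variance converging by Condition~(iv), and Slutsky plus the Delta method. Your only cosmetic deviation is verifying the conditional Lindeberg condition through the explicit truncation inequality, whereas the paper checks the Lyapunov condition and cites that it implies Lindeberg---the same argument in substance.
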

\begin{proof}
We follow a standard GMM asymptotic normality proof (e.g. \citet[Theorem~3.4]{newey1994large}) and modify it to work for dependent data. Applying the GMM first-order condition to the two-step GMM estimator, we get
\begin{align*}
    \sqrt{T} (\widehat{\theta}_T - \theta^*) = \left[ \widehat{G}^\top(\widehat{\theta}_T) \widehat{\Omega}(\widehat{\theta}^{(\text{os})}_T)^{-1} \widehat{G}(\Tilde{\theta})  \right]^{-1} \widehat{G}^\top(\widehat{\theta}_T) \widehat{\Omega}(\widehat{\theta}^{(\text{os})}_T)^{-1} \frac{1}{\sqrt{T}} \sum_{t=1}^{T} g_i(\theta^*),
\end{align*}
where $\widehat{\theta}^{(\text{os})}_T$ is the one-step GMM estimator, $\Tilde{\theta}$ is a point on the line-segment joining $\widehat{\theta}_T$ and $\theta^*$,
\begin{align*}
    \widehat{G}(\theta) &= \frac{1}{T} \sum_{t=1}^T \frac{\partial g_t(\theta)}{\partial \theta} \\
        &= \frac{1}{T} \sum_{t=1}^T \frac{\partial m(s_t) \otimes \Tilde{g}_t(\theta)}{\partial \theta} \\
        &= \frac{1}{T} \sum_{t=1}^T \left( \underbrace{\left[ m(s_t), m(s_t), \hdots, m(s_t) \right]}_{D \, \text{times}}  \otimes \left[\frac{\partial \Tilde{g}_t(\theta)}{\partial \theta} \right] \right), \\
        &= \frac{1}{T} \sum_{t=1}^T \left( m_G(s_t)  \otimes \left[\frac{\partial \Tilde{g}_t(\theta)}{\partial \theta} \right] \right), \, \text{and} \\
    \widehat{\Omega}(\theta) &= \frac{1}{T} \sum_{t=1}^T \left[ g_t(\theta) g_t(\theta)^\top \right] \\
        &= \frac{1}{T} \sum_{t=1}^T \left( \left[ m(s_t) m(s_t)^\top \right] \otimes \left[ \Tilde{g}_t(\theta) \Tilde{g}_t(\theta)^\top \right] \right), \\
        &= \frac{1}{T} \sum_{t=1}^T \left( m_{\Omega}(s_t) \otimes \left[ \Tilde{g}_t(\theta) \Tilde{g}_t(\theta)^\top \right] \right),
\end{align*}
where $m_G(s_t) = \underbrace{\left[ m(s_t), m(s_t), \hdots, m(s_t) \right]}_{D \, \text{times}}$ is a $M \times D$ matrix and $m_{\Omega}(s_t) = m(s_t) m(s_t)^\top$.

\paragraph{Convergence of $\widehat{G}(\widehat{\theta}_T)$.}

Let $G(\theta) = \E\left[\frac{\partial \Tilde{g}_t(\theta)}{\partial \theta} \right]$. Applying Lemma~\ref{lemma:apdx-uniform-convergence-dependent-data} to every element of $\widehat{G}$ (using Condition~(ii)) and using the union bound, we get
\begin{align}
    \sup_{\theta \in \Theta} & \left\| \widehat{G}(\theta) - \left(\frac{1}{T} \sum_{t=1}^T m_G(s_t)\right) \otimes G(\theta)  \right\| \ConvProb 0, \nonumber \\
    \therefore\,\, \forall \epsilon > 0, \,\, & \P\left( \left\| \widehat{G}(\widehat{\theta}_T) - \left(\frac{1}{T} \sum_{t=1}^T m_G(s_t)\right) \otimes G(\widehat{\theta}_T)  \right\| > \epsilon \right) \rightarrow 0. \label{eq:apdx-conv-prob-sup-G-prob-notation}
\end{align}
Since $\kappa_T \ConvProb k$ for some constant $k$ (by Condition~(iv)), $\left(\frac{1}{T} \sum_{t=1}^T m_G(s_t)\right)$ also converges in probability to a constant matrix.
That is, $\frac{1}{T} \sum_{t=1}^T m_G(s_t) \ConvProb m^{*}_G(k)$ for some constant matrix $m^{*}_G(k)$ that only depends on $k$.
By the continuity of $G$ and the fact that $\widehat{\theta}_T \ConvProb \theta^*$ (by Condition~(i)), we have $G(\widehat{\theta}_T) \ConvProb G(\theta^*)$. Using these results with Eq.~\ref{eq:apdx-conv-prob-sup-G-prob-notation}, we get
\begin{align}
    \widehat{G}(\widehat{\theta}_T) &\ConvProb m^{*}_G(k) \otimes G(\theta) \nonumber \\
        &= G_{*}(\theta^*, k), \label{eq:apdx-convergence-of-G-theta-hat} \\
    \text{Similarly,} \,\, \widehat{G}(\Tilde{\theta}) &\xrightarrow[(a)]{p} G_{*}(\theta^*, k), \label{eq:apdx-convergence-of-G-theta-tilde}
\end{align}
where $G_{*}(\theta^*, k) = m^{*}_G(k) \otimes G(\theta^*)$ and (a) follows because $\Tilde{\theta} \ConvProb \theta^*$.

\paragraph{Convergence of the weight matrix $\widehat{W}$.}
Let $\Omega(\theta) = \E\left[ \Tilde{g}_t(\theta) \Tilde{g}_t(\theta)^\top \right]$.
By applying Lemma~\ref{lemma:apdx-uniform-convergence-dependent-data} to every element of $\widehat{\Omega}$ (using Condition~(iii)) and the union bound, we get
\begin{align}
    \sup_{\theta \in \Theta} & \left\| \widehat{\Omega}(\theta) - \left(\frac{1}{T} \sum_{t=1}^T m_\Omega(s_t)\right) \otimes \Omega(\theta) \right\| \ConvProb 0, \nonumber \\
    \therefore \,\, \forall \epsilon > 0, & \P\left( \left\| \widehat{\Omega}(\widehat{\theta}^{(\text{os})}_T) - \left(\frac{1}{T} \sum_{t=1}^T m_\Omega(s_t)\right) \otimes \Omega(\widehat{\theta}^{(\text{os})}_T) \right\| > \epsilon \right) \rightarrow 0. \label{eq:apdx-omega-conv-prob-prob-notation}
\end{align}
Since $\kappa_T \ConvProb k$ for some constant k (by Condition~(iv)), $\left(\frac{1}{T} \sum_{t=1}^T m_\Omega(s_t)\right) \ConvProb m^{*}_\Omega(k)$ for some constant matrix $m^{*}_\Omega(k)$ that only depends on $k$.
By continuity of $\Omega$ and the fact that $\widehat{\theta}^{(\text{os})}_T \ConvProb \theta^*$ (which follows by Proposition~\ref{prop:consistency}), we have $\Omega(\widehat{\theta}^{(\text{os})}_T) \ConvProb \Omega(\theta^*)$. Using these results with Eq.~\ref{eq:apdx-omega-conv-prob-prob-notation}, we get
\begin{align}
    \widehat{\Omega}(\widehat{\theta}^{(\text{os})}_T) &\ConvProb m^{*}_\Omega(k) \otimes \Omega(\theta^*) \nonumber \\
        &= \Omega_{*}(\theta^*, k), \nonumber \\
    \therefore\,\, \widehat{W} = \widehat{\Omega}(\widehat{\theta}^{(\text{os})}_T)^{-1} &\ConvProb \Omega_{*}(\theta^*, k)^{-1}, \label{eq:apdx-convergence-of-Omega}
\end{align}
where $\Omega_{*}(\theta^*, k) = m^{*}_\Omega(k) \otimes \Omega(\theta^*)$.

\paragraph{Asymptotic normality of $\frac{1}{\sqrt{T}} \sum_{t=1}^{T} g_i(\theta^*)$.}
For this part, we use the Cramer-Wold theorem and the martingale CLT in Proposition~\ref{prop:apdx-martingale-clt}. For any $v \in \R^M$ s.t. $\left\|v\right\| = 1$, $\frac{v^\top g_i(\theta^*)}{\sqrt{T}}$ is a MDS because $\E\left[ v^\top g_i(\theta^*) | H_{i-1} \right] = v^\top \E\left[ g_i(\theta^*) | H_{i-1} \right] = 0$.
We now show that the two conditions of Proposition~\ref{prop:apdx-martingale-clt} apply to this MDS.

\textit{(i) Conditional Lindeberg:} The Lyapunov condition implies the Lindeberg condition \citep[pg.~6]{bell2015lindeberg}. In our case, the Lyapunov condition is easier to check and we show that it holds. For some $\delta > 0$, we have
\begin{align*}
    \frac{1}{T^{1+\delta/2}} \sum_{i=1}^T \left| v^\top g_i(\theta^*) \right|^{2+\delta} &\overset{(a)}{\leq} \frac{1}{T^{1+\delta/2}} \sum_{i=1}^T \left\| v \right\|^{2+\delta} \left\| g_i(\theta^*) \right\|^{2+\delta} \\
    &\overset{(b)}{=} \frac{1}{T^{1+\delta/2}} \sum_{i=1}^T \left\| g_i(\theta^*) \right\|^{2+\delta} \\
\therefore\,\, \frac{1}{T^{1+\delta/2}} \sum_{i=1}^T \E\left[ \left| v^\top g_i(\theta^*) \right|^{2+\delta} \big| H_{i-1} \right] &\leq \frac{1}{T^{1+\delta/2}} \sum_{i=1}^T \E\left[ \left\| g_i(\theta^*) \right\|^{2+\delta} \big| H_{i-1} \right] \\
    &= \frac{1}{T^{1+\delta/2}} \sum_{i=1}^T \E\left[ \left\| m(s_i) \otimes \Tilde{g}_i(\theta^*) \right\|^{2+\delta} \right] \\
    &\overset{(c)}{\leq} \frac{1}{T^{1+\delta/2}} \sum_{i=1}^T \E\left[ \left\| \Tilde{g}_i(\theta^*) \right\|^{2+\delta} \right] \\
    &\overset{(d)}{\rightarrow} 0,
\end{align*}
where (a) follows by Cauchy-Schwarz, (b) because $\|v\|=1$, (c) because $m(s_i)$ is a binary vector, and (d) because $\E\left[ \left\| \Tilde{g}_i(\theta^*) \right\|^{2+\delta} \right] < \infty$ (by Condition (iii)). 

\textit{(ii) Convergence of conditional variance:} The conditional variance can be written as
\begin{align*}
    \frac{1}{T} \sum_{t=1}^T \E\left[ v^\top g_t(\theta^*) g_t(\theta^*)^\top v \big| H_{i-1} \right] &= \frac{1}{T} \sum_{t=1}^T v^\top \E\left[ g_t(\theta^*) g_t(\theta^*)^\top \big| H_{i-1} \right] v \\
    &= v^\top \left[ \left( \frac{1}{T} \sum_{t=1}^T m(s_t) m(s_t)^\top\right) \otimes \Omega(\theta^*) \right] v \\
    &\xrightarrow[p]{(a)} v^\top \left[ m^{*}_{\Omega}(k) \otimes \Omega(\theta^*) \right] v \\
    &= v^\top \left[ \Omega_{*}(\theta^*, k) \right] v,
\end{align*}
where (a) holds because $\kappa_T \ConvProb k$ (by Condition~(iv)).
Thus, using Proposition~\ref{prop:apdx-martingale-clt}, $\forall v \in \R^M$ s.t. $\|v\|=1$, we have
\begin{align*}
    \frac{1}{\sqrt{T}} \sum_{i=1}^T v^\top g_i(\theta^*) v &\xrightarrow{d} \mathcal{N}\left(0, v^\top \Omega_{*}(\theta^*, k) v \right).
\end{align*}    
Thus, by the Cramer-Wold theorem, we get
\begin{align}
    \frac{1}{\sqrt{T}} \sum_{i=1}^T g_i(\theta^*) &\xrightarrow{d} \mathcal{N}\left(0, \Omega_{*}(\theta^*, k) \right). \label{eq:apdx-convergence-in-dist-sum-g}
\end{align}

\paragraph{Asymptotic normality of $\widehat{\theta}_T$}
By Eqs.~\ref{eq:apdx-convergence-of-G-theta-hat}, \ref{eq:apdx-convergence-of-G-theta-tilde}, \ref{eq:apdx-convergence-of-Omega}, and \ref{eq:apdx-convergence-in-dist-sum-g}, and Slutsky's theorem, we get
\begin{align*}
    \sqrt{T} (\widehat{\theta}_T - \theta^*) &\overset{d}{\rightarrow} \mathcal{N}\left( 0, \Sigma(\theta^*, k) \right), \\
    \text{where} \,\, \Sigma(\theta^*, k) &= \left[ G^\top_{*}(\theta^*, k) \left(\Omega_{*}(\theta^*, k)^{-1}\right) G_{*}(\theta^*, k) \right]^{-1}.
\end{align*}
\end{proof}

\subsection{Proof of Theorem~\ref{thm:etc-regret} (Regret of OMS-ETC)}\label{sec:apdx-proof-etc-regret}

\begin{lemma}[Consistency of
$\widehat{k}_t$]\label{lemma:apdx-kappa-hat-consistency}
Suppose that Assumption~\ref{assump:kappa-star-identify} holds.
If $\widehat{\theta}_t \ConvProb \theta^*$, then $\widehat{k}_t \ConvProb \kappa^*$ where $\widehat{k}_t = \arg\min_{\kappa \in \ChoiceSimplex} V(\widehat{\theta}_{t}, \kappa)$.
\end{lemma}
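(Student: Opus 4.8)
The plan is to treat this as a standard extremum-estimator (argmin) consistency result, specialized to the convenient case in which the objective $V(\theta,\kappa)$ carries no sample average: all of the sampling randomness enters only through the plug-in $\widehat{\theta}_t$, so the usual uniform-law step collapses to a uniform-continuity argument. I would follow the template of the argmax consistency theorem of \citet[Theorem~2.1]{newey1994large}, verifying its four hypotheses: (a) the limiting objective $V(\theta^*,\cdot)$ is uniquely minimized at $\kappa^*$, which is exactly Assumption~\ref{assump:kappa-star-identify}; (b) the feasible set $\ChoiceSimplex$ is compact, being the probability simplex; (c) $V(\theta^*,\cdot)$ is continuous on $\ChoiceSimplex$; and (d) $\sup_{\kappa\in\ChoiceSimplex}|V(\widehat{\theta}_t,\kappa)-V(\theta^*,\kappa)| \ConvProb 0$. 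For (c), recall from the proof of Proposition~\ref{prop:asymptotic-normality} that $V(\theta,\kappa) = \nabla_\theta f_{\text{tar}}(\theta)^\top \Sigma(\theta,\kappa) \nabla_\theta f_{\text{tar}}(\theta)$ with $\Sigma(\theta,\kappa) = [G_*^\top(\theta,\kappa) \Omega_*(\theta,\kappa)^{-1} G_*(\theta,\kappa)]^{-1}$, where $G_*(\theta,\kappa)=m_G^*(\kappa)\otimes G(\theta)$ and $\Omega_*(\theta,\kappa)=m_\Omega^*(\kappa)\otimes\Omega(\theta)$. The maps $\kappa\mapsto m_G^*(\kappa),m_\Omega^*(\kappa)$ are polynomial; $G(\theta)$ and $\Omega(\theta)$ are continuous by Assumption~\ref{assump:standard-gmm}(c)--(d); $\nabla_\theta f_{\text{tar}}$ is continuous near $\theta^*$ by Assumption~\ref{assump:standard-gmm}(e); and matrix inversion is continuous on the nonsingular matrices. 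Hence $V$ is jointly continuous on a neighborhood $N_{\epsilon}(\theta^*)\times\ChoiceSimplex$ wherever the inverted matrices stay nonsingular.

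Next I would obtain the uniform convergence (d) directly from (c). Since $V$ is continuous on the compact set $N_{\epsilon}(\theta^*)\times\ChoiceSimplex$, the Heine--Cantor theorem makes it uniformly continuous there, so for every $\epsilon>0$ there is $\delta>0$ with $\|\theta-\theta^*\|\le\delta \implies \sup_{\kappa\in\ChoiceSimplex}|V(\theta,\kappa)-V(\theta^*,\kappa)|\le\epsilon$. Because $\widehat{\theta}_t\ConvProb\theta^*$, the event $\{\|\widehat{\theta}_t-\theta^*\|\le\delta\}$ has probability tending to one, which yields (d).

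Finally I would combine these. On the event just described, using that $\widehat{k}_t$ minimizes $V(\widehat{\theta}_t,\cdot)$,
\[
V(\theta^*,\widehat{k}_t) \le V(\widehat{\theta}_t,\widehat{k}_t)+\epsilon \le V(\widehat{\theta}_t,\kappa^*)+\epsilon \le V(\theta^*,\kappa^*)+2\epsilon,
\]
so $V(\theta^*,\widehat{k}_t)-V(\theta^*,\kappa^*)\le 2\epsilon$ with probability tending to one. By Assumption~\ref{assump:kappa-star-identify} together with compactness and continuity, $\kappa^*$ is a \emph{well-separated} minimizer: for any open neighborhood $N$ of $\kappa^*$, the quantity $\inf_{\kappa\in\ChoiceSimplex\setminus N} V(\theta^*,\kappa) - V(\theta^*,\kappa^*) =: 2\eta$ is strictly positive. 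Choosing $\epsilon<\eta$ forces $\widehat{k}_t\in N$ with probability tending to one, i.e. $\widehat{k}_t\ConvProb\kappa^*$.

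I expect the only genuine subtlety to be the well-posedness of $V$ near the boundary of $\ChoiceSimplex$: if some coordinate of $\kappa$ hits zero a data source is never sampled, so $\Omega_*(\theta,\kappa)$ can degenerate and $V$ may blow up. This does not threaten the conclusion---an objective that is large on the boundary only reinforces separation of the minimizer---but it means the continuity and Heine--Cantor claims should be made on the region of $\ChoiceSimplex$ on which $\Omega_*$ remains nonsingular, or one must argue separately that such boundary behaviour cannot trap $\widehat{k}_t$ away from $\kappa^*$. Handling that case cleanly is the step I would take the most care with.
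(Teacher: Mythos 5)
Your proof is correct and takes essentially the same route as the paper, whose entire proof is a one-line appeal to continuity of $V$, compactness of $\ChoiceSimplex$, and Assumption~\ref{assump:kappa-star-identify}---i.e., exactly the standard argmin-consistency argument you instantiate, with uniform convergence reduced to uniform continuity since randomness enters only through $\widehat{\theta}_t$. Your fleshed-out version, including the caveat that $V(\theta,\kappa)$ may degenerate where $\Omega_{*}(\theta,\kappa)$ becomes singular near the boundary of $\ChoiceSimplex$, supplies details (and a genuine well-posedness subtlety) that the paper's terse proof leaves implicit.
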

\begin{proof}
By continuity of $V$, compactness of $\ChoiceSimplex$, and Assumption~\ref{assump:kappa-star-identify}, $\widehat{k}_t \overset{p}{\rightarrow} \arg\min_{\kappa \in \ChoiceSimplex} V(\theta^*, \kappa) = \kappa^*$.
\end{proof}

\begin{theorem*}[Regret of OMS-ETC]
Suppose that (i) Conditions (i)-(iii) of Proposition~\ref{prop:asymptotic-normality} hold
and
(ii) Assumption~\ref{assump:kappa-star-identify} holds.
Case (a): For a fixed $e \in (0, 1)$,
if
$\kappa^* \in \Tilde{\Delta}$, then the regret converges to zero: $R_\infty(\pi_{\text{ETC}}) = 0$.
If $\kappa^* \notin \Tilde{\Delta}$, then $\pi_{\text{ETC}}$ suffers constant regret: $R_\infty(\pi_{\text{ETC}}) = r$ for some constant $r > 0$.
Case (b): If $e$ depends on $T$ such that $e = o(1)$ 
and $Te \rightarrow \infty$ as $T \to \infty$ (e.g. $e = \frac{1}{\sqrt{T}}$),
then $\forall \theta^* \in \Theta$,
we have $R_\infty(\pi_{\text{ETC}}) = 0$.
\end{theorem*}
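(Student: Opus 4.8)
The plan is to reduce the whole statement to a single convergence fact about the selection ratio---namely that $\kappa_T$ converges in probability to a constant $k\in\ChoiceSimplex$---and then read off the regret from Proposition~\ref{prop:asymptotic-normality}, whose condition~(iv) is precisely this convergence. First I would set up the consistency chain that drives everything. During exploration the agent queries each source uniformly, so $\kappa_{\lfloor Te\rfloor}=\ctrSim$ is a fixed interior ratio; since $Te\to\infty$ in both cases, Proposition~\ref{prop:consistency} (applied to this fixed uniform-exploration policy, for which every moment is collected a constant fraction of the time, so the identification assumption holds) gives $\widehat{\theta}_{\lfloor Te\rfloor}\ConvProb\theta^*$. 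Feeding this into Lemma~\ref{lemma:apdx-kappa-hat-consistency} together with Assumption~\ref{assump:kappa-star-identify} yields $\widehat{k}\ConvProb\kappa^*$.

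The core step is to transfer this to $\kappa_T=\mathrm{proj}(\widehat{k},\Tilde{\Delta})$. The feasible region $\Tilde{\Delta}$ is convex (an affine contraction of $\ChoiceSimplex$), so Euclidean projection onto it is non-expansive; combining this with the triangle inequality through the point $\mathrm{proj}(\kappa^*,\Tilde{\Delta})\in\Tilde{\Delta}$ gives
\[
\left\|\kappa_T-\kappa^*\right\|\;\le\;\left\|\mathrm{proj}(\widehat{k},\Tilde{\Delta})-\mathrm{proj}(\kappa^*,\Tilde{\Delta})\right\|+\left\|\mathrm{proj}(\kappa^*,\Tilde{\Delta})-\kappa^*\right\|\;\le\;\left\|\widehat{k}-\kappa^*\right\|+\mathrm{dist}(\kappa^*,\Tilde{\Delta}).
\]
The first term is $o_p(1)$ by the previous paragraph, so the behaviour of $\kappa_T$ is governed entirely by the deterministic geometric quantity $\mathrm{dist}(\kappa^*,\Tilde{\Delta})$.

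Case~(a) then splits on this distance for fixed $e$. If $\kappa^*\in\Tilde{\Delta}$ the distance is zero, so $\kappa_T\ConvProb\kappa^*$ and Proposition~\ref{prop:asymptotic-normality} gives $\Amse(\sqrt{T}(\widehat{\beta}_T-\beta^*))=V(\theta^*,\kappa^*)$, i.e.\ $R_\infty(\pi_{\text{ETC}})=0$. If $\kappa^*\notin\Tilde{\Delta}$ the deterministic limit is the boundary point $\kappa':=\mathrm{proj}(\kappa^*,\Tilde{\Delta})\neq\kappa^*$, so $\kappa_T\ConvProb\kappa'$ and, by Assumption~\ref{assump:kappa-star-identify}, the regret equals the strictly positive constant $r=V(\theta^*,\kappa')-V(\theta^*,\kappa^*)>0$. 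For Case~(b) I would bound the distance by exhibiting the explicit feasible point $e\,\ctrSim+(1-e)\kappa^*\in\Tilde{\Delta}$, which gives $\mathrm{dist}(\kappa^*,\Tilde{\Delta})\le e\left\|\ctrSim-\kappa^*\right\|\to0$ since $e=o(1)$ and $\left\|\ctrSim-\kappa^*\right\|$ is a fixed constant; hence $\kappa_T\ConvProb\kappa^*$ for every $\theta^*\in\Theta$ and $R_\infty(\pi_{\text{ETC}})=0$.

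The main obstacle is the Case~(b) limit, where the projection target $\Tilde{\Delta}$ itself moves with $T$, so one cannot simply invoke continuity of a single fixed projection map. The decomposition above is what resolves this: it cleanly separates the stochastic part ($\widehat{k}\ConvProb\kappa^*$) from the moving but purely deterministic geometry, and the explicit feasible point $e\,\ctrSim+(1-e)\kappa^*$ is the device that forces $\mathrm{dist}(\kappa^*,\Tilde{\Delta})$ to vanish at rate $O(e)$ uniformly over $\Theta$. A minor point to verify is that the limits $\kappa^*$ and $\kappa'$ are genuine constants, so that condition~(iv) of Proposition~\ref{prop:asymptotic-normality} is met and the transfer from selection-ratio convergence to the AMSE (and hence to the regret) is immediate.
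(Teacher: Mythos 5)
Your proof is correct and shares the paper's overall skeleton---reduce everything to showing $\kappa_T \ConvProb$ a constant and then read the regret off Proposition~\ref{prop:asymptotic-normality}---but your execution of the key step is genuinely different and in places tighter. For Case~(b), the paper simply asserts that $\Tilde{\Delta} \to \ChoiceSimplex$ as $e = o(1)$ and hence $\kappa_T - \widehat{k} \ConvProb 0$; your decomposition $\left\| \kappa_T - \kappa^* \right\| \leq \left\| \widehat{k} - \kappa^* \right\| + \mathrm{dist}\left(\kappa^*, \Tilde{\Delta}\right)$, using non-expansiveness of Euclidean projection onto the convex set $\Tilde{\Delta}$ together with the explicit feasible point $e\,\ctrSim + (1-e)\kappa^* \in \Tilde{\Delta}$, makes this rigorous with a quantitative $O(e)$ bound, and it correctly sidesteps the issue that the projection target moves with $T$ (non-expansiveness holds for each $T$'s set separately, so the bound is uniform). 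In Case~(a) with $\kappa^* \notin \Tilde{\Delta}$, the two proofs identify \emph{different} limit points: you obtain $\kappa' = \mathrm{proj}(\kappa^*, \Tilde{\Delta})$ by continuity of the projection, which is the faithful limit given that the algorithm sets $\kappa_T = \mathrm{proj}(\widehat{k}, \Tilde{\Delta})$, whereas the paper asserts $\kappa_T \ConvProb \Bar{\kappa} = \arg\min_{\kappa \in \Tilde{\Delta}} V(\theta^*, \kappa)$, which is in general a different point of $\Tilde{\Delta}$ (the Euclidean projection of $\kappa^*$ need not minimize $V(\theta^*,\cdot)$ over $\Tilde{\Delta}$). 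Since both candidate limits lie in $\Tilde{\Delta}$ and differ from $\kappa^*$, Assumption~\ref{assump:kappa-star-identify} yields a strictly positive constant regret either way, so the theorem's conclusion is unaffected---but your identification of the limit is the one consistent with the algorithm as stated in Figure~\ref{fig:policy-algorithm-etc}. One minor caveat: you derive $\widehat{\theta}_{Te} \ConvProb \theta^*$ by invoking Proposition~\ref{prop:consistency} for the uniform exploration policy, whose hypotheses (Property~\ref{property:mod-of-continuity} for the moments) are not literally among the theorem's assumptions; the paper instead reads this consistency directly off Condition~(i) of Proposition~\ref{prop:asymptotic-normality}, which is the cleaner citation under the stated hypotheses, though your route is harmless whenever those regularity conditions hold.
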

\begin{proof}
We first analyze Case~(a) of the theorem where $e$ is fixed. By Condition~(i), $\widehat{\theta}_{Te} \overset{p}{\rightarrow} \theta^*$. We have $\widehat{k} = \arg\min_{\kappa \in \ChoiceSimplex} V(\widehat{\theta}_{Te}, \kappa)$
and therefore
$\widehat{k} \overset{p}{\rightarrow} \kappa^*$ (by Lemma~\ref{lemma:apdx-kappa-hat-consistency}).
Thus, if $\kappa^* \in \Tilde{\Delta}$, then $\kappa_T \ConvProb \widehat{k}$ and therefore $\kappa_T \ConvProb \kappa^*$. Using Proposition~\ref{prop:asymptotic-normality}, we get
\begin{align*}
    & \sqrt{T} \left( \widehat{\beta}_T - \beta^* \right) \overset{d}{\rightarrow} \mathcal{N}\left(0, V(\theta^*, \kappa^*) \right) \\
    \therefore\,\, & R_\infty(\pi_{\text{ETC}}) = V(\theta^*, \kappa^*) - V(\theta^*, \kappa^*) = 0.
\end{align*}
If $\kappa^* \notin \Tilde{\Delta}$, then $\kappa_T \overset{p}{\rightarrow} \Bar{\kappa} \neq \kappa^*$, where $\Bar{\kappa} = \arg\min_{\kappa \in \Tilde{\Delta}} V(\theta^*, \kappa)$. Using Proposition~\ref{prop:asymptotic-normality}, we have
\begin{align*}
    & \sqrt{T} \left( \widehat{\beta}_T - \beta^* \right) \overset{d}{\rightarrow} \mathcal{N}\left(0, V(\theta^*, \Bar{\kappa}) \right) \\
    \therefore\,\, & R_\infty(\pi_{\text{ETC}}) = V(\theta^*, \Bar{\kappa}) - V(\theta^*, \kappa^*) \overset{(a)}{>} 0,
\end{align*}
where (a) follows by Condition~(ii).

Now we analyze part (b) of the theorem. When $e$ depends on $T$ such that $e = o(1)$, the feasible region converges to the entire simplex: $\Tilde{\Delta} \to \ChoiceSimplex$ as $T \to \infty$. Thus $\kappa_T - \widehat{k} \ConvProb 0$. Furthermore, since $Te \rightarrow \infty$ as $T \rightarrow \infty$, we have $\widehat{k} \ConvProb \kappa^*$ and therefore $\kappa_T \ConvProb \kappa^*$.
Using Proposition~\ref{prop:asymptotic-normality}, we get the desired result.
\end{proof}

\subsection{Proof of Lemma~\ref{lemma:gmm-conc-inequality} (GMM concentration inequality)}\label{sec:apdx-proof-lemma-concentration}

\begin{proposition}[MDS concentration inequality {\citep[Theorem~2.19]{wainwright2019high}}]\label{prop:apdx-mds-concentration}
Let $\{ ( D_k, \mathcal{F}_k ) \}_{k=1}^{\infty}$ be a MDS, and suppose that $\E\left[ \exp\left\{ \lambda D_k \right\} | \mathcal{F}_{k-1} \right] \leq \exp\left\{ \frac{ \lambda^2 \nu^2}{2} \right\}$ almost surely for any $\lambda < \frac{1}{\alpha}$. Then the sum satisfies the concentration inequality
\begin{align*}
    \P\left( \left| \frac{1}{n} \sum_{k=1}^{n} D_k \right| > \eta \right) \leq 2 \exp\left\{ -\frac{n \eta^2}{2 \nu^2} \right\} \,\, \text{if} \,\, 0 \leq \eta < \frac{\nu^2}{\alpha}. 
\end{align*}
\end{proposition}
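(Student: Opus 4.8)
The plan is to prove this by the standard Chernoff (exponential Markov) argument, adapting the proof for independent sums to the martingale setting by peeling off one term at a time via the tower property. The conditional sub-exponential hypothesis is exactly what allows the dependence to be absorbed at each stage.

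First I would handle the upper tail. Fix $\lambda \in (0, 1/\alpha)$. By Markov's inequality applied to the exponentiated partial sum, $\P\bigl( \sum_{k=1}^n D_k > n\eta \bigr) \leq e^{-\lambda n \eta}\, \E\bigl[ \exp\{ \lambda \sum_{k=1}^n D_k \} \bigr]$. The crux is bounding the moment generating function of the sum without independence. I would condition on $\mathcal{F}_{n-1}$: since $\sum_{k=1}^{n-1} D_k$ is $\mathcal{F}_{n-1}$-measurable, the tower property gives $\E[ \exp\{ \lambda \sum_{k=1}^{n} D_k \} ] = \E[ \exp\{ \lambda \sum_{k=1}^{n-1} D_k \}\, \E[ \exp\{ \lambda D_n \} \mid \mathcal{F}_{n-1} ] ]$, and the inner conditional MGF is at most $\exp\{ \lambda^2 \nu^2 / 2 \}$ by hypothesis, so it factors out as a deterministic constant. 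Iterating this peeling $n$ times yields $\E[ \exp\{ \lambda \sum_{k=1}^n D_k \} ] \leq \exp\{ n \lambda^2 \nu^2 / 2 \}$.

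Combining the two bounds gives $\P\bigl( \frac{1}{n} \sum_{k=1}^n D_k > \eta \bigr) \leq \exp\{ -\lambda n \eta + n \lambda^2 \nu^2 / 2 \}$ for every admissible $\lambda$. I would then minimize the exponent over $\lambda$: the unconstrained optimizer is $\lambda^\star = \eta / \nu^2$, which lies in $(0, 1/\alpha)$ precisely when $\eta < \nu^2 / \alpha$ --- exactly the stated regime. Substituting $\lambda^\star$ collapses the exponent to $-n \eta^2 / (2\nu^2)$, giving the one-sided bound $\P( \frac{1}{n} \sum_k D_k > \eta ) \leq \exp\{ -n\eta^2 / (2\nu^2) \}$. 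For the lower tail I would run the identical argument on $\{ -D_k \}$, which is again a martingale difference sequence; since $(-\lambda)^2 = \lambda^2$ and the hypothesis covers all arguments below $1/\alpha$ (in particular negative ones), $-D_k$ inherits the same conditional MGF bound, so the same one-sided estimate holds. A union bound over the two tail events produces the factor of $2$ in the final inequality.

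The main obstacle is conceptual rather than computational: recognizing that the martingale dependence cannot be handled by factoring the joint MGF into a product (as one would for independent variables) and must instead be dissolved by sequential conditioning, which reduces the $n$-fold MGF to a product of the conditional sub-exponential bounds. The only bookkeeping subtlety is verifying that the Chernoff optimizer $\lambda^\star$ remains strictly below $1/\alpha$, which is exactly the source of the restriction $\eta < \nu^2 / \alpha$ in the conclusion.
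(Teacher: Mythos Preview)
Your argument is correct and is exactly the standard Chernoff-plus-tower-property proof that underlies this result. Note, however, that the paper does not supply its own proof of this proposition: it is stated with a direct citation to \citep[Theorem~2.19]{wainwright2019high} and used as a black box, so there is no paper proof to compare against beyond observing that your write-up matches the textbook derivation the paper is invoking.
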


\begin{lemma}[Uniform law for dependent data]\label{lemma:apdx-uniform-law-dependent-data}
Let $a_i(\theta) := S_i \Tilde{a}(\theta; X_i)$, where $a_i$ is a real-valued function, $S_i \in \{0, 1\}$ is $H_{i-1}$-measurable, and $X_i \overset{\text{iid}}{\sim} \P_{\theta^*}$.
Let $\Tilde{a}_{*}(\theta) = \E\left[ \Tilde{a}(\theta; X_i) \right]$.
Suppose that $\Tilde{a}(\theta)$ satisfies Property~\ref{property:concentration}.
Note that $\E\left[ a_i(\theta) | H_{i-1} \right] = S_i \Tilde{a}_*(\theta)$. Then, for some constant $\delta_0 > 0$ and $\forall \delta \in (0, \delta_0)$, 
\begin{align*}
    \P\left( \sup_{\theta \in \Theta} \left| \frac{1}{T} \sum_{i=1}^{T} \left[ a_i(\theta) - S_i \Tilde{a}_*(\theta) \right] \right| > \delta \right) < \frac{1}{\delta^D} \exp\left\{ -\OM \left( T \delta^2 \right) \right\}.
\end{align*}
\end{lemma}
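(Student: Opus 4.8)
The plan is to run a net (discretization) argument combined with the martingale concentration bound of Proposition~\ref{prop:apdx-mds-concentration}, mirroring the convergence-in-probability argument of Lemma~\ref{lemma:apdx-uniform-convergence-dependent-data} but upgrading each ingredient to a finite-sample exponential tail. Since $\Theta \subset \R^D$ is compact, I would fix a minimal $\delta'$-cover $\{\theta_1, \dots, \theta_K\}$ with $K \leq (C/\delta')^D$ for some constant $C$, choosing the net radius $\delta' = c_0 \delta$ with a small constant $c_0$ to be fixed later. Writing any $\theta \in N_{\delta'}(\theta_k)$ against its nearest net point and using $|a_i(\theta) - a_i(\theta_k)| = S_i|\tilde{a}(\theta; X_i) - \tilde{a}(\theta_k; X_i)| \leq u_i(\delta')$ together with the $L_1$-Lipschitz bound $|\tilde{a}_*(\theta) - \tilde{a}_*(\theta_k)| \leq L_1 \delta'$ from Property~\ref{property:concentration}(i), I would reduce the supremum to
\begin{align*}
  \sup_{\theta \in \Theta} \left| \frac{1}{T} \sum_{i=1}^T [a_i(\theta) - S_i \tilde{a}_*(\theta)] \right| \leq \max_{k \in [K]} \left| \frac{1}{T} \sum_{i=1}^T [a_i(\theta_k) - S_i \tilde{a}_*(\theta_k)] \right| + \frac{1}{T} \sum_{i=1}^T u_i(\delta') + L_1 \delta'.
\end{align*}

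Next I would bound the net term. For each fixed $\theta_k$, $D_i := a_i(\theta_k) - S_i \tilde{a}_*(\theta_k) = S_i[\tilde{a}(\theta_k; X_i) - \tilde{a}_*(\theta_k)]$ is a martingale difference sequence with respect to the filtration generated by $H_{i-1}$: indeed $S_i$ is $H_{i-1}$-measurable and $X_i$ is independent of $H_{i-1}$, so $\E[D_i | H_{i-1}] = S_i\, \E[\tilde{a}(\theta_k; X_i) - \tilde{a}_*(\theta_k)] = 0$. Because $S_i \in \{0,1\}$ is $H_{i-1}$-measurable and the centered increment is sub-Exponential (Property~\ref{property:concentration}(ii), with parameters uniform in $\theta$), the conditional MGF $\E[\exp\{\lambda D_i\} | H_{i-1}]$ inherits the sub-Exponential bound required by Proposition~\ref{prop:apdx-mds-concentration}. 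Applying that proposition at level $\delta/3$ gives a $2\exp\{-\OM(T\delta^2)\}$ tail for each net point---valid precisely when $\delta/3 < \nu^2/\alpha$, which is the origin of the threshold $\delta_0$---and a union bound over the $K \leq (C/\delta)^D$ net points yields $\P(\max_k |\cdots| > \delta/3) < \frac{1}{\delta^D}\exp\{-\OM(T\delta^2)\}$ after absorbing the constant $C^D$.

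It remains to control the modulus term $\frac{1}{T}\sum_i u_i(\delta')$ and the deterministic $L_1\delta'$ residual. Since $u_i(\delta')$ depends only on $X_i$, the summands are i.i.d., and I would write $\frac{1}{T}\sum_i u_i(\delta') = \frac{1}{T}\sum_i[u_i(\delta') - u_*(\delta')] + u_*(\delta')$. Under Property~\ref{property:concentration}(iv)(a) the centered oscillations are sub-Exponential, so a single i.i.d.\ sub-Exponential concentration bound controls the stochastic part by $\delta/3$ with probability $1 - \exp\{-\OM(T\delta^2)\}$; under (iv)(b) I would instead use $u_i(\eta) \leq \eta \sup_{\theta}\|A(X_i,\theta)\|$ (mean-value inequality) to obtain a sub-Exponential bound of scale $\delta'$, reaching the same conclusion. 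The deterministic pieces $u_*(\delta')$ and $L_1\delta'$ are then made $\leq \delta/3$ by the choice $\delta' = c_0\delta$: for $L_1\delta'$ this is immediate, and for $u_*(\delta') = \E[u_i(\delta')]$ a linear-in-$\delta'$ bound (available from (iv)(b), or for bounded classes under (iv)(a) from the uniform-continuity structure used in Lemma~\ref{lemma:apdx-uniform-convergence-dependent-data}) gives $u_*(\delta') \leq C'\delta' \leq \delta/3$. Combining the three tail bounds completes the proof.

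The main obstacle is the joint calibration of the net radius $\delta'$ and the target deviation $\delta$. The clean $1/\delta^D$ polynomial prefactor forces $\delta' \asymp \delta$, and this is consistent with keeping the deterministic modulus $u_*(\delta')$ below the per-term budget $\delta/3$ only when $u_*$ is (at worst) linear in its argument---which is exactly what the two branches of Property~\ref{property:concentration}(iv) are engineered to deliver. A secondary subtlety is that the sub-Exponential scale parameter of the centered oscillation $u_i(\delta') - u_*(\delta')$ shrinks with $\delta'$; this must be tracked so that the resulting exponent stays of order $T\delta^2$ rather than degrading, after which the three contributions assemble into the stated bound.
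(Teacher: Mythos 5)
Your proposal is correct and takes essentially the same route as the paper's proof: a minimal $\delta$-net over the compact $\Theta$, the MDS concentration inequality (Proposition~\ref{prop:apdx-mds-concentration}) with a union bound over the $O(\delta^{-D})$ net points, sub-Exponential concentration for the modulus-of-continuity term under either branch of Property~\ref{property:concentration}(iv), and the $L_1$-Lipschitz control of the discretization bias. The only small misattribution is that the linear bound $u_{*}(\delta') \leq A_0 \delta'$ follows from Property~\ref{property:concentration}(iii) via the mean-value inequality and holds in both branches (this is exactly what the paper uses), rather than requiring (iv)(b) or boundedness as you suggest.
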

\begin{proof}
Let $U = \{\theta_1, \theta_2, \hdots, \theta_N\}$ be a minimal $\delta$-cover of $\Theta$. We have $N \leq \frac{C}{\delta^D}$ for some constant $C$.
Let $q : \Theta \rightarrow U$
be a function that
returns the closest point from the cover: $q(\theta) = \arg\min_{\theta' \in U} \| \theta - \theta' \|$. We have
\begin{align*}
    & \sup_{\theta \in \Theta} \left| \frac{1}{T} \sum_{i=1}^{T} \left[ a_i(\theta) - S_i \Tilde{a}_{*}(\theta) \right] \right| \\ 
    \,\,&= \sup_{\theta \in \Theta} \left| \frac{1}{T} \sum_{i=1}^{T} \left[ a_i(\theta) - a_i(q(\theta)) + a_i(q(\theta)) - S_i \Tilde{a}_{*}(q(\theta)) + S_i \Tilde{a}_{*}(q(\theta)) - S_i \Tilde{a}(\theta) \right] \right| \\
    &\leq \sup_{\theta \in \Theta} \frac{1}{T} \sum_{i=1}^{T} \left| a_i(\theta) - a_i(q(\theta)) \right| + \max_{n \in [N]} \left| \frac{1}{T} \sum_{i=1}^{T} \left[ a_i(\theta_n) - S_i \Tilde{a}_{*}(\theta_n) \right] \right| + \sup_{\theta \in \Theta} \frac{1}{T} \sum_{i=1}^{T} S_i \left| \Tilde{a}_{*}(q(\theta)) - \Tilde{a}_{*}(\theta) \right| \\
    &= \sup_{\theta \in \Theta} \frac{1}{T} \sum_{i=1}^{T} \left| S_i \left( \Tilde{a}_i(\theta, X_i) - \Tilde{a}_i(q(\theta), X_i) \right) \right| + \max_{n \in [N]} \left| \frac{1}{T} \sum_{i=1}^{T} \left[ a_i(\theta_n) - S_i \Tilde{a}_{*}(\theta_n) \right] \right| + \sup_{\theta \in \Theta} \left| \Tilde{a}_{*}(q(\theta)) - \Tilde{a}_{*}(\theta) \right| \\
    &\leq \sup_{\theta \in \Theta} \frac{1}{T} \sum_{i=1}^{T} \left| \Tilde{a}(\theta, X_i) - \Tilde{a}_i(q(\theta), X_i) \right| + \max_{n \in [N]} \left| \frac{1}{T} \sum_{i=1}^{T} \left[ a_i(\theta_n) - S_i \Tilde{a}_{*}(\theta_n) \right] \right| + \sup_{\theta \in \Theta} \left| \Tilde{a}_{*}(q(\theta)) - \Tilde{a}_{*}(\theta) \right|.
\end{align*}
We now examine the three terms on the RHS one at a time.

\paragraph{Third term.}
By Lipschitzness of $\Tilde{a}_{*}$ (Property~\ref{property:concentration}(i)), we have:
\begin{align*}
    \sup_{\theta \in \Theta} \left| \Tilde{a}_{*}(q(\theta)) - \Tilde{a}_{*}(\theta) \right| \leq L_1 \sup_{\theta \in \Theta} \|q(\theta) - \theta\| \leq L_1 \delta.
\end{align*}

\paragraph{Second term.}
We note that it is a sum of a MDS. By Property~\ref{property:concentration}(ii) and Proposition~\ref{prop:apdx-mds-concentration}, there exists a constant $C_1 > 0$ such that for $\delta \in (0, C_1)$, we have
\begin{align*}
    \forall n \in [N], \,\, \P\left( \left| \frac{1}{T} \sum_{i=1}^{T} \left[ a_i(\theta_n) - S_i \Tilde{a}_{*}(\theta_n) \right] \right| < \delta \right) &> 1 - \exp\left\{ - \OM \left( T \delta^2 \right) \right\} \\
    \therefore\,\, \P\left( \max_{n \in [N]} \left| \frac{1}{T} \sum_{i=1}^{T} \left[ a_i(\theta_n) - S_i \Tilde{a}_{*}(\theta_n) \right] \right| < \delta \right)
    &> 1 - \P\left( \bigcup_{n \in [N]} \left| \frac{1}{T} \sum_{i=1}^{T} \left[ a_i(\theta_n) - S_i \Tilde{a}_{*}(\theta_n) \right] \right| > \delta \right) \\
    &> 1 - N \exp\left\{  -\OM \left( T \delta^2 \right) \right\} \\
        &> 1 - \frac{1}{\delta^D} \exp\left\{ - \OM \left( T \delta^2 \right) \right\}.
\end{align*}

\paragraph{First term.}
We have
\begin{align}
    u_{*}(\eta) &= \E\left[ \sup_{\substack{\theta, \theta' \in \Theta; \|\theta - \theta'\| \leq \eta}} \left| \Tilde{a}_i(\theta, X_i) - \Tilde{a}_i(\theta', X_i) \right| \right] \nonumber \\
        &\leq \E\left[ \sup_{\theta \in \Theta} \left\| A(X_i, \theta) \right\| \sup_{\substack{\theta, \theta' \in \Theta; \|\theta - \theta'\| \leq \eta}} \left\| \theta- \theta' \right\| \right] \nonumber \\
        &\leq \eta \sup_{\theta \in \Theta} \left\| A(X_i, \theta) \right\| \nonumber \\
        &\overset{(a)}{\leq} A_0 \eta. \label{eq:apdx-u-i-eta-lipchitz},
\end{align}
where (a) follows by Property~\ref{property:concentration}(iii).

Suppose that Property~\ref{property:concentration}(iv)(a) holds. Then
\begin{align*}
    \sup_{\theta \in \Theta} \frac{1}{T} \sum_{i=1}^{T} \left| \Tilde{a}_i(\theta, X_i) - \Tilde{a}_i(q(\theta), X_i) \right| &\leq \frac{1}{T} \sum_{i=1}^{T} u_i(\delta) \\
    &\leq \frac{1}{T} \sum_{i=1}^{T} u_i(\delta) - u_{*}(\delta) + u_{*}(\delta) \\
    &\overset{(a)}{\leq} \frac{1}{T} \sum_{i=1}^{T} u_i(\delta) - u_{*}(\delta) + A_0 \delta,
\end{align*}
where (a) follows by Eq.~\ref{eq:apdx-u-i-eta-lipchitz}.
By Property~\ref{property:concentration}(iv)(a), $(u_i(\delta) - u_{*}(\delta))$ is sub-Exponential. By the sub-exponential tail bound \citep[Proposition~2.9]{wainwright2019high}, for some constant $C_2 > 0$ and $\delta \in (0,  C_2)$, we have
\begin{align*}
    \P \left( \left| \frac{1}{T} \sum_{i=1}^{T} u_i(\delta) - u_{*}(\delta) \right| < \delta \right) &> 1 - \exp\left\{ - \OM\left(T \delta^2 \right) \right\} \\
    \therefore \,\, \P \left( \sup_{\theta \in \Theta} \frac{1}{T} \sum_{i=1}^{T} \left| \Tilde{a}_i(\theta, X_i) - \Tilde{a}_i(q(\theta), X_i) \right| < (A_0 + 1) \delta \right) &> 1 - \exp\left\{ - \OM\left(T \delta^2 \right) \right\} \\
    \therefore \,\, \P \left( \sup_{\theta \in \Theta} \frac{1}{T} \sum_{i=1}^{T} \left| \Tilde{a}_i(\theta, X_i) - \Tilde{a}_i(q(\theta), X_i) \right| < \delta \right) &> 1 - \exp\left\{ - \OM\left(T \delta^2 \right) \right\}.
\end{align*}
Now suppose that Property~\ref{property:concentration}(iv)(b) holds instead.
Then
\begin{align*}
    \sup_{\theta \in \Theta} \frac{1}{T} \sum_{i=1}^{T} \left| \Tilde{a}_i(\theta, X_i) - \Tilde{a}_i(q(\theta), X_i) \right| &\leq \frac{1}{T} \sum_{i=1}^{T} \sup_{\theta \in \Theta} \left\| A(X_i, \theta) \right\| \sup_{\theta \in \Theta} \left\| \theta - q(\theta) \right\| \\
        &\leq \frac{\delta}{T} \sum_{i=1}^{T} \sup_{\theta \in \Theta} \left\| A(X_i, \theta) \right\|.
\end{align*}
Since $\sup_{\theta \in \Theta} \left\| A(X_i, \theta) \right\|$ is sub-Exponential, so is $\sum_{i=1}^{T} \sup_{\theta \in \Theta} \left\| A(X_i, \theta) \right\|$. By a sub-Exponential tail bound \citep[Proposition~2.7.1(a)]{vershynin2018high}, we have for any $C_3 > 0$,
\begin{align*}
    \P\left( \frac{1}{T} \sum_{i=1}^{T} \sup_{\theta \in \Theta} \left\| A(X_i, \theta) \right\| > C_3 \right) &\leq \exp\left\{ - \OM\left( T C_3 \right) \right\} \\
    \therefore \,\, \P\left( \sup_{\theta \in \Theta} \frac{1}{T} \sum_{i=1}^{T} \left| \Tilde{a}_i(\theta, X_i) - \Tilde{a}_i(q(\theta), X_i) \right| > \delta C_3 \right) &\leq \exp\left\{ - \OM\left( T C_3 \right) \right\} \\
    \therefore \,\, \P\left( \sup_{\theta \in \Theta} \frac{1}{T} \sum_{i=1}^{T} \left| \Tilde{a}_i(\theta, X_i) - \Tilde{a}_i(q(\theta), X_i) \right| > \delta \right) &\leq \exp\left\{ - \OM\left( T \right) \right\}.
\end{align*}

Combining these results together using the union bound, we get

\begin{align*}
    \P & \left( \sup_{\theta \in \Theta} \left| \frac{1}{T} \sum_{i=1}^{T} \left[ a_i(\theta) - S_i \Tilde{a}_{*}(\theta; k) \right] \right| < (L_1 + L_2 + 2) \delta \right) \\
    &> \P\left( \max_{n \in [N]} \left| \frac{1}{T} \sum_{i=1}^{T} \left[ a_i(\theta_n) - S_i \Tilde{a}_{*}(\theta_n) \right] \right| < \delta, \,\,\, \left| \frac{1}{T} \sum_{i=1}^{T} u_i(\delta) - u_{*}(\delta) \right| < \delta \right) \\
    &> 1 - \sum_{n=1}^N \P\left( \left| \frac{1}{T} \sum_{i=1}^{T} \left[ a_i(\theta_n) - S_i \Tilde{a}_{*}(\theta_n) \right] \right| > \delta \right) - \P\left( \left| \frac{1}{T} \sum_{i=1}^{T} u_i(\delta) - u_{*}(\delta) \right| > \delta \right) \\
    &> 1 - \frac{1}{\delta^D} \exp\left\{ - \OM\left( T \delta^2 \right) \right\} \\
    \therefore \,\, \P & \left( \sup_{\theta \in \Theta} \left| \frac{1}{T} \sum_{i=1}^{T} \left[ a_i(\theta) - S_i \Tilde{a}_{*}(\theta; k)  \right] \right| < \delta \right) > 1 - \frac{1}{\delta^D} \exp\left\{ - \OM\left( T \delta^2 \right) \right\}.
\end{align*}
\end{proof}

\begin{proposition}[Boundedness and Property~\ref{property:concentration}(iv)(a)]\label{prop:apdx-boundedness-and-condition-conc}
Property~\ref{property:concentration}(iv)(a) is satisfied for bounded function classes, i.e., when $\|\Tilde{a}_i\|_\infty < A < \infty$.
\end{proposition}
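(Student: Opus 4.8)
The plan is to deduce from the uniform bound $\|\Tilde{a}_i\|_\infty < A$ that the modulus-of-continuity variable $u_i(\eta)$ is itself uniformly bounded, and then to invoke the elementary fact that a centered bounded random variable is sub-Gaussian, hence sub-Exponential. This reduces the claim to a one-line boundedness estimate followed by a standard tail-bound lemma, so I expect no real difficulty.

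First I would translate the hypothesis into a pointwise bound: $\|\Tilde{a}_i\|_\infty < A$ means $|\Tilde{a}_i(\theta)| \leq A$ almost surely for every $\theta \in \Theta$. The triangle inequality then gives, for all $\theta, \theta' \in \Theta$,
\begin{align*}
    \left| \Tilde{a}_i(\theta) - \Tilde{a}_i(\theta') \right| \leq \left| \Tilde{a}_i(\theta) \right| + \left| \Tilde{a}_i(\theta') \right| \leq 2A,
\end{align*}
and taking the supremum over pairs with $\|\theta - \theta'\| \leq \eta$ shows that $0 \leq u_i(\eta) \leq 2A$ almost surely, for every $\eta > 0$. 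Thus $u_i(\eta)$ is a bounded random variable, and in particular its mean $u_{*}(\eta) = \E[u_i(\eta)]$ is finite.

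Next I would center and conclude. Since $u_i(\eta) \in [0, 2A]$, the centered variable $u_i(\eta) - u_{*}(\eta)$ is mean-zero and supported on an interval of width $2A$. By Hoeffding's lemma \citep[Chapter~2]{wainwright2019high}, any such variable is sub-Gaussian, and every sub-Gaussian variable is sub-Exponential. Hence $[u_i(\eta) - u_{*}(\eta)]$ is sub-Exponential for every $\eta \in (0, \eta_0)$ (indeed for every $\eta > 0$, so $\eta_0$ may be taken arbitrarily large), which is exactly Property~\ref{property:concentration}(iv)(a).

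The only point that warrants a brief remark is the measurability of the supremum defining $u_i(\eta)$, which is needed for $u_i(\eta)$ and $u_{*}(\eta)$ to be well-defined random variables. This follows from the continuity of $\Tilde{a}_i(\cdot)$ in $\theta$ (assumed throughout) together with the separability of the compact set $\Theta$, so the supremum may be taken over a countable dense subset without changing its value. Beyond this routine check, the argument is immediate.
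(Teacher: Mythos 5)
Your proof is correct and follows essentially the same route as the paper: bound $u_i(\eta)$ by $2A$ via the triangle inequality, then conclude sub-Gaussianity (hence sub-Exponentiality) from boundedness. Your added details—explicitly centering via Hoeffding's lemma and noting measurability of the supremum—are refinements the paper leaves implicit, not a different argument.
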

\begin{proof}
We have:
\begin{align*}
    u_i(\eta) &= \sup_{\theta, \theta' \in \Theta, \|\theta-\theta'\| \leq \eta}  |\Tilde{a}(\theta, X_i) - \Tilde{a}(\theta', X_i)| \\
        &\leq 2 \sup_{\theta \in \Theta} |\Tilde{a}_i| \\
        &\leq 2 A.
\end{align*}
Thus $u_i(\eta)$ is bounded and therefore sub-Gaussian for every $\eta$.
\end{proof}

\begin{proposition}[Linearity and Property~\ref{property:concentration}(iv)(b)]\label{prop:apdx-ulln-linearity}
Suppose that
(i) $\Tilde{a}(\theta, X_i)$ is a linear function of $\theta$, i.e., $\Tilde{a}(\theta, X_i) = \theta^T \phi(X_i) + \rho(X_i)$, where $\phi$ and $\rho$ are arbitrary functions; and
(ii) $\forall d \in [D],\,\, \phi(X_i)_d$ is sub-Exponential.
Then $\Tilde{a}(\theta, X_i)$ satisfies Property~\ref{property:concentration}(iv)(b).
\end{proposition}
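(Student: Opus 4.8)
The plan is to compute the gradient $A(X_i, \theta)$ that appears in Property~\ref{property:concentration} and show that, in the affine case, it collapses to precisely the sub-Exponential object we are handed by hypothesis. Since $\Tilde{a}(\theta, X_i) = \theta^\top \phi(X_i) + \rho(X_i)$ is affine in $\theta$, its gradient is
\begin{align*}
    A(X_i, \theta) = \frac{\partial \Tilde{a}(\theta, X_i)}{\partial \theta} = \phi(X_i),
\end{align*}
which does not depend on $\theta$. Consequently $\sup_{\theta \in \Theta} \| A(X_i, \theta) \| = \| \phi(X_i) \|$, the $l_2$ norm of the fixed random vector $\phi(X_i)$, and it suffices to prove that $\| \phi(X_i) \|$ is sub-Exponential.

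To do this I would bound the $l_2$ norm by the $l_1$ norm, $\| \phi(X_i) \| \leq \sum_{d=1}^D | \phi(X_i)_d |$. Each summand $| \phi(X_i)_d |$ is sub-Exponential because $\phi(X_i)_d$ is (by Condition~(ii)) and the sub-Exponential property is defined through the magnitude of the variable, hence is preserved under taking absolute values. A finite sum of sub-Exponential random variables is again sub-Exponential (by the triangle inequality for the sub-Exponential/$\psi_1$-Orlicz norm, e.g.\ \citet[Proposition~2.7.1]{vershynin2018high}), so $\sum_{d=1}^D | \phi(X_i)_d |$ is sub-Exponential. Finally, because $0 \leq \| \phi(X_i) \| \leq \sum_{d=1}^D | \phi(X_i)_d |$ pointwise, domination by a sub-Exponential envelope yields that $\| \phi(X_i) \|$ is itself sub-Exponential, which is exactly Property~\ref{property:concentration}(iv)(b).

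No step here presents a genuine obstacle: the only facts invoked are the standard closure of the sub-Exponential class under absolute values, finite sums, and domination by a sub-Exponential envelope. The one point worth stating carefully is the reduction $\sup_{\theta \in \Theta} \| A(X_i, \theta) \| = \| \phi(X_i) \|$, since it is exactly here that linearity (equivalently, a gradient that is constant in $\theta$) is used. Without this, the supremum over $\theta \in \Theta$ could reintroduce dependence on $\theta$, and one would need a separate uniform envelope argument rather than the immediate identification of the supremum with a single sub-Exponential vector norm.
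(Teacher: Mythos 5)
Your proposal is correct and follows essentially the same route as the paper's proof: compute the constant gradient $A(X_i, \theta) = \phi(X_i)$, identify $\sup_{\theta \in \Theta} \| A(X_i, \theta) \| = \| \phi(X_i) \|$, and dominate the $l_2$ norm by the $l_1$ norm $\sum_{d=1}^{D} |\phi(X_i)_d|$. The only (cosmetic) difference is in the last step: where you invoke closure of the sub-Exponential class under finite sums via the triangle inequality for the $\psi_1$-Orlicz norm, the paper reaches the same conclusion directly through a union bound on the coordinate tails, $\P\left( \sum_{d=1}^{D} |\phi(X_i)_d| > \eta \right) \leq \sum_{d=1}^{D} \P\left( |\phi(X_i)_d| > \frac{\eta}{D} \right) \leq D \exp\left\{ -\Omega(\eta) \right\}$, and then cites the tail characterization of sub-Exponentiality.
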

\begin{proof}
We have that $A(X_i, \theta) = \frac{\partial \Tilde{a}(X_i; \theta)}{\partial \theta} = \phi(X_i)$ and thus $\sup_{\theta \in \Theta} \| A(X_i, \theta) \| = \| \phi(X_i) \| \leq \sum_{d =1}^{D} |\phi(X_i)_d|$.
Therefore, for any $\eta > 0$, we have
\begin{align*}
    \P\left( \sup_{\theta \in \Theta} \| A(X_i, \theta) \| < \eta \right) &= \P\left( \| \phi(X_i) \| < \eta \right) \\
        &\geq \P\left( \sum_{d =1}^{D} |\phi(X_i)_d| < \eta \right) \\
        &\geq \P\left( \forall \, d \in [D], \,\, |\phi(X_i)_d| < \frac{\eta}{D} \right) \\
        &\overset{(a)}{\geq} 1 - \sum_{d=1}^D \P\left( |\phi(X_i)_d| > \frac{\eta}{D} \right) \\
        &\overset{(b)}{\geq} 1 - \sum_{d=1}^D \exp\left\{ -\OM(\eta) \right\} \\
        &\geq 1 - \exp\left\{ -\OM(\eta) \right\},
\end{align*}
where (a) follows by the union bound and (b) because $\phi(X_i)_d$ is sub-Exponential.
This shows that $\\ \sup_{\theta \in \Theta} \| A(X_i, \theta) \|$ is also sub-Exponential (see \citet[Definition~2.7.5]{vershynin2018high}).
\end{proof}

\begin{remark*}
\citet{rakhlin2015sequential} derive a uniform martingale LLN and develop sequential analogues of classical complexity measures used in empirical process theory. These techniques are a potential alternative for deriving the tail bound in Lemma~\ref{lemma:apdx-uniform-law-dependent-data}.
However, the conditions required for these techniques are difficult to check.
In our case, the dependent and i.i.d. components can be separated more easily.
Thus we opted for deriving a uniform concentration bound by modifying the classical uniform LLN proof.
\citet{zhan2021policy} also derive a uniform LLN without requiring boundedness of the martingale difference terms, but with structural assumptions on the summands related to their specific application.
\end{remark*}

\begin{lemma*}[GMM concentration inequality]
Let $\lambda_{*}, C_0, \eta_1, \eta_2$, and $\delta_0$ be some positive constants.
Suppose that (i) Assumption~\ref{assump:standard-gmm} holds;
(ii) $\forall j, \, \Tilde{g}_{i, j}(\theta)$ satisfies Property~\ref{property:concentration};
(iii) The spectral norm of the GMM weight matrix $\widehat{W}$ is upper bounded with high probability: $\forall \delta \in \left(0, C_0 \right), \,\, \P \left( \|\widehat{W} \| \leq \lambda_* \right) \geq 1 - \frac{1}{\delta^{D}} \exp\left\{ - \OM\left( T \delta^2 \right) \right\}$ (see Remark~\ref{remark:weight-matrix-concentration-condition});
(iv) (Local strict convexity) 
$ \forall \theta \in N_{\eta_1}(\theta^*), \,\, \P \left( \left\| \frac{\partial^2 \Bar{Q}}{\partial \theta^2} (\theta)^{-1} \right\| \leq h \right) = 1$
($\Bar{Q}(\theta)$ is defined in Assumption~\ref{assump:standard-gmm}(a));
(v) (Strict minimization)
$\forall \theta \in N_{\eta_2}(\theta^*)$, there is a unique minimizer $\kappa(\theta) = \arg\min_{\kappa} V(\theta, \kappa)$ s.t. $V(\theta, \kappa) - V(\theta, \kappa(\theta)) \leq c \delta^2 \implies \|\kappa - \kappa(\theta)\| \leq \delta$; and
(vi) $\sup_{\kappa} |V(\theta, \kappa) - V(\theta', \kappa)| \leq L \| \theta - \theta' \|$.
Then, for $\widehat{k}_t = \arg\min_{\kappa \in \ChoiceSimplex} V(\widehat{\theta}^{(\pi)}_T, \kappa)$,
any policy $\pi$, and $\forall \delta \in (0, \delta_0)$,
\begin{align*}
    \P\left( \left\| \widehat{\theta}^{(\pi)}_T - \theta^* \right\| > \delta \right) < \frac{1}{\delta^{2D}} \exp\left\{ - \OM\left( T \delta^4 \right) \right\} \,\, \text{and} \,\, \P\left( \left\| \widehat{k}_T - \kappa^* \right\| > \delta \right) < \frac{1}{\delta^{4D}} \exp\left\{ - \OM \left( T \delta^8 \right) \right\}.
\end{align*}
\end{lemma*}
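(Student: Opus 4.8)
The plan is to execute the two-stage argument sketched right after the statement: first bound $\P(\|\widehat{\theta}^{(\pi)}_T - \theta^*\| > \delta)$, then bootstrap this into a bound on $\P(\|\widehat{k}_T - \kappa^*\| > \delta)$. The observation that makes the rate bookkeeping transparent is that each conversion—from objective-value closeness to $\theta$-closeness, and then from $\theta$-closeness to $\kappa$-closeness—costs one square root, which is precisely what degrades $\exp\{-\OM(T\delta^2)\}$ into $\exp\{-\OM(T\delta^4)\}$ and then into $\exp\{-\OM(T\delta^8)\}$. First I would apply the uniform law for dependent data (Lemma~\ref{lemma:apdx-uniform-law-dependent-data}) coordinatewise to the empirical moment vector $\frac{1}{T}\sum_t g_t(\theta) = \frac{1}{T}\sum_t m(s_t) \otimes \Tilde{g}_t(\theta)$, with the $H_{t-1}$-measurable indicator $m_j(s_t)$ playing the role of $S_i$ and Condition~(ii) supplying Property~\ref{property:concentration} for each coordinate. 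A union bound over the $M$ coordinates then gives, for small $\delta$, $\P\big( \sup_\theta \| \frac{1}{T}\sum_t g_t(\theta) - g^{*}_T(\theta)\| > \delta \big) < \frac{1}{\delta^D}\exp\{-\OM(T\delta^2)\}$, where $g^{*}_T(\theta) = \frac{1}{T}\sum_t m(s_t)\otimes \E[\Tilde{g}_t(\theta)]$.

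Next I would turn moment concentration into objective concentration. Because $\Bar{Q}$ in Assumption~\ref{assump:standard-gmm}(a) is defined with the \emph{same} random weight matrix $\widehat{W}$ as $\widehat{Q}_T$, writing $\bar g = \frac1T\sum_t g_t(\theta)$ and $g^* = g^{*}_T(\theta)$ gives the clean identity $\widehat{Q}_T(\theta) - \Bar{Q}(\theta) = (\bar g - g^*)^\top \widehat{W}(\bar g + g^*)$, so no separate control of $\widehat{W} - W$ is required. On the intersection of the Step-1 event with $\{\|\widehat{W}\| \le \lambda_*\}$ from Condition~(iii) (and using that the population moments are bounded on the compact $\Theta$ via Property~\ref{property:concentration}(i)), this yields $\sup_\theta|\widehat{Q}_T(\theta) - \Bar{Q}(\theta)| < \epsilon$ with probability at least $1 - \frac{1}{\epsilon^D}\exp\{-\OM(T\epsilon^2)\}$. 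I would then run the standard extremum-estimator argument: identification (Assumption~\ref{assump:standard-gmm}(a)) with compactness forces $\widehat{\theta}_T \in N_{\eta_1}(\theta^*)$ once $\epsilon$ is small, where Condition~(iv) and $\Bar{Q}(\theta^*)=0$ give the quadratic lower bound $\Bar{Q}(\theta) \ge \frac{1}{2h}\|\theta - \theta^*\|^2$. Chaining $\Bar{Q}(\widehat{\theta}_T) \le \widehat{Q}_T(\widehat{\theta}_T) + \epsilon \le \widehat{Q}_T(\theta^*) + \epsilon \le 2\epsilon$ yields $\|\widehat{\theta}_T - \theta^*\| \le 2\sqrt{h\epsilon}$. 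Taking $\epsilon = \OM(\delta^2)$ to make the right side at most $\delta$ and substituting into the objective-concentration bound gives the first claim, $\P(\|\widehat{\theta}_T - \theta^*\| > \delta) < \frac{1}{\delta^{2D}}\exp\{-\OM(T\delta^4)\}$.

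For the second bound I would decompose $V(\theta^*, \widehat{k}_T) - V(\theta^*, \kappa^*)$ and use optimality of $\widehat{k}_T$ for $V(\widehat{\theta}_T, \cdot)$ together with the uniform-in-$\kappa$ Lipschitz bound of Condition~(vi) to obtain $V(\theta^*, \widehat{k}_T) - V(\theta^*, \kappa^*) \le 2L\|\widehat{\theta}_T - \theta^*\|$. Condition~(v) at $\theta = \theta^*$ (where $\kappa(\theta^*) = \kappa^*$) supplies the contrapositive needed: $\|\widehat{k}_T - \kappa^*\| > \delta$ forces $V(\theta^*, \widehat{k}_T) - V(\theta^*, \kappa^*) > c\delta^2$, hence $\|\widehat{\theta}_T - \theta^*\| > \OM(\delta^2)$. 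Therefore $\P(\|\widehat{k}_T - \kappa^*\| > \delta) \le \P(\|\widehat{\theta}_T - \theta^*\| > \OM(\delta^2))$, and feeding $\OM(\delta^2)$ into the first bound produces the second claim, $\frac{1}{\delta^{4D}}\exp\{-\OM(T\delta^8)\}$.

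The invocations of the uniform law are routine given Lemma~\ref{lemma:apdx-uniform-law-dependent-data}. The delicate part is the second step: one must intersect the two high-probability events (moment concentration and $\|\widehat{W}\| \le \lambda_*$) and make the identification-based global-to-local reduction rigorous even though $\Bar{Q}_T$ is itself random through the selection history, so the usual ``$\Bar{Q}$ bounded below off a neighborhood'' step must be phrased through the $\liminf$ form of Assumption~\ref{assump:standard-gmm}(a) rather than a deterministic positive lower bound. The remaining subtlety is purely the rate arithmetic: tracking both the exponent and the polynomial prefactor $\delta^{-D}$ through the two square-root conversions ($\epsilon \sim \delta^2$ in the second step and $\delta \mapsto \OM(\delta^2)$ in the third) is what produces the stated $T\delta^4$ and $T\delta^8$ exponents, and it is easy to misplace a power here.
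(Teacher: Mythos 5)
Your proposal matches the paper's proof essentially step for step: the same coordinatewise application of the uniform martingale law with a union bound, the same intersection with the $\{\|\widehat{W}\|\le\lambda_*\}$ event (exploiting that $\Bar{Q}$ is defined with the same random $\widehat{W}$, so no $\|\widehat{W}-W\|$ control is needed), the same extremum-estimator chaining $\Bar{Q}(\widehat{\theta}_T)\le 2\epsilon$ followed by the quadratic lower bound from Condition~(iv), and the same Lipschitz/strict-minimization conversion for $\widehat{k}_T$, with identical rate arithmetic ($\epsilon\sim\delta^2$ then $\delta\mapsto\OM(\delta^2)$ producing $T\delta^4$ and $T\delta^8$ with prefactors $\delta^{-2D}$ and $\delta^{-4D}$). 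Your exact identity $(\bar g-g^*)^\top\widehat{W}(\bar g+g^*)$ in place of the paper's triangle/Cauchy--Schwarz expansion is a cosmetic variant, and your flagged subtlety about the randomness of $\Bar{Q}_T$ in the global-to-local identification step is precisely the point the paper itself handles only informally.
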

\begin{proof}

Below we give the empirical and population analogues of the GMM objective for a given policy $\pi$:
\begin{align*}
    \text{Empirical objective:} \,\,\, \widehat{Q}^{(\pi)}_T(\theta) &= \left[ \frac{1}{T} \sum_{t=1}^{T} g_t(\theta) \right]^\top \widehat{W} \left[ \frac{1}{T} \sum_{t=1}^{T} g_t(\theta) \right], \\
    \text{Population objective:} \,\,\, \Bar{Q}^{(\pi)}_T(\theta) &= g^{*}_T(\theta) \widehat{W} g^{*}_T(\theta)^\top \\
    &= \left[ \frac{1}{T} \sum_{t=1}^{T} \E \left[ g_t(\theta) | H_{t-1} \right] \right]^\top \widehat{W} \left[ \frac{1}{T} \sum_{t=1}^{T} \E \left[ g_t(\theta) | H_{t-1} \right] \right] \\
        &= \left[ \left( \frac{1}{T} \sum_{t=1}^{T} m(s_t) \right) \otimes \Tilde{g}_{*}(\theta)\right]^\top \widehat{W} \left[ \left( \frac{1}{T} \sum_{t=1}^{T} m(s_t) \right) \otimes \Tilde{g}_{*}(\theta)\right],
\end{align*}
where $\Tilde{g}_{*}(\theta) = \E\left[ \Tilde{g}_t(\theta) \right]$.

To simplify notation, let $m_t = m(s_t)$. By the triangle and Cauchy-Shwartz inequalities (see \citet[Theorem~2.6]{newey1994large}),
\begin{align*}
    \Big| &  \widehat{Q}^{(\pi)}_T(\theta) - \Bar{Q}^{(\pi)}_T(\theta) \Big| \\
    &\leq \left\| \frac{1}{T} \sum_{t=1}^{T} \left[ g_t(\theta) - m_t \otimes \Tilde{g}_{*}(\theta) \right] \right\|^2 \|\widehat{W}\|^2 + 2 \left\|\Tilde{g}_{*}(\theta)\right\| \left\| \frac{1}{T} \sum_{t=1}^{T} \left[ g_t(\theta) - m_t \otimes \Tilde{g}_{*}(\theta) \right] \right\| \|\widehat{W}\| \\
    &\leq \left\| \frac{1}{T} \sum_{t=1}^{T} \left[ g_t(\theta) - m_t \otimes \Tilde{g}_{*}(\theta) \right] \right\|^2 \|\widehat{W}\|^2 + 2 C \left\| \frac{1}{T} \sum_{t=1}^{T} \left[ g_t(\theta) - m_t \otimes \Tilde{g}_{*}(\theta) \right] \right\| \|\widehat{W}\|,
\end{align*}
where $C = \sup_{\theta \in \Theta} \left\|\Tilde{g}_{*}(\theta) \right\|$.
By applying Lemma~\ref{lemma:apdx-uniform-law-dependent-data} to each element of the vector $g_i(\theta)$ and using the union bound, we get:
\begin{align}
    \P \left( \sup_{\theta \in \Theta} \left\| \frac{1}{T} \sum_{i=1}^{T} \left[ g_t(\theta) - m_t \otimes \Tilde{g}_{*}(\theta) \right] \right\| < \delta \right) &\geq \P \left( \bigcap_{j=1}^{M} \sup_{\theta \in \Theta} \left| \frac{1}{T} \sum_{t=1}^{T} \left[ g_{t,j}(\theta) - m_{t,j} \otimes (\Tilde{g}_{*})_j(\theta) \right] \right| < \frac{\delta}{M} \right) \nonumber \\
    &\geq 1 - \sum_{j=1}^{M} \P \left( \sup_{\theta \in \Theta} \left| \frac{1}{T} \sum_{t=1}^{T} \left[ g_{t,j}(\theta) - m_{t,j} \otimes (\Tilde{g}_{*})_j(\theta) \right] \right| > \frac{\delta}{M} \right) \nonumber \\
    &\geq 1 - \frac{1}{\delta^{2D}} \exp\left\{ - \OM \left( T \delta^2 \right) \right\}. \label{eq:apdx-g-mom-unif-conc}
\end{align}
This means that, for $0 < \delta < 1$,
\begin{align*}
    \left\| \frac{1}{T} \sum_{t=1}^{T} \left[ g_t(\theta) - m_t \otimes \Tilde{g}_{*}(\theta) \right] \right\| \leq \delta, \|\widehat{W}\| \leq \lambda_* &\implies \left| \widehat{Q}^{(\pi)}_T(\theta) - \Bar{Q}^{(\pi)}_T(\theta) \right| \begin{aligned}[t]
        &\leq \lambda_*^2 \delta^2 + 2 \lambda_* C \delta \\
        &= (2C + \lambda_* \delta)\lambda_* \delta \\
        &< (2C + \lambda_*)\lambda_* \delta,
    \end{aligned} \\
    \therefore \,\, \P \left( \sup_{\theta \in \Theta} \left| \widehat{Q}^{(\pi)}_T(\theta) - \Bar{Q}^{(\pi)}_T(\theta) \right| < (2C + \lambda_*)\lambda_* \delta \right)
    &\geq \P\left( \sup_{\theta \in \Theta} \left\| \frac{1}{T} \sum_{t=1}^{T} \left[ g_t(\theta) - m_t \otimes \Tilde{g}_{*}(\theta) \right] \right\| \leq \delta, \|\widehat{W}\| \leq \lambda_* \right) \\
    &\overset{(a)}{\geq} 1 - \P\left( \sup_{\theta \in \Theta} \left\| \frac{1}{T} \sum_{t=1}^{T} \left[ g_t(\theta) - m_t \otimes \Tilde{g}_{*}(\theta) \right] \right\| > \delta \right) - \P\left(\|\widehat{W}\| > \lambda_* \right) \\
    &\overset{(b)}{\geq} 1 - \frac{1}{\delta^{2D}} \exp\left\{ - \OM\left( T \delta^2 \right) \right\} \\
    \therefore \,\, \P\left( \sup_{\theta \in \Theta} \left| \widehat{Q}^{(\pi)}_T(\theta) - \Bar{Q}^{(\pi)}_T(\theta) \right| < \delta \right)
    &\geq 1 - \frac{1}{\delta^{2D}} \exp\left\{ - \OM\left( T \delta^2 \right) \right\},
\end{align*}
where (a) follows by the union bound and (b) follows by Eq.~\ref{eq:apdx-g-mom-unif-conc} and Condition~(iii).
Using this uniform concentration bound, we get
\begin{align*}
    & \P\left( \Bar{Q}^{(\pi)}_T(\widehat{\theta}_T) < \widehat{Q}^{(\pi)}_T(\widehat{\theta}_T) + \frac{\delta}{2} \right) \geq 1 - \frac{1}{\delta^{2D}} \exp\left\{ - \OM\left( T \delta^2 \right) \right\}, \\
    & \P\left( \widehat{Q}^{(\pi)}_T(\theta^*) < \Bar{Q}^{(\pi)}_T(\theta^*) + \frac{\delta}{2} \right) \geq 1 - \frac{1}{\delta^{2D}} \exp\left\{ - \OM\left( T \delta^2 \right) \right\}.
\end{align*}
Since $\widehat{\theta}_T$ minimizes $\widehat{Q}^{(\pi)}_T$ almost surely, we have
$\P\left( \widehat{Q}^{(\pi)}_T(\widehat{\theta}_T) \leq \widehat{Q}^{(\pi)}_T(\theta^*) \right) = 1$. Combining these inequalities using the union bound, we get
\begin{align*}
    \P\left( \Bar{Q}^{(\pi)}_T(\widehat{\theta}_T) < \Bar{Q}^{(\pi)}_T(\theta^*) + \delta \right)
        &\geq 1 - \frac{1}{\delta^{2D}} \exp\left\{ - \OM\left( T \delta^2 \right) \right\} \\
    \therefore \,\, \P\left( \Bar{Q}^{(\pi)}_T(\widehat{\theta}_T) < \delta \right) &\overset{(a)}{\geq} 1 - \frac{1}{\delta^{2D}} \exp\left\{ - \OM\left( T \delta^2 \right) \right\},
\end{align*}
where (a) follows because $\Bar{Q}^{(\pi)}_T(\theta^*) = 0$.

Intuitively, if $\Bar{Q}^{(\pi)}_T(\widehat{\theta}_T)$ is small, then we would expect $\widehat{\theta}_T$ to be close to $\theta^*$. To formally show this, we use the local curvature of $\Bar{Q}^{(\pi)}_T$. 
By Condition~(iv), $\Bar{Q}^{(\pi)}_T$ is locally strictly convex in the $\eta_1$-ball $N_{\eta_1}(\theta^*)$.
Therefore, there exists a closed $\gamma$-ball $N_{\gamma}(\theta^*) \subseteq N_{\eta}(\theta^*)$ such that
\begin{align*}
    \forall \theta \notin N_{\gamma}(\theta^*), \,\, \Bar{Q}^{(\pi)}_T(\theta) > \Bar{Q}_N, \,\, \text{where} \,\, \Bar{Q}_N = \sup_{\theta \in N_{\gamma}(\theta^*)} \Bar{Q}^{(\pi)}_T(\theta).
\end{align*}
This is analogous to an identification condition and ensures that $\Bar{Q}^{(\pi)}_T(\theta) \leq \Bar{Q}_N \implies \theta \in N_{\gamma}(\theta^*)$.

Let $H(\theta) = \frac{\partial^2 \Bar{Q}^{(\pi)}}{\partial \theta^2} (\theta)$. Then, by twice continuous differentiability of $g$, for $\theta \in N_{\gamma}(\theta^*)$, we have
\begin{align*}
    \Bar{Q}^{(\pi)}_T(\theta) &\overset{(a)}{=} \Bar{Q}^{(\pi)}_T(\theta^*) + (\theta - \theta^*) \left[ H(\theta') \right] (\theta - \theta^*)^\top \\
        &\overset{(b)}{=} (\theta - \theta^*) \left[ H(\theta') \right] (\theta - \theta^*)^\top, \\
    \therefore \,\, \|\theta - \theta^*\|^2 &\leq \Bar{Q}^{(\pi)}_T(\theta) \|H^{-1}(\theta')\| \\
        &\overset{(c)}{\leq} \left[ \Bar{Q}^{(\pi)}_T(\theta) \right] h,
\end{align*}
where in (a), $\theta'$ is a point on the line segment joining $\theta$; (b) follows because $\Bar{Q}^{(\pi)}_T(\theta^*) = 0$; and (c) follows by Condition~(iv).
Thus, for $\delta < \Bar{Q}_N$, we have
\begin{align*}
    & \Bar{Q}^{(\pi)}_T(\widehat{\theta}_T) < \delta \implies \|\widehat{\theta}_T - \theta^*\| < \sqrt{\delta h} \\
    \therefore\,\, & \P\left( \|\widehat{\theta}_T - \theta^*\| < \delta \right) \begin{aligned}[t]
        &\geq \P\left( \Bar{Q}^{(\pi)}_T(\widehat{\theta}_T) < \frac{\delta^2}{h} \right) \\
        &\geq 1 - \frac{1}{\delta^{2D}} \exp\left\{ - \OM\left( T \delta^4 \right) \right\}.
    \end{aligned}
\end{align*}

\textbf{Concentration inequality for $\widehat{k}_T$}

By Condition~(vi), $\sup_{\kappa \in \ChoiceSimplex} |V(\widehat{\theta}_T, \kappa) - V(\theta^*, \kappa)| \leq L \| \widehat{\theta}_T - \theta^* \|$. Therefore,
\begin{align*}
    \| \widehat{\theta}_T - \theta^* \| < \delta \implies \sup_{\kappa \in \ChoiceSimplex} |V(\widehat{\theta}_T, \kappa) - V(\theta^*, \kappa)| \leq L \delta.
\end{align*}
Furthermore, we have
\begin{align*}
    \sup_{\kappa \in \ChoiceSimplex} |V(\widehat{\theta}_T, \kappa) - V(\theta^*, \kappa)| \leq L \delta \implies & V(\theta^*, \widehat{k}_T) < V(\widehat{\theta}_T, \widehat{k}_T) + L \delta, \,\, \text{and} \\
    & V(\widehat{\theta}_T, \kappa^*) < V(\theta^*, \kappa^*) + L \delta.
\end{align*}
Since $\widehat{k}_T$ is the minimizer, we have $V(\widehat{\theta}_T, \widehat{k}_T) \leq V(\widehat{\theta}_T, \kappa^*)$.
Combining these inequalities, we get
\begin{align*}
    \| \widehat{\theta}_T - \theta^* \| < \delta \implies V(\theta^*, \widehat{k}_T) - V(\theta^*, \kappa^*) < 2L \delta.
\end{align*}
Due to Condition~(v), we have
\begin{align*}
    V(\theta^*, \widehat{k}_T) - V(\theta^*, \kappa^*) < 2L \delta &\implies \| \widehat{k}_T - \kappa^* \| < \sqrt{ \frac{2 L \delta}{c} }, \\
    \therefore \,\, \| \widehat{\theta}_T - \theta^* \| < \delta &\implies \| \widehat{k}_T - \kappa^* \| < \sqrt{ \frac{2 L \delta}{c} } \\
    \therefore \,\, \P(\| \widehat{k}_T - \kappa^* \| < \delta) &> 1 - \P\left( \|\widehat{\theta}_T - \theta^* \| < \OM\left(\delta^2 \right)  \right) \\
        &> 1 - \frac{1}{\delta^{4D}} \exp\left\{ - \OM\left( T \delta^8 \right) \right\}.
\end{align*}
\end{proof}

\begin{lemma}[Sufficient condition for $\widehat{W}$]\label{lemma:apdx-sufficient-weight-matrix-bounded-condition}
Suppose that $\forall (j, k), \, [\Tilde{g}_{i, j}(\theta) \Tilde{g}_{i, k}(\theta)]$ satisfies Property~\ref{property:concentration}.
Let $\widehat{W}_T(\widehat{\theta}^{(\text{os})}_T) = \widehat{\Omega}_T(\widehat{\theta}^{(\text{os})}_T)^{-1} = \left[ \frac{1}{T} \sum_{t=1}^{T} g_t(\widehat{\theta}^{(\text{os})}_T) g^\top_t(\widehat{\theta}^{(\text{os})}_T) \right]^{-1}$, where $\widehat{\theta}^{(\text{os})}_T$ is the one-step GMM estimator (that uses $\widehat{W} = I$).
Then $\widehat{W}_T(\widehat{\theta}^{(\text{os})}_T)$ satisfies Condition~(iii) of Lemma~\ref{lemma:gmm-conc-inequality}.
\end{lemma}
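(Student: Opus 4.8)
The plan is to turn Condition~(iii) into a high-probability lower bound on the smallest eigenvalue of $\widehat{\Omega}_T(\widehat{\theta}^{(\text{os})}_T)$. Since $\|\widehat{W}_T(\widehat{\theta}^{(\text{os})}_T)\| = \|\widehat{\Omega}_T(\widehat{\theta}^{(\text{os})}_T)^{-1}\| = 1/\lambda_{\min}(\widehat{\Omega}_T(\widehat{\theta}^{(\text{os})}_T))$, it suffices to produce a constant $\lambda_0 > 0$ with $\lambda_{\min}(\widehat{\Omega}_T(\widehat{\theta}^{(\text{os})}_T)) \geq \lambda_0$ outside an event of probability $\exp\{-\OM(T)\}$; setting $\lambda_* = 1/\lambda_0$ then reproduces the stated form, because for the fixed threshold $\lambda_*$ the failure probability does not depend on $\delta$, and $\exp\{-\OM(T)\}$ is dominated by $\frac{1}{\delta^D}\exp\{-\OM(T\delta^2)\}$ for every fixed $\delta \in (0, C_0)$.

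First I would establish entrywise, then spectral, concentration of $\widehat{\Omega}_T$. Writing $\widehat{\Omega}_T(\theta) = \frac{1}{T}\sum_{t=1}^{T} (m(s_t)m(s_t)^\top) \otimes (\Tilde{g}_t(\theta)\Tilde{g}_t(\theta)^\top)$, each entry is a sum of terms $m_{t,j}m_{t,k}[\Tilde{g}_t(\theta)\Tilde{g}_t(\theta)^\top]_{j,k}$ whose selection prefactor $m_{t,j}m_{t,k}$ is $H_{t-1}$-measurable and whose remaining factor is i.i.d.; its conditional mean is $[m_\Omega(\kappa_T) \otimes \Omega(\theta)]_{j,k}$, with $m_\Omega(\kappa_T) = \frac{1}{T}\sum_t m(s_t)m(s_t)^\top$ and $\Omega(\theta) = \E[\Tilde{g}_t(\theta)\Tilde{g}_t(\theta)^\top]$. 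Because $[\Tilde{g}_{i,j}(\theta)\Tilde{g}_{i,k}(\theta)]$ satisfies Property~\ref{property:concentration} by hypothesis, Lemma~\ref{lemma:apdx-uniform-law-dependent-data} applies to each entry, and a union bound over the $M^2$ entries (together with $\|A\| \leq M \max_{j,k}|A_{j,k}|$) yields
\begin{align*}
    \P\left( \sup_{\theta \in \Theta} \left\| \widehat{\Omega}_T(\theta) - m_\Omega(\kappa_T) \otimes \Omega(\theta) \right\| > \delta \right) < \frac{1}{\delta^D} \exp\left\{ -\OM(T\delta^2) \right\}.
\end{align*}
Since this bound is uniform in $\theta$, it holds in particular at the random point $\theta = \widehat{\theta}^{(\text{os})}_T$.

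Next I would localize and lower-bound the conditional-mean matrix. The one-step estimator uses $\widehat{W} = I$, for which Condition~(iii) of Lemma~\ref{lemma:gmm-conc-inequality} holds trivially with $\lambda_* = 1$, so Lemma~\ref{lemma:gmm-conc-inequality} gives $\widehat{\theta}^{(\text{os})}_T$ a tail bound of the form $\frac{1}{\delta^{2D}}\exp\{-\OM(T\delta^4)\}$; combined with the Lipschitzness of $\Omega$ near $\theta^*$ (from Property~\ref{property:concentration}(i),(iii)) this puts $\Omega(\widehat{\theta}^{(\text{os})}_T)$ within $\OM(\delta)$ of $\Omega(\theta^*)$ on the same event. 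For the limit matrix I would invoke the Schur product theorem: $m_\Omega(\kappa_T)$ is positive semidefinite as a sum of rank-one terms, $\Omega(\theta^*)$ is positive definite, and the elementary decomposition $m_\Omega(\kappa_T) \otimes \Omega(\theta^*) \succeq \lambda_{\min}(\Omega(\theta^*))\,(m_\Omega(\kappa_T) \otimes I)$ gives $\lambda_{\min}(m_\Omega(\kappa_T) \otimes \Omega(\theta^*)) \geq (\min_j [m_\Omega(\kappa_T)]_{j,j})\,\lambda_{\min}(\Omega(\theta^*))$. Each diagonal entry $[m_\Omega(\kappa_T)]_{j,j} = \frac{1}{T}\sum_t m_j(s_t)$ is the fraction of times moment $j$ is collected, which is bounded away from zero with probability $1 - \exp\{-\OM(T)\}$ by the identification content of Assumption~\ref{assump:standard-gmm}(a). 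A Weyl inequality then transfers this positive lower bound from $m_\Omega(\kappa_T) \otimes \Omega(\theta^*)$ to $\widehat{\Omega}_T(\widehat{\theta}^{(\text{os})}_T)$ once $\delta$ is taken a fixed fraction of $\lambda_0$, and inverting delivers $\|\widehat{W}_T(\widehat{\theta}^{(\text{os})}_T)\| \leq \lambda_*$.

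The main obstacle is this last ingredient: securing a constant, high-probability positive lower bound on $\min_j [m_\Omega(\kappa_T)]_{j,j}$, i.e., guaranteeing that no moment is asymptotically starved under the adaptive, non-i.i.d.\ selection. Assumption~\ref{assump:standard-gmm}(a) supplies this only as an almost-sure $\liminf$ statement, so the delicate step is upgrading it to the quantitative finite-sample form needed to compose with the exponential rates above; everything else is a routine combination of the uniform law, the one-step tail bound, the Schur product inequality, and Weyl's inequality.
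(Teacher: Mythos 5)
Your proposal follows essentially the same skeleton as the paper's proof---entrywise uniform concentration of $\widehat{\Omega}_T$ around its conditional mean via Lemma~\ref{lemma:apdx-uniform-law-dependent-data} plus a union bound over the $M^2$ entries, then an eigenvalue-perturbation step (the paper's ``eigenvalue stability inequality'' where you invoke Weyl), then inversion of the smallest eigenvalue---but it diverges at the final ingredient, and that is where the genuine gap sits. Your claimed bound $\P\left( \min_j \frac{1}{T}\sum_{t=1}^T m_j(s_t) \geq c \right) \geq 1 - \exp\{-\OM(T)\}$ does not follow from Assumption~\ref{assump:standard-gmm}(a) and in fact cannot hold at this level of generality: Lemma~\ref{lemma:gmm-conc-inequality}, and hence the condition you are verifying, is stated for an \emph{arbitrary} policy $\pi$, and an arbitrary policy may query a given data source at a vanishing rate or never, making the minimum collection fraction zero with probability one. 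Assumption~\ref{assump:standard-gmm}(a) is only an almost-sure asymptotic statement about $\Bar{Q}$; it carries no finite-sample, policy-uniform content that could compose with your exponential rates. You flag this yourself as the ``delicate step,'' but the flag is where your proof ends rather than where it is repaired.

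The paper closes (really, sidesteps) this step differently: rather than lower-bounding the random diagonal fractions, it defines the deterministic constant $\Bar{\lambda} = \inf_{\theta \in N_{\delta_0}(\theta^*),\, \kappa \in \ChoiceSimplex} \lambda_{\min}\left( \Omega_T(\theta) \right)$ and works on the event $\|\Delta\| \leq \delta$ with $\delta \leq \min\left\{ \delta_0, \frac{\Bar{\lambda}}{2} \right\}$, concluding $\| \widehat{W}_T \| \leq \frac{2}{\Bar{\lambda}}$ directly; no probabilistic control of the selection fractions is needed. The price is the implicit assumption $\Bar{\lambda} > 0$, i.e., uniform nonsingularity of the Hadamard product $m_{\Omega}(\kappa) \otimes \Omega(\theta)$ over the entire simplex---and note that by your own Schur-product inequality this positivity can fail at simplex corners when some moment is observable from only one source, so the paper's proof is itself leaning on an unstated regularity condition at exactly the point where you got stuck. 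The minimal repair to your write-up is to replace the appeal to Assumption~\ref{assump:standard-gmm}(a) with this deterministic infimum (or state positivity of $\Bar{\lambda}$ as an explicit hypothesis). Two smaller remarks: your explicit localization of $\widehat{\theta}^{(\text{os})}_T$ via Lemma~\ref{lemma:gmm-conc-inequality} with $\widehat{W} = I$ (for which Condition~(iii) holds trivially) is sound and makes explicit a step the paper leaves entirely silent, but it imports hypotheses---Property~\ref{property:concentration} for the $\Tilde{g}_{i,j}$ themselves and Conditions~(i), (ii), (iv) of that lemma---that the statement of Lemma~\ref{lemma:apdx-sufficient-weight-matrix-bounded-condition} does not grant; in the context where this lemma is invoked those are all assumed, so this is a bookkeeping mismatch rather than a circularity, but it should be stated.
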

\begin{proof}
We define $W_T(\theta^*)$ as
\begin{align*}
    W_T(\theta^*) = \Omega_T(\theta^*)^{-1} &= \left[ \frac{1}{T} \sum_{t=1}^T  \E\left[ g_t(\theta^*) g^\top_t(\theta^*) \, \big| \, H_{t-1} \right] \right]^{-1} \\
    &= \left[ \left( \frac{1}{T} \sum_{t=1}^{T} m(s_t) m^\top(s_t) \right) \otimes \E\left[ \Tilde{g}_t(\theta^*) \Tilde{g}^\top_t(\theta^*) \right] \right]^{-1}.
\end{align*}
Let $\Delta = \widehat{\Omega}_T(\widehat{\theta}^{(\text{os})}_T) - \Omega_T(\widehat{\theta}^{(\text{os})}_T)$ and $\lambda_{\min}$ denote smallest eigenvalue.
Using the eigenvalue stability inequality \citep[Section~1.3.3]{tao2012topics}, we get:
\begin{align}
    & \left| \lambda_{\min}\left( \widehat{\Omega}_T(\widehat{\theta}^{(\text{os})}_T) \right) - \lambda_{\min}\left( \Omega_T(\widehat{\theta}^{(\text{os})}_T) \right) \right| \leq \left\| \Delta \right\|, \nonumber \\
    \therefore\,\, & \left\| \widehat{W}_T(\widehat{\theta}^{(\text{os})}_T) \right\| = \left\| \widehat{\Omega}_T(\widehat{\theta}^{(\text{os})}_T)^{-1} \right\| = \frac{1}{\lambda_{\min}\left( \widehat{\Omega}_T(\widehat{\theta}^{(\text{os})}_T) \right)} \leq \frac{1}{\lambda_{\min}\left( \Omega_T(\widehat{\theta}^{(\text{os})}_T) \right) - \left\| \Delta \right\|}. \label{eq:apdx-min-eigenvalue-w-hat}
\end{align}
By applying Lemma~\ref{lemma:apdx-uniform-law-dependent-data} to each term of the matrix and using the union bound, we have
\begin{align}
    \P\left( \sup_{\theta \in \Theta} \left\| \widehat{\Omega}_T(\theta) - \Omega_T(\theta) \right\| \leq \delta \right) &\overset{(a)}{\geq} \P\left( \sup_{\theta \in \Theta} \left\| \widehat{\Omega}_T(\theta) - \Omega_T(\theta) \right\|_{F} \leq \delta \right) \nonumber \\
        &\geq \P\left( \sup_{\theta \in \Theta} \sum_{i,j} \left| \widehat{\Omega}_{T,i,j}(\theta) - \Omega_{T, i,j}(\theta) \right| \leq \delta \right) \nonumber \\
        &\geq 1 - \sum_{i,j} \P\left( \sup_{\theta \in \Theta} \left| \widehat{\Omega}_{T,i,j}(\theta) - \Omega_{T, i,j}(\theta) \right| > \frac{\delta}{M^2} \right) \nonumber \\
        &= 1 - \frac{1}{\delta^{D}} \exp\left\{ - \OM\left( T \delta^2 \right) \right\} \nonumber \\
    \therefore\,\, \P\left( \left\| \Delta \right\| \leq \delta \right) = \P\left( \left\| \widehat{\Omega}_T(\Tilde{\theta}_T) - \Omega(\Tilde{\theta}_T) \right\| \leq \delta \right) &\geq 1 - \frac{1}{\delta^{D}} \exp\left\{ - \OM\left( T \delta^2 \right) \right\}, \label{eq:apdx-delta-matrix-small}
\end{align}
where in (a) $\|.\|_F$ denotes the Frobenius norm.

For some $\delta_0 > 0$, let $ \Bar{\lambda} = \inf_{\theta \in N_{\delta_0}(\theta^*), \kappa \in \ChoiceSimplex} \lambda_{\min}\left( \Omega_T(\theta) \right)$. For $\delta \leq \min\left\{ \delta_0, \frac{\Bar{\lambda}}{2} \right\}$, we have
\begin{align*}
    \left\| \Delta \right\| \leq \delta & \overset{(a)}{\implies} \left\| \widehat{W}_T(\Tilde{\theta}_T) \right\| \leq \frac{2}{\Bar{\lambda}}, \\
    \therefore \,\, \P\left( \left\| \widehat{W}_T(\Tilde{\theta}_T) \right\| \leq \frac{2}{\Bar{\lambda}} \right) &\geq \P\left( \left\| \Delta \right\| \leq \delta \right) \\
        &\overset{(b)}{\geq} 1 - \frac{1}{\delta^{D}} \exp\left\{ - \OM\left( T \delta^2 \right) \right\},
\end{align*}
where (a) follows by Eq.~\ref{eq:apdx-min-eigenvalue-w-hat} and (b) by Eq.~\ref{eq:apdx-delta-matrix-small}.
\end{proof}

In the next lemma, we present a concentration inequality for $\widehat{k}_T$ with better rates under additional restrictions on $\theta^*$. We do not require these better rates for proving zero regret for OMS-ETG. We present this lemma for the sake of completeness.
\begin{lemma}[Another concentration inequality for $\widehat{k}_T$]\label{lemma:apdx-k-hat-concentration-faster}
Let $\kappa(\theta) = \arg\min_{\kappa} V(\theta, \kappa)$, $\, \Theta_{\text{boundary}} = \{ \theta \in \Theta : \kappa(\theta) \in \text{boundary}\left( \ChoiceSimplex \right) \}$, where $\text{boundary}\left( \ChoiceSimplex \right) = \left\{ \kappa \in \ChoiceSimplex : \exists i, \,\, \text{s.t.} \,\, \kappa_i = 0 \right\}$, 
$\, \Theta_{\text{minima}} = \{ \theta \in \Theta : \frac{\partial V}{\partial \kappa}(\theta, \kappa(\theta)) = 0 \}$, and
$\Theta_{\text{restricted}} = \Theta \setminus \left( \Theta_{\text{boundary}} \bigcap \Theta_{\text{minima}} \right)$ 
Suppose that (i) the conditions of Lemma~\ref{lemma:gmm-conc-inequality} hold, and
(ii) $\theta \in \Theta_{\text{restricted}}$. 
Then
\begin{align*}
    \P\left( \left\| \widehat{k}_T - \kappa^* \right\| > \delta \right) < \frac{1}{\delta^{2D}} \exp\left\{ - \OM \left( T \delta^4 \right) \right\}.
\end{align*}
This means that if $\theta^*$ is \textit{not} such that the minimizer $\kappa(\theta) = \arg\min_{\kappa} V(\theta, \kappa)$ is on the boundary of the simplex and is also a local minimum of $V(\theta, \kappa)$ (informally, $\kappa(\theta)$ is \textit{not} ``just'' on the boundary), we can get better rates.
\end{lemma}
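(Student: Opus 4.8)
The plan is to bypass the quadratic ``strict minimization'' estimate (Condition~(v) of Lemma~\ref{lemma:gmm-conc-inequality}) and instead show that, on the restricted set $\Theta_{\text{restricted}}$, the minimizer map $\theta \mapsto \kappa(\theta) = \arg\min_{\kappa \in \ChoiceSimplex} V(\theta, \kappa)$ is \emph{locally Lipschitz} at $\theta^*$. Recall that the extra square root in Lemma~\ref{lemma:gmm-conc-inequality} arose precisely because Condition~(v) only guarantees $V(\theta^*, \kappa) - V(\theta^*, \kappa^*) \geq c \|\kappa - \kappa^*\|^2$, so that $\|\widehat\theta_T - \theta^*\| < \delta$ could only be converted into $\|\widehat{k}_T - \kappa^*\| < \OM(\sqrt{\delta})$. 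If instead we exhibit a constant $L_\kappa$ with $\|\kappa(\theta) - \kappa(\theta^*)\| \leq L_\kappa \|\theta - \theta^*\|$ for all $\theta$ near $\theta^*$, then since $\widehat{k}_T = \kappa(\widehat\theta^{(\pi)}_T)$ and $\kappa^* = \kappa(\theta^*)$, we immediately obtain $\|\widehat{k}_T - \kappa^*\| \leq L_\kappa \|\widehat\theta^{(\pi)}_T - \theta^*\|$, trading the $\sqrt{\delta}$ for a linear dependence.

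Granting the Lipschitz claim, the tail bound follows at once from the already-established $\widehat\theta_T$ inequality of Lemma~\ref{lemma:gmm-conc-inequality}:
\begin{align*}
    \P\left( \|\widehat{k}_T - \kappa^*\| > \delta \right) \leq \P\left( \|\widehat\theta^{(\pi)}_T - \theta^*\| > \delta / L_\kappa \right) < \frac{1}{\delta^{2D}} \exp\left\{ - \OM\left( T \delta^4 \right) \right\},
\end{align*}
where the final step applies that inequality with $\delta$ replaced by $\delta/L_\kappa$ and absorbs the constant $L_\kappa$ into the $\OM$ notation. Thus the entire content of the lemma is the local Lipschitz property, and the hypothesis $\theta^* \in \Theta_{\text{restricted}}$ is exactly what secures it.

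The crux is the sensitivity argument, which I would split by where $\kappa^* = \kappa(\theta^*)$ sits. If $\kappa^*$ lies in the interior of $\ChoiceSimplex$ (so $\theta^* \notin \Theta_{\text{boundary}}$), the stationarity condition $\frac{\partial V}{\partial \kappa}(\theta, \kappa(\theta)) = 0$ holds with an invertible Hessian $\frac{\partial^2 V}{\partial \kappa^2}$ (positive definiteness on the simplex tangent space follows from the quadratic growth in Condition~(v)), so the implicit function theorem yields a continuously differentiable, hence locally Lipschitz, map $\theta \mapsto \kappa(\theta)$. If instead $\kappa^*$ lies on the boundary but the gradient does not vanish there (so $\theta^* \notin \Theta_{\text{minima}}$, i.e.\ $\frac{\partial V}{\partial \kappa}(\theta^*, \kappa^*) \neq 0$), the constraint is genuinely active with strict complementarity: the gradient points outward, so for $\theta$ near $\theta^*$ the same active face persists, and $\kappa(\theta)$ is either locally constant (at a vertex, giving $L_\kappa = 0$) or governed by an interior stationarity condition within that face, again Lipschitz via the implicit function theorem restricted to the face. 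The single excluded configuration, $\theta^* \in \Theta_{\text{boundary}} \cap \Theta_{\text{minima}}$, is the degenerate knife-edge case in which the unconstrained stationary point coincides with the boundary; there an arbitrarily small perturbation of $\theta^*$ can push the stationary point off the feasible set, forcing a projection back onto the boundary that behaves like a square root and destroys Lipschitzness---this is exactly the regime where only the slower $\OM(T\delta^8)$ rate of Lemma~\ref{lemma:gmm-conc-inequality} survives.

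I expect the boundary case to be the main obstacle: one must rigorously deduce strict complementarity and stability of the active set from the bare hypothesis $\theta^* \notin \Theta_{\text{minima}}$, so that the reduced problem on the active face is genuinely smooth and the standard parametric-optimization sensitivity results apply. The interior case is a routine application of the implicit function theorem, and, as shown above, once local Lipschitzness is in hand the probabilistic conclusion is an immediate consequence of the $\widehat\theta_T$ concentration bound.
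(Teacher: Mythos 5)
Your proposal is correct and follows essentially the same route as the paper: the paper also reduces the lemma to local Lipschitzness of the argmin map $\theta \mapsto \kappa(\theta)$ near $\theta^*$, established via the Lagrangian first-order condition and the implicit function theorem (with invertibility of $\frac{\partial^2 V}{\partial \kappa^2}$ supplied by Condition~(v)), splitting $\Theta_{\text{restricted}}$ into the interior case ($\lambda^* = 0$) and the strict-boundary case ($\lambda^* > 0$, your strict-complementarity/active-face argument) exactly as you do, and excluding the same degenerate configuration where the stationary point sits ``just'' on the boundary. The final step---transferring the $\widehat{\theta}_T$ tail bound through the Lipschitz constant to obtain the $\OM(T\delta^4)$ rate---is identical to the paper's Taylor-expansion conclusion.
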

\begin{proof}
Now we use the tail bound for $\widehat{\theta}_T$ to derive a concentration inequality for $\widehat{k}_T$ when $\theta \in \Theta_{\text{restricted}}$.
$\widehat{k}_T$ is the solution to the following constrained optimization problem:
\begin{align*}
    \min_{\kappa \in \R^{\CardC} } V\left(\widehat{\theta}_T, \kappa \right) \,\, \text{subject to} \,\, \sum_{i=1}^{\CardC} \kappa_i = 1.
\end{align*}
The Lagrangian function is
\begin{align*}
    \mathcal{L}\left( \theta, \kappa, \lambda \right) = V\left( \theta, \kappa \right) + \lambda \left( \sum_{i=1}^{\CardC} \kappa_i - 1 \right).
\end{align*}
Let $f(\theta, \kappa, \lambda) = \frac{\partial \mathcal{L}}{\partial \kappa}\left( \theta, \kappa, \lambda \right) = \frac{\partial V}{\partial \kappa}(\theta, \kappa) + \lambda [1, 1, \hdots, 1]^\top$.
Since $\lambda [1, 1, \hdots, 1]^\top \neq 0$, there exists a Lagrange multiplier $\lambda^{*} \in \R$ such that $f(\theta^*, \kappa^*, \lambda^*) = 0$.

Condition~(ii) is required to ensure that $f(\theta, \kappa, \lambda^*)$ is continuously differentiable in $(\theta, \kappa)$
which allows us to use the implicit function theorem.
To show this, we divide the space $\Theta_{\text{restricted}}$ into two disjoint sets:
(i) $\Theta_{\text{interior}} = \Theta \setminus \Theta_{\text{boundary}}$, and
(ii) $\Theta_{\text{strict-boundary}} = \Theta_{\text{boundary}} \bigcap \Theta^{c}_{\text{minima}}$.
When $\theta \in \Theta_{\text{interior}}$, the constraint will \textit{not} be active and thus $\lambda^* = 0$. When $\theta \in \Theta_{\text{strict-boundary}}$, the constraint will be active and thus $\lambda^* > 0$. In both cases, $f(\theta, \kappa, \lambda^*)$ will be continuously differentiable in $(\theta, \kappa)$.
Note that if $\theta \in \Theta \setminus \Theta_{\text{restricted}}$, then $\lambda^* = 0$ but $f$ is not differentiable because the constraint is ``just'' inactive.

Let $Y(\theta, \kappa) = \frac{\partial f}{\partial \kappa}(\theta, \kappa) = \frac{\partial^2 V}{\partial \kappa^2}(\theta, \kappa)$, and $X(\theta, \kappa) = \frac{\partial f}{\partial \theta}(\theta, \kappa) = \frac{\partial^2 V}{\partial \theta \partial \kappa}(\theta, \kappa)$.
By the implicit function theorem, since $Y(\theta^*, \kappa^*)$ is invertible (by Condition~(v)), there exist neighbourhoods $N(\theta^*)$ and $N(\kappa^*)$ and a function $\phi : N(\theta^*) \rightarrow N(\kappa^*)$ such that $\widehat{k}_T = \phi(\widehat{\theta}_T)$ and $\frac{\partial \phi}{\partial \theta}(\theta) = - \left[ Y(\theta, \phi(\theta))^{-1} X(\theta, \phi(\theta)) \right]$. By a Taylor expansion, we get
\begin{align*}
    \widehat{k}_T = \phi(\widehat{\theta}_T) &\overset{(a)}{=} \phi(\theta^*) + \frac{\partial \phi}{\partial \theta}(\Tilde{\theta}) \left( \widehat{\theta}_T - \theta^* \right) \\
        &= \kappa^* + \frac{\partial \phi}{\partial \theta}(\Tilde{\theta}) \left( \widehat{\theta}_T - \theta^* \right) \\
    \therefore\,\, \left\| \widehat{k}_T - \kappa^* \right\| &\leq \left\| \frac{\partial \phi}{\partial \theta}(\Tilde{\theta}) \right\| \left\| \widehat{\theta}_T - \theta^* \right\| \\
        &\leq C \left\| \widehat{\theta}_T - \theta^* \right\|,
\end{align*}
where in (a) $\Tilde{\theta}$ is a point on the line segment joining $\widehat{\theta}_T$ and $\theta^*$, and $C = \sup_{\theta \in \mathcal{N}(\theta^*)} \left\| \frac{\partial \phi}{\partial \theta}(\theta) \right\|$.
Therefore, we have
\begin{align*}
    \P \left( \left\| \widehat{\kappa}_T - \kappa^* \right\| \leq \delta \right) &\geq \P \left( \left\| \widehat{\theta}_T - \theta^* \right\| \leq \frac{\delta}{C} \right)  \geq 1 - \frac{1}{\delta^{2D}} \exp\left\{ - \OM\left( t \delta^4 \right) \right\}.
\end{align*}
\end{proof}

\begin{figure}
\centering
\includegraphics[width=0.6\columnwidth]{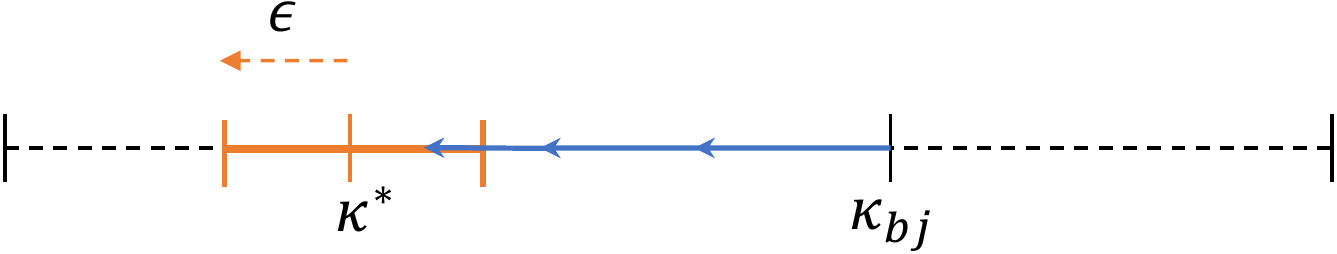}
\caption{Illustration of the proof of OMS-ETG algorithm. When the event $\mathcal{I}(\epsilon)$ occurs, (a) if the selection ratio $\kappa_{bj}$ is outside $N_\epsilon(\kappa^*)$, then then selection ratio in the next round $\kappa_{b(j+1)}$ will move closer to $N_{\epsilon}(\kappa^*)$, and (b) if $\kappa_{bj}$ is inside $N_\epsilon(\kappa^*)$, it remains inside for all future rounds.}
\label{fig:apdx-oms-etg-proof-diagram}
\end{figure}

\subsection{Proof of Theorem~\ref{thm:regret-etg} (Regret of OMS-ETG)}\label{sec:apdx-oms-etg-proof}

\begin{theorem*}[Regret of OMS-ETG]
Suppose that Conditions (i)-(iv) of Proposition~\ref{prop:asymptotic-normality} hold.
Let $\Tilde{\Delta}(s) = \{ s \kappa_b + (1-s)\kappa : \kappa \in \ChoiceSimplex \}$.
Case (a): For a fixed $s \in (0, 1)$, if the oracle selection ratio $\kappa^* \in \Tilde{\Delta}(s)$, then the regret converges to zero: $R_\infty(\pi_{\text{ETG}}) = 0$.
If $\kappa^* \notin \Tilde{\Delta}(s)$, then $R_\infty(\pi_{\text{ETG}}) = r$ for some constant r > 0.
Case (b): Now also suppose that the conditions for Lemma~\ref{lemma:gmm-conc-inequality} hold.
If $s = C T^{\eta - 1}$ for some constant $C$ and any $\eta \in [0, 1)$, then $\forall \theta^* \in \Theta$, the regret converges to zero: $R_\infty(\pi_{\text{ETG}}) = 0$.
\end{theorem*}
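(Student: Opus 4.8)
The plan is to reduce both cases to Proposition~\ref{prop:asymptotic-normality}: if I can show the realized selection ratio converges in probability, $\kappa_T \ConvProb \kappa^*$, then $\sqrt{T}(\widehat{\beta}_T - \beta^*) \xrightarrow{d} \mathcal{N}(0, V(\theta^*, \kappa^*))$, so $\Amse(\sqrt{T}(\widehat{\beta}_T - \beta^*)) = V(\theta^*, \kappa^*)$ and the regret is exactly zero. Hence everything comes down to analyzing where the greedy dynamics drive $\kappa_T$ inside $\Tilde{\Delta}(s)$.

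For Case~(a) the number of rounds $J = 1/s$ is fixed and every batch size $b = Ts \to \infty$. So for each round $j$, Proposition~\ref{prop:consistency} gives $\widehat{\theta}_{bj} \ConvProb \theta^*$ and Lemma~\ref{lemma:apdx-kappa-hat-consistency} gives $\widehat{k}_{bj} \ConvProb \kappa^*$. Since there are only finitely many rounds and each projection $\text{proj}(\cdot, \Tilde{\Delta}_{j+1}(\cdot))$ is continuous, I would pass to the limit in which every round targets $\kappa^*$ exactly; the feasibility analysis of Appendix~\ref{sec:appendix-feasibility-region} shows that myopically chasing a reachable target drives $\kappa_T$ to $\text{proj}(\kappa^*, \Tilde{\Delta}(s))$. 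If $\kappa^* \in \Tilde{\Delta}(s)$ this projection is $\kappa^*$, giving $\kappa_T \ConvProb \kappa^*$ and $R_\infty = 0$; otherwise $\kappa_T \ConvProb \bar\kappa := \text{proj}(\kappa^*, \Tilde{\Delta}(s)) \neq \kappa^*$, and Assumption~\ref{assump:kappa-star-identify} yields $R_\infty = V(\theta^*, \bar\kappa) - V(\theta^*, \kappa^*) > 0$.

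Case~(b) is the substantive one: with $s = C T^{\eta-1}$ we have $J = T^{1-\eta}/C \to \infty$, and for $\eta = 0$ the batch size $b = Ts$ is only a constant, so the per-round consistency argument breaks for early rounds. Here I would replace consistency by the concentration bound of Lemma~\ref{lemma:gmm-conc-inequality}, $\P(\|\widehat{k}_{bj} - \kappa^*\| > \epsilon) < \epsilon^{-4D}\exp\{-\OM(bj\epsilon^8)\}$, noting that after round $j$ the estimate $\widehat{k}_{bj}$ is built from $bj$ samples. Fix $\epsilon>0$ and a threshold $j_0 = j_0(T)$ with $j_0 \to \infty$ and $j_0/J \to 0$ (e.g.\ $j_0 = \lceil \log T \rceil$); a union bound over $j \in \{j_0,\dots,J-1\}$ gives a sum dominated by its first term $\sim \epsilon^{-4D}\exp\{-\OM(bj_0\epsilon^8)\}$, which vanishes because $bj_0 = C T^{\eta} j_0 \to \infty$. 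Thus the good event $\mathcal{I}(\epsilon) = \{\forall j \ge j_0,\ \widehat{k}_{bj} \in N_\epsilon(\kappa^*)\}$ has $\P(\mathcal{I}(\epsilon)) \to 1$. On $\mathcal{I}(\epsilon)$ I track $\kappa_{b(j+1)} = \text{proj}(\widehat{k}_{bj}, \Tilde{\Delta}_{j+1}(\kappa_{bj}))$ for $j \ge j_0$ (Figure~\ref{fig:apdx-oms-etg-proof-diagram}): (i) if $\kappa_{bj} \in N_\epsilon(\kappa^*)$ then $\kappa_{b(j+1)} \in N_{3\epsilon}(\kappa^*)$, because $\kappa_{bj}$ is itself feasible so the projection of $\widehat{k}_{bj}$ lies within $\|\kappa_{bj}-\widehat{k}_{bj}\| \le 2\epsilon$ of $\widehat{k}_{bj} \in N_\epsilon(\kappa^*)$; and (ii) if $\kappa_{bj} \notin N_\epsilon(\kappa^*)$, moving toward $\widehat{k}_{bj} \in N_\epsilon(\kappa^*)$ decreases $\|\kappa_{bj}-\kappa^*\|$ by $\Theta(1/(j+1))$, the order of the radius of $\Tilde{\Delta}_{j+1}(\kappa_{bj})$. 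Since $\sum_{j=j_0}^{J-1} 1/(j+1) \gtrsim \log(J/j_0) \to \infty$ while $\ChoiceSimplex$ has bounded diameter, $\kappa_{bj}$ must enter $N_\epsilon(\kappa^*)$ and, by~(i), stay in $N_{3\epsilon}(\kappa^*)$; hence $\kappa_T = \kappa_{bJ} \in N_{3\epsilon}(\kappa^*)$, so $\P(\|\kappa_T-\kappa^*\|\le 3\epsilon)\to 1$ for every $\epsilon$, giving $\kappa_T \ConvProb \kappa^*$ and $R_\infty(\pi_{\text{ETG}}) = 0$.

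The hard part is the dynamics step~(ii): it requires reconciling two competing demands. Uniform control of $\widehat{k}_{bj}$ over a number of rounds that grows with $T$ forces $j_0 \to \infty$ when batches are small ($\eta = 0$); yet there must remain enough rounds after $j_0$ for the diminishing $\Theta(1/j)$ greedy steps to accumulate past the simplex diameter and pull $\kappa_{bj}$ into $N_\epsilon(\kappa^*)$. The choice $j_0 = \lceil \log T\rceil$ reconciles these, since $bj_0 \to \infty$ makes the union-bound mass vanish while $J/j_0 = T^{1-\eta}/(C\log T) \to \infty$ guarantees $\sum_{j \ge j_0} 1/j \to \infty$; verifying that the per-step decrease is genuinely $\Theta(1/j)$ (rather than degenerating when $\widehat{k}_{bj}$ sits near $\kappa_{bj}$) is the one place where the feasible-region geometry must be pinned down carefully.
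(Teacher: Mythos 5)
Your proposal takes essentially the same route as the paper's proof: reduce both cases to showing $\kappa_T \ConvProb \kappa^*$ and invoke Proposition~\ref{prop:asymptotic-normality}, using per-round consistency of $\widehat{k}_{bj}$ plus a finite union bound in Case~(a), and the concentration inequality of Lemma~\ref{lemma:gmm-conc-inequality} with a union bound over post--burn-in rounds in Case~(b). Your deviations are minor refinements rather than a different argument: you use a single logarithmic burn-in $j_0 = \lceil \log T \rceil$ with a geometric-series tail bound where the paper splits into $\eta \in (0,1)$ (no burn-in, every term bounded by the $j=1$ term) and $\eta = 0$ (polynomial burn-in $Jf = \OM(T^{\gamma})$), and your explicit $N_{3\epsilon}$-absorption bookkeeping for the greedy dynamics---including your honest flag that the $\Theta(1/j)$ per-step decrease needs the feasible-region geometry pinned down---is a more careful rendering of the step the paper treats informally via Figure~\ref{fig:apdx-oms-etg-proof-diagram}.
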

\begin{proof}
We prove this theorem by first showing that $\kappa_T \overset{p}{\rightarrow} \kappa^*$. Then we can apply Proposition~\ref{prop:asymptotic-normality} to get the desired result. Recall that $b = Ts$ is the batch size.

\textbf{Case 1: when $s \in (0, 1)$ is a fixed constant and $\kappa^* \in \Tilde{\Delta}(s)$.}

Let $\mathcal{I}(\epsilon)$ be the event that $\widehat{k}_{bj}$ remains inside an $\epsilon$-ball of $\kappa^*$ (denoted by $N_{\epsilon}(\kappa^*)$) for all rounds $j \in [J]$.
That is, $\mathcal{I}(\epsilon) = \left\{ \forall j \in [J], \,\, \widehat{k}_{bj} \in N_{\epsilon}(\kappa^*) \right\}$.
If $\kappa^* \in \Tilde{\Delta}(s)$, then to prove that $\kappa_T \ConvProb \kappa^*$, it is sufficient to show that $\forall \epsilon > 0, \,\, \mathcal{I}(\epsilon)$ occurs w.p.a. $1$.

This is because in OMS-ETG, after every round, we move as close to $\widehat{k}_{bj}$ as possible.
This is illustrated in Figure~\ref{fig:apdx-oms-etg-proof-diagram} for the case when $\ChoiceSimplex$ is a $1$-simplex.
When $\mathcal{I}(\epsilon)$ occurs, if the selection ratio $\kappa_{bj}$ after round $j$ is outside $N_\epsilon(\kappa^*)$, we move towards it in the subsequent round and thus $\kappa_{b(j+1)}$ will be closer to $N_\epsilon(\kappa^*)$.
Once the selection ratio enters $N_\epsilon(\kappa^*)$ (which it is guaranteed to if $\kappa^* \in \Tilde{\Delta}(s)$), it will remain inside $N_\epsilon(\kappa^*)$ for every round after that.
Thus $\mathcal{I}(\epsilon) \implies \kappa_T \in N_{\epsilon}(\kappa^*)$. Therefore, we have
\begin{align*}
    \forall \epsilon > 0, \,\, \P(\kappa_T \in N_{\epsilon}(\kappa^*)) &\geq \P(\mathcal{I}(\epsilon)) \\
        &= \P\left( \forall j \in [J], \,\, \widehat{k}_{bj} \in N_{\epsilon}(\kappa^*) \right) \\
        &= 1 - \P\left( \bigcup_{j=1}^{J} \,\, \left\| \widehat{k}_{bj} - \kappa^* \right\| > \epsilon \right) \\
        &\overset{(a)}{\geq} 1 - \sum_{j=1}^{J} \P\left( \left\| \widehat{k}_{bj} - \kappa^* \right\| > \epsilon \right) \\
        &\xrightarrow[]{(b)} 1, \\
    \therefore \,\, \kappa_T &\ConvProb \kappa^*,
\end{align*}
where (a) follows by the union bound and (b) follows because $J$ is finite and $\forall j,\,\, \widehat{k}_{bj} \ConvProb \kappa^*$ (by Lemma~\ref{lemma:apdx-kappa-hat-consistency}).

\textbf{Case 2: when $s$ depends on the horizon $T$.}

\textbf{Case 2(a): when $s \in \Omega(T^{\eta - 1})$ for any $\eta \in (0, 1)$.}

Similar to Case~1, it is sufficient to show that the event
$\mathcal{I}(\epsilon) = \left\{\forall j \in [J], \,\, \widehat{k}_{bj} \in N_{\epsilon}(\kappa^*) \right\}$
occurs w.p.a. $1$ for every $\epsilon > 0$.
However, since $J \to \infty$, consistency of $\widehat{k}_{bj}$ is no longer sufficient to prove this. Instead, we use the concentration inequality in Lemma~\ref{lemma:gmm-conc-inequality}:
\begin{align*}
    \forall \epsilon > 0, \,\, \P(\kappa_T \in N_{\epsilon}(\kappa^*)) &\geq \P(\mathcal{I}(\epsilon)) \\
    &= \P\left( \forall j \in [J], \,\, \widehat{k}_{bj} \in N_{\epsilon}(\kappa^*) \right) \\
    &= \P\left( \forall j \in [J], \,\, \left\| \widehat{k}_{bj} - \kappa^* \right\| \leq \epsilon \right) \\
    &= 1 - \P\left(\bigcup_{j=1}^{J} \,\, \left\| \widehat{k}_{bj} - \kappa^* \right\| > \epsilon \right) \\
    &\overset{(a)}{\geq} 1 - \sum_{j=1}^{J} \P\left( \left\| \widehat{k}_{bj} - \kappa^* \right\| > \epsilon \right) \\
    &\overset{(b)}{\geq} 1 - \sum_{j=1}^{J} \frac{1}{\epsilon^{4D}} \exp\left\{ - \OM\left( -Tsj \epsilon^8 \right) \right\} \\
    &\overset{(c)}{\geq} 1 - \sum_{j=1}^{J} \frac{1}{\epsilon^{4D}} \exp\left\{ - \OM\left( -Ts \epsilon^8 \right) \right\} \\
    &= 1 - \frac{J}{\epsilon^{4D}} \exp\left\{ - \OM\left( -Ts \epsilon^8 \right) \right\} \\
    &= 1 - \frac{1}{s\epsilon^{4D}} \exp\left\{ - \OM\left( -Ts \epsilon^8 \right) \right\} \\
    &\rightarrow 1 \,\, \text{if} \,\, s = C T^{\eta - 1}
\end{align*}
for any $\eta \in (0, 1)$ and some constant $C$. Here (a) follows by the union bound, (b) by Lemma~\ref{lemma:gmm-conc-inequality}, and (c) because $j \geq 1$.

\textbf{Case~2(b): when $s = \frac{C}{T}$ for some constant $C > 0$.}

We prove this similarly to Case~2(a). However, in this case, the number of rounds $J = \frac{1}{s} \in \OM(T)$.
Let $f = T^{\gamma - 1}$ for some $\gamma \in (0, 1)$ and
$\mathcal{I}(f, \epsilon) = \left\{ \forall j \in [Jf + 1, \hdots, J], \,\, \widehat{k}_{bj} \in N_{\epsilon}(\kappa^*) \right\}$ be the event that $\widehat{k}_{bj}$ remains inside $N_\epsilon(\kappa^*)$ \textit{after} the first $Jf$ rounds. 

Since $f \in o(1)$, we have $\mathcal{I}(f, \epsilon) \implies \kappa_T \in N_{\epsilon}(\kappa^*)$ for every $\epsilon > 0$. This is because the fraction $f$ is asymptotically negligible and thus we can effectively ignore the first $Jf$ rounds. Therefore we have
\begin{align*}
    \forall \epsilon > 0, \,\, \P(\kappa_T \in N_{\epsilon}(\kappa^*)) &\geq \P(\mathcal{I}(f, \epsilon)) \\
        &= \P\left( \forall j \in [Jf + 1, Jf + 2, \hdots, J], \,\, \left\| \widehat{k}_{bj} - \kappa^* \right\| \leq \epsilon \right) \\
        &= 1 - \P\left( \bigcup_{j=Jf+1}^{J} \,\, \left\| \widehat{k}_{bj} - \kappa^* \right\| > \epsilon \right) \\
        &\geq 1 - \sum_{j=Jf}^{J} \P\left( \left\| \widehat{k}_{bj} - \kappa^* \right\| > \epsilon \right) \\
        &\geq 1 - \sum_{j=Jf+1}^{J} \frac{1}{\epsilon^{4D}} \exp\left\{ -\OM\left( Tsj \epsilon^8 \right) \right\} \\
        &\overset{(a)}{\geq} 1 - \sum_{j=Jf+1}^{J} \frac{1}{\epsilon^{4D}} \exp\left\{ -\OM\left( j \epsilon^8 \right) \right\} \\
        &\overset{(b)}{\geq} 1 - \sum_{j=Jf+1}^{J} \frac{1}{\epsilon^{4D}} \exp\left\{ -\OM\left( Jf \epsilon^8 \right) \right\} \\
        &\overset{(c)}{\geq} 1 - \sum_{j=Jf+1}^{J} \frac{1}{\epsilon^{4D}} \exp\left\{ -\OM\left( T^\gamma \epsilon^8 \right) \right\} \\
        &\geq 1 - \frac{J}{\epsilon^{4D}} \exp\left\{ -\OM\left( T^\gamma \epsilon^8 \right) \right\} \\
        &\geq 1 - \frac{T}{\epsilon^{4D}} \exp\left\{ -\OM\left( T^\gamma \epsilon^8 \right) \right\} \\
        &\rightarrow 1,
\end{align*}
where (a) follows because $Ts = C$, (b) because $j \geq Jf$, and (c) because $Jf = \OM(T^\gamma)$.
We note that it is possible to unify the analysis of Case~2(a) and Case~2(b) by ignoring the first $Jf$ rounds in Case~2(a) as well. We prove the two cases separately for the sake of clarity.

\end{proof}

\section{Incorporate Cost Structure}\label{sec:appendix-cost-structure}

\begin{figure}[t]
\centering
\begin{subfigure}[b]{0.48\textwidth}
{
    \setlength{\interspacetitleruled}{0pt}%
    \setlength{\algotitleheightrule}{0pt}%
    \begin{algorithm}[H]
    \SetAlgoLined
    \KwInput{$B$, $s$, $c$}
    $\widehat{k} = \ctrSim$ \;
    $b = \frac{Bs}{c_{\max}}$ \;
    $B_l = B$\;
    $j = 0$ \;
    \While{$B_l > 0$}{
        $j \leftarrow j + 1$\;
        $\text{last\_step} = \frac{B_l}{c_{\max}} \leq b$\;
        \eIf{not last\_step}{
            Collect $b$ samples s.t. $\kappa_{bj} = \widehat{k}$\;
            $B_l \leftarrow B - bj\left( \kappa^\top_{bj} c \right)$\;
            $t = b(j+1)$\;
            $\widehat{\theta}_{t} = \text{GMM}(H_t, \widehat{W} = \widehat{W}_{\text{valid}})$\;
            $\widehat{k}_{t} = \arg\min_{\kappa \in \ChoiceSimplex} V(\widehat{\theta}_{t}, \kappa) \left( \kappa^\top c \right)$\;
            $\widehat{k} = \text{proj}(\widehat{k}_{\text{min}}, \Tilde{\Delta}_{j+1}(\kappa_t))$\;
        }{
            Collect samples s.t. $\kappa_T = \widehat{k}$\;
            $B_l \leftarrow 0$\;
        }
    }
    $\widehat{\theta}_T = \text{GMM}(H_T, \widehat{W} = \widehat{W}_{\text{efficient}})$\;
    \KwOutput{$\widehat{\theta}_T$}
    \end{algorithm}
}
\caption{OMS-ETG-FS (fixed samples per batch).}
\label{fig:policy-algorithm-etg-fs}
\end{subfigure}
\hfill
\begin{subfigure}[b]{0.48\textwidth}
{
    \setlength{\interspacetitleruled}{0pt}%
    \setlength{\algotitleheightrule}{0pt}%
    \begin{algorithm}[H]
    \SetAlgoLined
    \KwInput{$B, s, c$}
    $\widehat{k} = \ctrSim$\;
    $J = \frac{B}{s}$\;
    $t = 0$\;
    \For{$j \in [1, 2, \hdots, J]$}{
        $b = \frac{Bs}{\left(\widehat{k}^\top c \right)}$\;
        $t \leftarrow t + b$\;
        Collect $b$ samples s.t. $\kappa_{t} = \widehat{k}$\;
        $\widehat{\theta}_{t} = \text{GMM}(H_t, \widehat{W} = \widehat{W}_{\text{valid}})$\;
        $\widehat{k}_{t} = \arg\min_{\kappa \in \ChoiceSimplex} V(\widehat{\theta}_{t}, \kappa) \left( \kappa^\top c \right)$\;
        $\widehat{k} = \text{proj}(\widehat{k}_{\text{min}}, \Tilde{\Delta}_{j+1}(\kappa_t))$\;
    }
    $\widehat{\theta}_T = \text{GMM}(H_T, \widehat{W} = \widehat{W}_{\text{efficient}})$\;
    \KwOutput{$\widehat{\theta}_T$}
    \end{algorithm}
}
\caption{OMS-ETG-FB (fixed budget per batch)}
\label{fig:policy-algorithm-etg-fb}
\end{subfigure}
\caption{Algorithms for OMS-ETG-FS and OMS-ETG-FB.}
\end{figure}

\subsection{Proof of Proposition~\ref{prop:regret-etc-cost-structure} (Regret of OMS-ETC-CS)}

\begin{proposition*}[Regret of OMS-ETC-CS]
Suppose that the conditions of Theorem~\ref{thm:etc-regret} hold. 
If $e = o(1)$ such that $Be \rightarrow \infty$ as $B \to \infty$, then
$\forall \theta^* \in \Theta, \, R_{\infty}(\pi_{\text{ETC-CS}}) = 0$.
\end{proposition*}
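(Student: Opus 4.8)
The plan is to follow the template of Theorem~\ref{thm:etc-regret}, Case~(b), adapting it to the random horizon $T = \floor*{B/(\kappa_T^\top c)}$ introduced by the cost structure. As in that proof, the crux is to establish that the realized selection ratio converges in probability to the (cost-adjusted) oracle ratio, $\kappa_T \ConvProb \kappa^*$, where now $\kappa^* = \arg\min_{\kappa \in \ChoiceSimplex} V(\theta^*, \kappa)(\kappa^\top c)$. Once this is shown, the regret statement follows by invoking Proposition~\ref{prop:asymptotic-normality} after converting the $\sqrt{B}$ scaling into the $\sqrt{T}$ scaling for which that proposition is stated.

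First I would verify that both horizons diverge. Since all costs lie in $[c_{\min}, c_{\max}]$, we have $\floor*{B/c_{\max}} \leq T \leq \floor*{B/c_{\min}}$, so $T \to \infty$ as $B \to \infty$; similarly the exploration length $T_e = \floor*{eB/(\ctrSim^\top c)}$ diverges because $Be \to \infty$ and $\ctrSim^\top c$ is a positive constant. Consequently Proposition~\ref{prop:consistency} gives $\widehat{\theta}_{T_e} \ConvProb \theta^*$. Next, because $\widehat{k} = \arg\min_{\kappa \in \ChoiceSimplex} V(\widehat{\theta}_{T_e}, \kappa)(\kappa^\top c)$, a cost-weighted version of Lemma~\ref{lemma:apdx-kappa-hat-consistency}---using continuity of $(\theta, \kappa) \mapsto V(\theta, \kappa)(\kappa^\top c)$, compactness of $\ChoiceSimplex$, and uniqueness of the cost-adjusted minimizer (guaranteed by the Theorem~\ref{thm:etc-regret} hypotheses)---yields $\widehat{k} \ConvProb \kappa^*$. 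Finally I would show the feasible region fills the simplex: the exploration phase consumes only an $O(e)$ fraction of the samples, since $T_e/T \approx e\,(\kappa_T^\top c)/(\ctrSim^\top c) = O(e) \to 0$, so $\Tilde{\Delta} \to \ChoiceSimplex$ and $\kappa_T - \widehat{k} \ConvProb 0$. Combining these gives $\kappa_T \ConvProb \kappa^*$.

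To turn this into the regret bound, I would write $\sqrt{B}(\widehat{\beta}_T - \beta^*) = \sqrt{\kappa_T^\top c}\cdot\sqrt{T}(\widehat{\beta}_T - \beta^*)$, using $B \approx T(\kappa_T^\top c)$. Proposition~\ref{prop:asymptotic-normality} (whose selection-ratio-convergence hypothesis is exactly $\kappa_T \ConvProb \kappa^*$) gives $\sqrt{T}(\widehat{\beta}_T - \beta^*) \overset{d}{\rightarrow} \mathcal{N}(0, V(\theta^*, \kappa^*))$, while the continuous mapping theorem gives $\kappa_T^\top c \ConvProb (\kappa^*)^\top c$. Slutsky's theorem then yields $\sqrt{B}(\widehat{\beta}_T - \beta^*) \overset{d}{\rightarrow} \mathcal{N}(0, V(\theta^*, \kappa^*)((\kappa^*)^\top c))$, whence $R_\infty(\pi_{\text{ETC-CS}}) = \Amse(\sqrt{B}(\widehat{\beta}_T - \beta^*)) - V(\theta^*, \kappa^*)((\kappa^*)^\top c) = 0$.

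The hard part will be handling the random horizon $T$ cleanly, since $T$ depends on the very quantity $\kappa_T$ whose convergence we are trying to establish, and Proposition~\ref{prop:asymptotic-normality} is stated for a deterministic horizon. I would address this by exploiting the deterministic sandwich $\floor*{B/c_{\max}} \leq T \leq \floor*{B/c_{\min}}$ to guarantee $T \to \infty$ irrespective of the policy, and by noting that the $\sqrt{B}$-to-$\sqrt{T}$ conversion is legitimate because $\kappa_T^\top c$ converges to a constant; this decouples the randomness of the stopping time from the asymptotic-normality argument.
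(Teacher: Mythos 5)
Your proposal is correct and follows essentially the same route as the paper's proof: show $\kappa_T \ConvProb \kappa^*$ by combining consistency of $\widehat{k}$ at the deterministic exploration horizon $T_e$ (via the cost-weighted analogue of Lemma~\ref{lemma:apdx-kappa-hat-consistency}) with the feasible region $\Tilde{\Delta}$ filling the simplex as $e = o(1)$, then invoke Proposition~\ref{prop:asymptotic-normality}. Your explicit $\sqrt{B}$-to-$\sqrt{T}$ conversion via Slutsky's theorem and the deterministic sandwich on the random horizon $T$ simply spells out steps the paper leaves implicit (the $\sqrt{B}$-scaled limit is stated without proof in Section~\ref{sec:cost-structure}), so this is a more careful write-up of the same argument rather than a different one.
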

\begin{proof}
The proof is almost exactly like that of Theorem~\ref{thm:etc-regret}. We prove that $\kappa_T \ConvProb \kappa^*$ and then apply Proposition~\ref{prop:asymptotic-normality}.
Let the number of samples used for exploration be $T_e$. Since $\kappa_{T_e} = \left[ \frac{1}{\CardC}, \frac{1}{\CardC}, \hdots, \frac{1}{\CardC} \right]$, we have
\begin{align*}
    T_e = \frac{Be}{\kappa_{T_e}^\top c}.
\end{align*}
$T_e$ is not a random variable because $\kappa_{T_e}$ is fixed.
By Lemma~\ref{lemma:apdx-kappa-hat-consistency}, we have $\widehat{k}_{T_e} \ConvProb \kappa^*$.

When $e \in o(1)$, the feasible region converges to the entire simplex, i.e., $\Tilde{\Delta} \to \ChoiceSimplex$. Thus $\kappa_T - \widehat{k}_{T_e} \ConvProb 0$.
\end{proof}

\subsection{Proof of Proposition~\ref{prop:regret-etg-fs} (Regret of OMS-ETG-FS)}

\begin{proposition*}[Regret of OMS-ETG-FS]
Suppose that the conditions of Theorem~\ref{thm:regret-etg} hold.
If $s = B^{\eta - 1}$ and any $\eta \in \left[0, 1\right)$, then
$\forall \theta^* \in \Theta, \, R_{\infty}\left(\pi_{\text{ETG-FS}}\right) = 0$.
\end{proposition*}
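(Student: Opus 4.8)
The plan is to reduce the claim to showing $\kappa_T \ConvProb \kappa^*$, where $\kappa^*$ is now the cost-aware oracle ratio $\arg\min_{\kappa} V(\theta^*,\kappa)(\kappa^\top c)$, and then to convert the $\sqrt{T}$-scaled asymptotic normality of Proposition~\ref{prop:asymptotic-normality} into the $\sqrt{B}$-scaled form that defines $R_\infty$ in the cost setting. The conversion is exactly the one already invoked for fixed policies in Section~\ref{sec:cost-structure}: since $T = \floor{B/(\kappa_T^\top c)}$ we have $B/T = \kappa_T^\top c \ConvProb (\kappa^*)^\top c$, so writing $\sqrt{B}(\widehat{\beta}_T - \beta^*) = \sqrt{B/T}\,\sqrt{T}(\widehat{\beta}_T - \beta^*)$ and applying Slutsky's theorem yields the limit $\mathcal{N}(0, V(\theta^*,\kappa^*)((\kappa^*)^\top c))$, whose variance equals the oracle AMSE and hence gives $R_\infty(\pi_{\text{ETG-FS}}) = 0$.

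The first genuinely new ingredient is controlling the random number of rounds $J$. Because each non-final round collects the fixed count $b = \floor{Bs/c_{\max}}$ of samples, the budget it consumes lies in $[b\,c_{\min}, b\,c_{\max}]$; summing over rounds against the total budget $B$ then gives the deterministic sandwich $1/s \le J \le c_{\max}/(c_{\min}\,s) + 1$, so $J = \Theta(B^{1-\eta})$. Likewise $T = \floor{B/(\kappa_T^\top c)} \in [B/c_{\max}, B/c_{\min}]$, so $T = \Theta(B)$, confirming that $\sqrt{T}$ and $\sqrt{B}$ are of the same order. The \emph{deterministic} upper bound on $J$ is precisely what keeps the union bound below finite even though $J$ is itself random.

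To establish $\kappa_T \ConvProb \kappa^*$ I would mirror Case~2 of the proof of Theorem~\ref{thm:regret-etg}. The per-round estimate $\widehat{k}_{bj} = \arg\min_{\kappa} V(\widehat{\theta}_{bj},\kappa)(\kappa^\top c)$ is consistent for $\kappa^*$ by the cost-weighted analogue of Lemma~\ref{lemma:apdx-kappa-hat-consistency}, and the tail bound of Lemma~\ref{lemma:gmm-conc-inequality} applies to this objective essentially verbatim, since $\kappa \mapsto \kappa^\top c$ is bounded, Lipschitz, and $\theta$-independent, so Conditions~(v)--(vi) of the lemma are inherited by the product objective. Defining $\mathcal{I}(\epsilon) = \{\widehat{k}_{bj} \in N_\epsilon(\kappa^*) \text{ for every round past an initial negligible fraction}\}$ and using that the greedy projection moves $\kappa_{b(j+1)}$ toward $\widehat{k}_{bj}$, one obtains $\mathcal{I}(\epsilon) \implies \kappa_T \in N_\epsilon(\kappa^*)$. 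A union bound over the at most $c_{\max}/(c_{\min}s) + 1$ rounds, each term bounded by $\epsilon^{-4D}\exp\{-\OM(bj\,\epsilon^8)\}$ from Lemma~\ref{lemma:gmm-conc-inequality} applied with $bj$ samples, then forces $\P(\mathcal{I}(\epsilon)) \to 1$: for $\eta \in (0,1)$ the batch size $b = \Theta(B^\eta) \to \infty$ makes each exponent diverge, while for $\eta = 0$ one discards the first $Jf$ rounds with $f = B^{\gamma - 1}$, $\gamma \in (0,1)$, exactly as in Case~2(b), so that the surviving exponents are of order $B^\gamma \epsilon^8$ and dominate the polynomial prefactor.

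I expect the main obstacle to be the interaction between the randomness of $J$ (and hence of the stopping index $T$) and the concentration machinery. Unlike equal-cost OMS-ETG, where $J = 1/s$ is deterministic, here I must first pin $J$ down between deterministic bounds from the cost structure before the finite union bound over rounds is even legitimate, and I must ensure the asymptotic-normality limit is not disturbed by $T$ being data-dependent---this is what the identity $B/T = \kappa_T^\top c$ together with Slutsky resolves, by transferring the randomness of the index into the convergent factor $\sqrt{B/T}$. Once these two points are settled, the remainder is a routine transcription of the equal-cost OMS-ETG argument with the cost-weighted objective $V(\theta,\kappa)(\kappa^\top c)$ in place of $V(\theta,\kappa)$.
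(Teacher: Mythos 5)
Your proposal is correct and follows essentially the same route as the paper's proof: the paper likewise pins down the random number of rounds with the deterministic sandwich $\frac{1}{s} \leq J \leq \frac{c_{\max}}{s\, c_{\min}}$ and then repeats Case~2 of the proof of Theorem~\ref{thm:regret-etg}, with the $\sqrt{B}$-versus-$\sqrt{T}$ rescaling handled exactly as for fixed policies in Section~\ref{sec:cost-structure}. One small caution: Condition~(v) of Lemma~\ref{lemma:gmm-conc-inequality} is \emph{not} automatically inherited by the product objective $V(\theta,\kappa)\left(\kappa^\top c\right)$ as you assert (e.g., if $V(\theta,\kappa)$ were proportional to $\left(\kappa^\top c\right)^{-1}$ the product would be flat and lose its unique minimizer), but this does not damage your argument, since the proposition's hypotheses are naturally read as imposing the lemma's conditions on the cost-weighted criterion directly---which is what the paper implicitly does.
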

\begin{proof}
We can prove this similarly to Theorem~\ref{thm:regret-etg}. The key difference is that the number of rounds $J$ is now a random variable.
But we can use the fact the $J$ is bounded:
\begin{align*}
    \frac{1}{s} \leq J \leq \frac{ c_{\max}}{s c_{\min}}, \\
    \therefore \,\, J \in \OM\left( \frac{1}{s} \right).
\end{align*}
Now we can proceed like Case~2 in the proof of Theorem~\ref{thm:regret-etg}.
\end{proof}

\subsection{Proof of Proposition~\ref{prop:regret-etg-fb} (Regret of OMS-ETG-FB)}

\begin{proposition*}[Regret of OMS-ETG-FB]
Suppose that the conditions of Theorem~\ref{thm:regret-etg} hold.
If $s = B^{\eta - 1}$ and any $\eta \in \left[0, 1\right)$,
then
$\forall \theta^* \in \Theta, \, R_{\infty}\left(\pi_{\text{ETG-FB}}\right) = 0$.
\end{proposition*}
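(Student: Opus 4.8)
The plan is to reduce this to Case~2 of Theorem~\ref{thm:regret-etg}, exactly as the proofs of Propositions~\ref{prop:regret-etc-cost-structure} and~\ref{prop:regret-etg-fs} reduce to their counterparts. As before, it suffices to show $\kappa_T \ConvProb \kappa^*$, where $\kappa^*$ is now the cost-adjusted oracle ratio $\arg\min_{\kappa} V(\theta^*,\kappa)(\kappa^\top c)$, and then apply the cost-structure version of Proposition~\ref{prop:asymptotic-normality} to conclude $R_\infty(\pi_{\text{ETG-FB}})=0$. Since the per-round estimate used in OMS-ETG-FB is $\widehat{k}_t = \arg\min_{\kappa} V(\widehat{\theta}_t,\kappa)(\kappa^\top c)$ (Figure~\ref{fig:policy-algorithm-etg-fb}) and $c$ is a fixed vector, the concentration inequality of Lemma~\ref{lemma:gmm-conc-inequality} applies to this cost-adjusted minimizer: the smooth factor $(\kappa^\top c)$ leaves the Lipschitz-in-$\theta$ condition~(vi) and the strict-minimization condition~(v) intact, so we still have $\P(\|\widehat{k}_t-\kappa^*\|>\delta) < \frac{1}{\delta^{4D}}\exp\{-\OM(t\delta^8)\}$.

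The one structural feature that distinguishes OMS-ETG-FB from the base algorithm is that the number of rounds $J=1/s$ is \emph{fixed} while the number of samples collected per round, $b_j = Bs/(\widehat{k}^\top c)$, is \emph{random}. The first thing I would establish is a sure (almost-sure) sandwich on this quantity: because $c_{\min}\le \widehat{k}^\top c \le c_{\max}$ for every $\widehat{k}\in\ChoiceSimplex$, we have $\frac{Bs}{c_{\max}}\le b_j \le \frac{Bs}{c_{\min}}$, and hence the cumulative sample count after round $j$ satisfies $t_j := \sum_{l\le j} b_l \ge \frac{jBs}{c_{\max}} \in \OM(jBs)$ surely. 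This is the crux of the argument: the tail bound in Lemma~\ref{lemma:gmm-conc-inequality} is monotone decreasing in the sample size, so the randomness in $b_j$ does no harm---I may simply replace $t_j$ by its deterministic lower bound $\OM(jBs)$ inside the exponent.

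With this in hand the remainder mirrors Case~2 of Theorem~\ref{thm:regret-etg} with $Ts$ replaced by $Bs=B^{\eta}$ and $J=1/s=B^{1-\eta}$. For $\eta\in(0,1)$ I would bound, for every $\epsilon>0$,
\begin{align*}
    \P\left(\kappa_T \in N_{\epsilon}(\kappa^*)\right) \geq 1 - \sum_{j=1}^{J} \P\left(\left\|\widehat{k}_{t_j} - \kappa^*\right\| > \epsilon\right) \geq 1 - \frac{1}{s\,\epsilon^{4D}} \exp\left\{-\OM\left(Bs\,\epsilon^8\right)\right\} \to 1,
\end{align*}
since the polynomial prefactor $1/s=\OM(B^{1-\eta})$ is dominated by the $\exp\{-\OM(B^{\eta}\epsilon^8)\}$ decay. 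For $\eta=0$ (so $Bs$ is constant and $J=\OM(B)$) the single union bound over all $B$ rounds is too weak, so I would reuse the ``ignore the first $Jf$ rounds'' device with $f=B^{\gamma-1}$, $\gamma\in(0,1)$: the retained rounds satisfy $t_j \ge \frac{JfBs}{c_{\max}} = \OM(B^{\gamma})$, the union bound over them tends to one, and because $f\in o(1)$ the discarded initial rounds consume a vanishing fraction of the total samples.

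Finally I would invoke the geometric projection argument of Figure~\ref{fig:apdx-oms-etg-proof-diagram}: once every retained per-round estimate lies in $N_{\epsilon}(\kappa^*)$ and $\kappa^*$ is feasible (which holds asymptotically because $s\to 0$ makes $\Tilde{\Delta}_{j+1}$ cover the simplex), the greedy projection drives the cumulative ratio into $N_{\epsilon}(\kappa^*)$ and keeps it there. Here I would check that the weighted-average update of $\kappa_T$, whose round weights $b_j$ are random but all of order $\OM(Bs)$, still lands near $\kappa^*$: the early-round contribution is at most $f\cdot(c_{\max}/c_{\min})=o(1)$ of the total samples, so it washes out, while the retained rounds form a convex combination of points in the convex ball $N_{\epsilon}(\kappa^*)$. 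The main obstacle is precisely the randomness of $b_j$, and I expect everything else to be routine once the sure lower bound $t_j\in\OM(jBs)$ is in place.
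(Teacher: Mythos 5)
Your proposal follows essentially the same route as the paper: reduce to showing $\kappa_T \ConvProb \kappa^*$ and invoke the (cost-scaled) asymptotic normality result, handle the random per-round sample counts via the sure sandwich $\frac{Bs}{c_{\max}} \leq b_j \leq \frac{Bs}{c_{\min}}$, and use the ``ignore the first $Jf$ rounds'' device with $f = B^{\gamma-1}$ for the hard regime. The sandwich on $T_j$ and the union bound with the Lemma~\ref{lemma:gmm-conc-inequality} tail are exactly the paper's mechanism.

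There is, however, one step you state too loosely, and it is precisely the subtlety the paper's proof is organized around. You write that because the tail bound of Lemma~\ref{lemma:gmm-conc-inequality} is monotone decreasing in the sample size, you ``may simply replace $t_j$ by its deterministic lower bound inside the exponent.'' That inference is not valid as stated: the lemma bounds $\P\bigl( \| \widehat{k}_t - \kappa^* \| > \epsilon \bigr)$ only for a \emph{fixed, deterministic} time $t$, whereas $t_j$ is a data-dependent random time (the batch sizes depend on the estimated ratios), so $\P\bigl( \| \widehat{k}_{t_j} - \kappa^* \| > \epsilon \bigr)$ is not directly covered, and knowing $t_j \geq jBs/c_{\max}$ surely does not by itself transfer the bound. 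The correct repair---which the paper uses, and which your own sandwich makes available---is to pass to a uniform statement over all \emph{deterministic} integer times in the sure interval: since $T_j \in [Jf\, b_{\min}, J b_{\max}]$ surely for the retained rounds, one bounds $\P\bigl( \forall j, \, \| \widehat{k}_{T_j} - \kappa^* \| \leq \epsilon \bigr) \geq \P\bigl( \forall t \in [Jf\, b_{\min}, \ldots, J b_{\max}], \, \| \widehat{k}_t - \kappa^* \| \leq \epsilon \bigr)$ and union-bounds over the at most $J b_{\max} \leq B$ deterministic values of $t$, each term being $\frac{1}{\epsilon^{4D}} \exp\{ -\OM( Bf \epsilon^8 ) \}$ since $t \geq Jf\, b_{\min} = \OM(B^{\gamma})$. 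The extra polynomial factor $B$ is absorbed by the exponential, so your conclusion survives; note also that the paper applies this uniform-over-times device for all $\eta \in [0,1)$, whereas you reserve the discarding trick for $\eta = 0$---your direct per-round union for $\eta \in (0,1)$ works too, but each round's probability must itself first be converted into a union over the deterministic times in that round's sure interval. Everything else in your sketch (the cost-adjusted minimizer leaving Conditions~(v) and~(vi) intact, the feasibility and projection argument, the $o(1)$ sample weight of the discarded rounds) matches the paper's reasoning.
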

\begin{proof}
We show this similarly to Theorem~\ref{thm:regret-etg}.
In this case, the size of each batch is random but the numbers of rounds $J = \frac{1}{s}$ is not random.
Thus we can't use the concentration inequality in Lemma~\ref{lemma:gmm-conc-inequality} directly since that only holds for a fixed time step $t$.
We get around this by showing that the estimated selection ratio $\widehat{k}_t$ will remain
in an $\epsilon$-ball around $\kappa^*$
uniformly over all time steps after some asymptotically negligible fraction of the horizon $T$.

Let $T_j$ be the number of samples collected after round $j$, i.e., $T_j = \frac{Bsj}{\kappa_{T_j}^\top c}$.
Let $f = B^{\gamma - 1}$ for some $\gamma \in (0, 1)$. Like the proof of Theorem~\ref{thm:regret-etg}, we can ignore the first $Jf$ rounds since they are $f \in o(1)$ is an asymptotically negligible fraction.
And similarly to the proof of Theorem~\ref{thm:regret-etg}, in order to show that $\kappa_T \ConvProb \kappa^*$, it is sufficient to show that $\P\left( \forall j \in [Jf + 1, Jf + 2, \hdots, J], \,\, \left\| \widehat{k}_{T_j} - \kappa^* \right\| \leq \epsilon \right) \xrightarrow[]{B \to \infty} 1$.
We can show this as follows:
\begin{align}
    \P\left( \forall j \in [Jf + 1, Jf + 2, \hdots, J], \,\, \left\| \widehat{k}_{T_j} - \kappa^* \right\| \leq \epsilon \right) &\geq \P\left( \forall t \in [T_{Jf + 1}, \hdots, T_J], \,\, \left\| \widehat{k}_{t} - \kappa^* \right\| \leq \epsilon \right). \label{eq:apdx-etg-fb-intermediate}
\end{align}
The minimum and maximum batch sizes are $b_{\min} = \frac{Bs}{c_{\max}}$ and $b_{\max} = \frac{Bs}{c_{\min}}$, respectively. Therefore,
\begin{align*}
    T_{Jf + 1} &\geq Jf b_{\min} = Jf \frac{Bs}{c_{\max}}, \\
    T_{J} &\leq J b_{\max} = J \frac{Bs}{c_{\min}}.
\end{align*}
Using these facts and continuing Eq.~\ref{eq:apdx-etg-fb-intermediate}, we get:
\begin{align*}
    \P\left( \forall j \in [Jf + 1, Jf + 2, \hdots, J], \,\, \left\| \widehat{k}_{T_j} - \kappa^* \right\| \leq \epsilon \right) &\geq \P\left( \forall t \in [T_{Jf + 1}, \hdots, T_J], \,\, \left\| \widehat{k}_{t} - \kappa^* \right\| \leq \epsilon \right) \\
        &\geq \P\left( \forall t \in [Jf b_{\min}, \hdots, J b_{\max}], \,\, \left\| \widehat{k}_{t} - \kappa^* \right\| \leq \epsilon \right) \\
        &\overset{(a)}{\geq} 1 - \sum_{t=Jf b_{\min}}^{J b_{\max}} \frac{1}{\epsilon^{4D}} \exp\left\{ -\OM\left( t \epsilon^8 \right) \right\} \\
        &\overset{(b)}{\geq} 1 - \sum_{t=Jf b_{\min}}^{J b_{\max}} \frac{1}{\epsilon^{4D}} \exp\left\{ -\OM\left( Jf b_{\min} \epsilon^8 \right) \right\} \\
        &\geq 1 - \sum_{t=Jf b_{\min}}^{J b_{\max}} \frac{1}{\epsilon^{4D}} \exp\left\{ -\OM\left( Bf \epsilon^8 \right) \right\} \\
        &\geq 1 - \frac{J b_{\max}}{\epsilon^{4D}} \exp\left\{ -\OM\left( Bf \epsilon^8 \right) \right\} \\
        &\geq 1 - \frac{B}{\epsilon^{4D}} \exp\left\{ -\OM\left( Bf \epsilon^8 \right) \right\} \\
        &\geq 1 - \frac{B}{\epsilon^{4D}} \exp\left\{ -\OM\left( B^\gamma \epsilon^8 \right) \right\} \\
        &\rightarrow 1,
\end{align*}
where (a) follows by the union bound and (b) because $t \geq Jf b_{\min}$.
\end{proof}

\section{Feasible regions}\label{sec:appendix-feasibility-region}

In this section, we derive the feasibility regions for the various policies. 

\textbf{OMS-ETC}

Recall that in OMS-ETC, we first collect $Te$ samples such that $\kappa_{Te} = \ctrSim$. For the remaining $T(1-e)$ samples, the agent can query the data sources with any fraction $\kappa \in \ChoiceSimplex$. Therefore, the feasible values of $\kappa_T$ are
\begin{align*}
    \Tilde{\Delta} &= \left\{ \frac{Te \kappa_{Te} + T(1-e) \kappa}{T} : \kappa \in \ChoiceSimplex \right\} \\
        &= \left\{ e \kappa_{Te} + (1-e) \kappa : \kappa \in \ChoiceSimplex \right\}.
\end{align*}

\textbf{OMS-ETG}

After $j$ rounds, the selection ratio is denoted by $\kappa_{bj}$. In every round, we collect $b = Ts$ samples. For the batch collected in round $j+1$, the agent can query the data sources with any fraction $\kappa \in \ChoiceSimplex$. Therefore, the feasible values of $\kappa_{b(j+1)}$ are
\begin{align*}
    \Tilde{\Delta}_{j+1}(\kappa_{bj}) &= \left\{ \frac{bj \kappa_{bj} + b \kappa}{b (j+1)} : \kappa \in \ChoiceSimplex \right\} \\
        &= \left\{ \frac{Tsj \kappa_{bj} + Ts \kappa}{Ts (j+1)} : \kappa \in \ChoiceSimplex \right\} \\
        &= \left\{ \frac{j \kappa_{bj} + \kappa}{ (j+1)} : \kappa \in \ChoiceSimplex \right\}.
\end{align*}

\textbf{OMS-ETC-CS}

The agent uses $Be$ budget to uniformly query the available data sources. Let $T_e$ denote the number of samples collected after exploration. We have
\begin{align*}
    T_e = \frac{Be}{\kappa^\top_{T_e} c},
\end{align*}
where $\kappa^\top_{T_e} = \ctrSim$ and $c$ is the cost vector. With the remaining $B(1-e)$ budget, the agent can collect samples with any fraction $\kappa \in \ChoiceSimplex$. However, since the data sources can have different costs, the total number of samples $T$ depends on the choice of $\kappa$:
\begin{align*}
    T = T_e + \frac{B(1-e)}{\kappa^\top c},
\end{align*}
for $\kappa \in \ChoiceSimplex$. Therefore the feasible values of $\kappa_T$ are
\begin{align*}
    \Tilde{\Delta} &= \left\{ \frac{T_e \kappa_{T_e} + (T-T_e) \kappa}{T} : \kappa \in \ChoiceSimplex \right\} \\
    &= \left\{ \frac{\frac{Be}{\kappa^\top_{T_e} c} \kappa_{T_e} + \frac{B(1-e)}{\kappa^\top c} \kappa}{\frac{Be}{\kappa^\top_{T_e} c} + \frac{B(1-e)}{\kappa^\top c}} : \kappa \in \ChoiceSimplex \right\} \\
    &= \left\{ \frac{e \left( \kappa^\top c \right) \kappa_{T_e} + (1 - e) \left( \kappa^\top_{T_e} c \right) \kappa}{e \left( \kappa^\top c \right) + (1 - e) \left( \kappa^\top_{T_e} c \right)} : \kappa \in \ChoiceSimplex \right\}.
\end{align*}

\textbf{OMS-ETG-FS}

Since we collect a fixed number of samples in each round, the feasibility region for OMS-ETG-FS is that same as OMS-ETG:
\begin{align*}
    \Tilde{\Delta}_{j+1}(\kappa_{bj}) &= \left\{ \frac{j \kappa_{bj} + \kappa}{ (j+1)} : \kappa \in \ChoiceSimplex \right\}.
\end{align*}

\textbf{OMS-ETG-FB}

Let the selection ratio after $j$ rounds be $\kappa_{T_j}$ where $T_j$ number of samples collected after round $j$: $T_j = \frac{Bsj}{\kappa^\top_{bj} c}$. For the batch collected in round $j+1$, the agent can query the data sources with any fraction $\kappa \in \ChoiceSimplex$. However, the number of samples collected in round $j+1$ would depend on the choice $\kappa$ due to the cost structure. Therefore the number of samples collected after round $j+1$ is
\begin{align*}
    T_{j+1} = T_j + \frac{Bsj}{\kappa^\top c},
\end{align*}
for $\kappa \in \ChoiceSimplex$. Hence, the feasible values of $\kappa_{T_{j+1}}$ are
\begin{align*}
    \Tilde{\Delta}_{j+1}(\kappa_{T_j}) &= \left\{ \frac{T_j \kappa_{T_j} +  (T_{j+1} - T_{j}) \kappa}{T_{j+1}} : \kappa \in \ChoiceSimplex \right\} \\
        &= \left\{ \frac{\frac{Bsj}{\kappa^\top_{bj} c} \kappa_{T_j} +  \frac{Bsj}{\kappa^\top c} \kappa}{\frac{Bsj}{\kappa^\top_{bj} c} + \frac{Bsj}{\kappa^\top c}} : \kappa \in \ChoiceSimplex \right\} \\
        &= \left\{ \frac{j \left( \kappa^\top c \right) \kappa_{T_j} + \left( \kappa^\top_{T_j} c \right) \kappa}{j \left( \kappa^\top c \right) + \left( \kappa^\top_{T_j} c \right)} : \kappa \in \ChoiceSimplex \right\}.
\end{align*}

\section{Experiments}\label{sec:appendix-experiments}

\subsection{Linear IV graph}

Data from the linear IV graph (Figure~\ref{fig:disjoint-iv-graph}) is simulated as follows:
\begin{align*}
    Z &\sim \mathcal{N}\left(0, \sigma^2_z \right), \\
    U &\sim \mathcal{N}\left(0, \sigma^2_u \right), \\
    X &:= \alpha Z + \gamma U + \epsilon_x, \,\, \epsilon_x \sim \mathcal{N}\left(0, \sigma^2_x \right), \\
    Y &:= \beta X + \phi U + \epsilon_y, \,\, \epsilon_y \sim \mathcal{N}\left(0, \sigma^2_y \right),
\end{align*}
where $\epsilon_x$ and $\epsilon_y$ are exogenous noise terms independent of other variables and each other and $U$ is an unobserved confounder.
Here $\{ \beta, \alpha, \gamma, \phi, \sigma^2_z, \sigma^2_u, \sigma^2_x, \sigma^2_y \}$ are parameters that we set for simulating the data. 
For the experiment in Section~\ref{sec:experiments-synthetic-data}, we used $\beta=1, \alpha=1, \gamma=1, \phi=1, \sigma_z = 1, \sigma_u = 1, \sigma_x = 1, \sigma_y = 1$.

The moment conditions used for estimation are
\begin{align*}
    g_t(\theta) = \underbrace{\begin{bmatrix} 
        s_{t, 1} \\
        s_{t, 2}
    \end{bmatrix}}_{=m(s_t)} \otimes \underbrace{\begin{bmatrix}
        Z_t (X_t - \alpha Z_t) \\
        Z_t (Y_t - \alpha \beta Z_t)
    \end{bmatrix}}_{=\Tilde{g}_t(\theta)}.
\end{align*}
The parameter we estimate is $\theta = [\beta, \alpha]^\top$ and $\beta = f_{\text{tar}}(\theta) = \theta_{0}$.

\subsection{Two IVs graph}

Data from the two IVs graph (Figure~\ref{fig:multiple-iv-graph}) is simulated as follows:
\begin{align*}
    Z_1 &\sim \mathcal{N}\left(0, \sigma^2_{z_1} \right), \\
    Z_2 &\sim \mathcal{N}\left(0, \sigma^2_{z_2} \right), \\
    U &\sim \mathcal{N}\left(0, \sigma^2_u \right), \\
    X &:= \alpha_1 Z_1 + \alpha_2 Z_2 + \gamma U + \epsilon_x, \,\, \epsilon_x \sim \mathcal{N}\left(0, \sigma^2_x \right), \\
    Y &:= \beta X + \phi U + \epsilon_y, \,\, \epsilon_y \sim \mathcal{N}\left(0, \sigma^2_y \right),
\end{align*}
where $\epsilon_x$ and $\epsilon_y$ are exogenous noise terms independent of other variables and each other and $U$ is an unobserved confounder.
For the experiment in Section~\ref{sec:experiments-synthetic-data}, we used $\beta=1, \alpha=1, \gamma=1, \phi=1, \sigma_z = 1, \sigma_u = 1, \sigma_x = 1, \sigma_y = 1$.

The moment conditions used for estimation are
\begin{align*}
    g_t(\theta) = \underbrace{\begin{bmatrix} 
        s_{t, 1} \\
        s_{t, 2}
    \end{bmatrix}}_{=m(s_t)} \otimes \underbrace{\begin{bmatrix}
        (Z_1)_t (Y_t - \beta X_t) \\
        (Z_2)_t (Y_t - \beta X_t)
    \end{bmatrix}}_{=\Tilde{g}_t(\theta)}.
\end{align*}
The parameter we estimate is $\theta = [\beta]$ and $\beta = f_{\text{tar}}(\theta) = \theta_0$.

\subsection{Confounder-mediator graph}
Data from the confounder-mediator graph (Figure~\ref{fig:multiple-iv-graph}) is simulated as follows:
\begin{align*}
    W &\sim \mathcal{N}\left(0, \sigma^2_{w} \right), \\
    X &:= d W + \epsilon_x, \,\, \epsilon_x \sim \mathcal{N}\left(0, \sigma^2_x \right), \\
    M &:= \frac{\beta}{a} X + \epsilon_m, \,\, \epsilon_m \sim \mathcal{N}\left(0, \sigma^2_m \right), \\
    Y &:= a M + b W + \epsilon_y, \,\, \epsilon_y \sim \mathcal{N}\left(0, \sigma^2_y \right),
\end{align*}
where $\epsilon_x, \epsilon_m$, and $\epsilon_y$ are exogenous noise terms independent of other variables and each other.
For the experiment in Section~\ref{sec:experiments-synthetic-data}, we used $\beta=-0.32, a=0.33, b=-0.34, d=0.45, \sigma_w = 1, \sigma_x = 1, \sigma_m = 1, \sigma_y = 1$.

The moment conditions used for estimation are
\begin{align*}
    g_t(\theta) = \underbrace{\begin{bmatrix} 
        s_{t, 1} \\
        s_{t, 1} \\
        s_{t, 2} \\
        s_{t, 2} \\
        s_{t, 2} \\
        s_{t, 1} \\
        s_{t, 1} \\
        s_{t, 1} \\
        1
    \end{bmatrix}}_{=m(s_t)} \otimes \underbrace{\begin{bmatrix}
        X_t (Y_t - b W_t - \beta X_t) \\
        W_t (Y_t - b W_t - \beta X_t) \\
        X_t (M_t - \frac{\beta}{a} X_t) \\
        M_t \left(Y_t - a M_t - \frac{b d \sigma^2_w}{d^2 \sigma^2_w + \sigma^2_x} X_t \right) \\
        X_t \left(Y_t - a M_t - \frac{b d \sigma^2_w}{d^2 \sigma^2_w + \sigma^2_x} X_t\right) \\
        W_t^2 - \sigma^2_w \\
        W_t (X_t - d W) \\
        X_t^2 - (d^2 \sigma^2_w + \sigma^2_x)
    \end{bmatrix}}_{=\Tilde{g}_t(\theta)}.
\end{align*}
The parameter we estimate is $\theta = [\beta, a, b, d, \sigma^2_w, \sigma^2_x]^\top$ and $\beta = f_{\text{tar}}(\theta) = \theta_0$.

\subsection{IHDP dataset}

To generate semi-synthetic IHDP dataset, we use two covariates: birth weight (denoted by $W_1$) and whether the mother smoked (denoted by $W_2$).
The binary treatment is denoted by $X$ and the outcome is denoted by $Y$.
The corresponding causal graph is shown in Figure~\ref{fig:ihdp-graph}.
For every sample of the semi-synthetic dataset, $W_1, W_2$, and $X$ are sampled uniformly at random from the real data. The outcome $Y$ is simulated as follows:
\begin{align*}
    Y := \beta X + \alpha_1 W_1 + \alpha_2 W_2 + \epsilon_y, \,\, \epsilon_y \sim \mathcal{N}\left(0, \sigma^2_y \right),
\end{align*}
where $\epsilon_y$ is an independent exogenous noise term.
For the experiment in Section~\ref{sec:experiments-semi-synthetic-data}, we used $\beta=1,\alpha_1=1, \alpha_2=0.1, \sigma_y=1$.

The moment conditions used for estimation are
\begin{align*}
    g_t(\theta) = \underbrace{\begin{bmatrix} 
        1-s_{t, 2} \\
        1-s_{t, 2} \\
        1-s_{t, 1} \\
        1-s_{t, 1} \\
        s_{t, 3} \\
        1-s_{t, 2} \\
        1-s_{t, 1} \\
        1-s_{t, 2} \\
        1-s_{t, 1} \\
        1-s_{t, 2} \\
        1-s_{t, 1}
    \end{bmatrix}}_{=m(s_t)} \otimes \underbrace{\begin{bmatrix}
        (W_1)_t \left( \left( Y_t - \alpha_1 (W_1)_t - \beta X_t \right) - \alpha_2 d \right) \\
        X_t \left( \left( Y_t - \alpha_1 (W_1)_t - \beta X_t \right) - \alpha_2 \tau_2 \right) \\
        (W_2)_t \left( \left( Y_t - \alpha_2 (W_2)_t - \beta X_t \right) - \alpha_1 d \right) \\
        X_t \left( \left( Y_t - \alpha_2 (W_2)_t - \beta X_t \right) - \alpha_1 \tau_1 \right) \\
        (W_1)_t (W_2)_t - d \\
        X (W_1)_t - \tau_1 \\
        X (W_2)_t - \tau_2 \\
        (W_1)^2_t - \sigma^2_{w_1} \\
        (W_2)^2_t - \sigma^2_{w_2} \\
        \left( Y_t - \alpha_1 (W_1)_t - \beta X \right)^2 - \alpha_2^2 \sigma^2_{w_2} - \sigma^2_y \\
        \left( Y_t - \alpha_2 (W_2)_t - \beta X \right)^2 - \alpha_1^2 \sigma^2_{w_1} - \sigma^2_y \\
    \end{bmatrix}}_{=\Tilde{g}_t(\theta)}.
\end{align*}
The parameter we estimate is $\theta = [\beta, \alpha_1, \alpha_2, d, \tau_1, \tau_2, \sigma^2_w, \sigma^2_y]^\top$ and $\beta = f_{\text{tar}}(\theta) = \theta_0$.

\subsection{The Vietnam draft and future earnings dataset}\label{sec:apdx-vietnam-earnings-experiment}

The causal graph for this dataset corresponds to Figure~\ref{fig:disjoint-iv-graph} with a binary IV $Z$, binary treatment $X$ and continuous outcome $Y$.
In this dataset, $\{Z, X\}$ and $\{Z, Y\}$ are collected from different data sources and thus $\{ Z, X, Y \}$ are not observed simultaneously. For our experiment, we only use data from the $1951$ cohort.

In the semi-synthetic dataset, we sample $Z$ uniformly at random from the real dataset. The treatment $X$ is generated similarly to a probit model.
We first generate an intermediate variable $X^{*}$ and then use that to generate $X$ as follows:
\begin{align*}
    X^{*} &:= \alpha Z + c^{*} + \epsilon_x, \,\, \epsilon_x \sim \mathcal{N}(0, 1), \\
    X &:= \mathbf{1}(X^{*} > 0),
\end{align*}
where $\mathbf{1}$ is the indicator function.
To reduce clutter, let $\mu_z = \widehat{\P}(Z = 1) = 0.3425$, $\mu^{(1)}_x = \P(X=1|Z=1)$ and $\mu^{(0)}_x = \P(X=1|Z=0)$.
We set the parameters $\alpha$ and $c^{*}$ such that $\mu^{(1)}_x = 0.2831$ and $\mu^{(0)}_x = 0.1468$ (these values have been taken from \citep[Table~2]{angrist1990lifetime} to match the empirical distribution):
\begin{align*}
    \mu^{(0)}_x &= \P( \mathbf{1}(X^{*} > 0) |Z=0) \\
        &= \P( c^{*} + \epsilon_x > 0) ) \\
        &= \P(  \epsilon_x > -c^{*}) ) \\
        &= \P(  \epsilon_x < c^{*}) ) \\
        &= \Phi(c^{*}), \\
    \therefore \, c^{*} &= \Phi^{-1}(\mu^{(0)}_x) \\
        &= \Phi^{-1}(0.1468) \\
        &= -1.050, \\
    \mu^{(1)}_x &= \P( \mathbf{1}(X^{*} > 0) |Z=1) \\
        &= \P( \alpha + c^{*} + \epsilon_x > 0) ) \\
        &= \Phi(\alpha + c^{*}) \\
    \therefore \, \alpha &= \Phi^{-1}(\mu^{(1)}_x) - c^{*} \\
        &= \Phi^{-1}(\mu^{(1)}_x) - \Phi^{-1}(\mu^{(0)}_x) \\
        &= \Phi^{-1}(0.2831) - \Phi^{-1}(0.1468) \\
        &= 0.4766,
\end{align*}
where $\Phi$ is the cumulative distribution function of the standard normal distribution.

In the real data, we standardize the outcome $Y$ by subtracting its mean and dividing by its standard deviation and thus $\widehat{\E}[Y] = 0$ and $\widehat{\Var}(Y) = 1$.
To generate the simulated outcome $Y$, we use $Y := \beta X + \gamma + c_0 \epsilon_{x} + \epsilon_y$, where $\epsilon_y \sim \mathcal{N}(0, \sigma^2_{\epsilon_y})$. When $c_0 \neq 0$, the noise term $(c_0 \epsilon_{x} + \epsilon_y) \notindep X$. Thus $c_0$ determines the extent of the confounding between $X$ and $Y$.

We now describe how we set $\beta$ and $\gamma$. Since $E[Y] = 0$, we have
\begin{align*}
    \gamma &= -\beta \E[X] \\
        &= -\beta \left( \mu^{(0)}_x (1-\mu_z) +  \mu^{(1)}_x \mu_z \right) \\
        &= -0.1934 \beta.
\end{align*}
Using the covariance of $Y$ and $Z$, we have
\begin{align*}
    \Cov(Y, Z) &= \E[Y Z] \\
        &= \beta \E[Z X] + \gamma \E[Z] \\
        &= \beta \left( \E[Z X] - \E[X] \E[Z] \right) \\
        &= \beta \left( \E[Z \mathbf{1}(\alpha Z + c^{*} + \epsilon_x > 0)] - \E[X] \E[Z] \right) \\
        &= \beta \left( \E[ Z \E[ \mathbf{1}(\alpha Z + c^{*} + \epsilon_x > 0)|Z]] - \E[X] \E[Z] \right) \\
        &= \beta \left( \E[ Z \E[ \mathbf{1}( \epsilon_x > -(\alpha Z + c^{*}))|Z]] - \E[X] \E[Z] \right) \\
        &= \beta \left( \E[ Z \Phi(\alpha Z + c^{*})] - \E[X] \E[Z] \right) \\
        &= \beta \left(  \Phi(\alpha Z + c^{*}) \mu_z - \E[X] \E[Z] \right) \\
        &= \beta \mu_z \left(  \mu^{(1)}_x - \E[X] \right).
\end{align*}
Therefore, we set $\beta$ and $\gamma$ as
\begin{align*}
    \beta &= \frac{ \widehat{E}[YZ] }{ \mu_z \left( \mu^{(1)}_x  - \E[X] \right) } = -0.4313, \\
    \gamma &= -0.1934 \beta = 0.0834.
\end{align*}
Now we describe how we set $c_0$ and $\sigma^2_{\epsilon_y}$. For this, we use the variance of $Y$:
\begin{align}\label{eq:apdx-angrist-var-y-expression}
    \Var(Y) = 1 = \beta^2 \Var(X) + c_0^2 \sigma^2_{\epsilon_y} + 2 \beta c_0 \E[X \epsilon_x].
\end{align}
We have 
\begin{align*}
    \Var(X) &= \Var[\E(X|Z)] + \E[\Var(X|Z)] \\
        &= \Var\left(Z \mu^{(1)}_x + (1-Z)\mu^{(0)}_x \right) + \mu_z \mu^{(1)}_x (1 - \mu^{(1)}_x) + (1 - \mu_z) \mu^{(0)}_x (1 - \mu^{(0)}_x) \\
        &= \mu_z (1-\mu_z) (\mu^{(1)}_x - \mu^{(0)}_x)^2 + \mu_z \mu^{(1)}_x (1 - \mu^{(1)}_x) + (1 - \mu_z) \mu^{(0)}_x (1 - \mu^{(0)}_x) \\
        &= 0.1560,\\
    \E[X \epsilon_x] &= \E[\E[ \mathbf{1}(Z \alpha + c^* + \epsilon_x > 0) \epsilon_x | Z ]] \\
        &= \E[\E[ \mathbf{1}( \epsilon_x > -(Z \alpha + c^*)) \epsilon_x | Z ]] \\
        &\overset{(a)}{=} \E_Z\left[ \int_{-(Z \alpha + c^*)}^{\infty} x f(x) dx \right] \\
        &= \E\left[ \frac{1}{\sqrt{2 \pi}} \exp\left\{ \frac{-(Z \alpha + c^*)^2}{2} \right\} \right] \\
        &= \frac{1}{\sqrt{2 \pi}} \left[ \exp\left\{ \frac{-(c^*)^2}{2} (1-\mu_z) + \exp\left\{ \frac{-(\alpha + c^*)^2}{2} \right\} \mu_z \right\} \right] \\
        &= 0.2670,
\end{align*}
where in (a), $f(x)$ is the probability density function of the standard normal distribution.
We set $c_0 = 0.5$ and using Eq.~\ref{eq:apdx-angrist-var-y-expression}, we get $\sigma^2_{\epsilon_y} = 0.6058$.

To summarize, the data is generated as follows:
\begin{align*}
    Z &\sim \text{Bernoulli}(\mu_z), \\
    X^* &:= \alpha Z + c^* + \epsilon_x, \, \epsilon_x \sim \mathcal{N}(0, 1), \\
    X &:= \mathbf{1}(X^* > 0), \\
    Y &:= \beta X + \gamma + c_0 \epsilon_x + \epsilon_y, \, \epsilon_y \sim \mathcal{N}(0, \sigma^2_{\epsilon_y}),
\end{align*}
where $\mu_z = 0.3424, \alpha = 0.4766, c^* = -1.0502, \beta = -0.4313, \gamma = 0.0834$, and
$\sigma^2_{\epsilon_y} = 0.6058$.

The moment conditions used for estimation are
\begin{align*}
    g_t(\theta) = \underbrace{\begin{bmatrix} 
        s_{t, 1} \\
        s_{t, 1} \\
        s_{t, 2} \\
        s_{t, 2}
    \end{bmatrix}}_{=m(s_t)} \otimes \underbrace{\begin{bmatrix}
        Z_t (Y_t - \mu_1) \\
        (1 - Z_t) (Y_t - \mu_0) \\
        Z_t (X_t - \tau_1) \\
        (1 - Z_t) (X_t - \tau_0)
    \end{bmatrix}}_{=\Tilde{g}_t(\theta)}.
\end{align*}
The parameter we estimate is $\theta = [\mu_1, \mu_0, \tau_1, \tau_0]$ and the target parameter is $\beta = f_{\text{tar}}(\theta) = \frac{\mu_1 - \mu_0}{\tau_1 - \tau_0}$. 

\end{document}